\theoremstyle{plain}
\newtheorem{theorem}{Theorem}[section]
\newtheorem{lemma}[theorem]{Lemma}
\theoremstyle{definition}
\theoremstyle{remark}
\newcommand{\cut}[1]{{}}
\newcommand{\red}[1]{{\color{red}#1}} % comments
\newcommand{\bx}{{\bm{x}}}
\newcommand{\by}{{\bm{y}}}
\newcommand{\bz}{{\bm{z}}}
\newcommand{\br}{{\bm{r}}}
\newcommand{\bw}{{\bm{w}}}
\newcommand{\bepsilon}{{\bm{\epsilon}}}
\newcommand{\bxi}{{\bm{\xi}}}
\newcommand{\balpha}{{\bm{\alpha}}}
\newcommand{\bet}{{\bm{\eta}}}
\newcommand{\vA}{{\mathbf{A}}}
\newcommand{\vW}{{\mathbf{W}}}
\newcommand{\cA}{{\mathcal{A}}}
\newcommand{\cB}{{\mathcal{B}}}
\newcommand{\cD}{{\mathcal{D}}}
\newcommand{\cE}{{\mathcal{E}}}
\newcommand{\cP}{{\mathcal{P}}}
\newcommand{\cR}{{\mathcal{R}}}
\newcommand{\cT}{{\mathcal{T}}}
\newcommand{\cU}{{\mathcal{U}}}
\let\@@span\span
\def\sp@n{\@@span\omit\advance\@multicnt\m@ne}
\DeclareMathOperator*{\argmin}{arg\,min}
\newcommand{\bc}{\begin{center}}
\newcommand{\ec}{\end{center}}
\newcommand{\bdm}{\begin{displaymath}}
\newcommand{\edm}{\end{displaymath}}
\newcommand{\beq}{\begin{equation}}
\newcommand{\eeq}{\end{equation}}
\newcommand{\bfl}{\begin{flushleft}}
\newcommand{\efl}{\end{flushleft}}
\newcommand{\bt}{\begin{tabbing}}
\newcommand{\et}{\end{tabbing}}
\newcommand{\beqn}{\begin{align}}
\newcommand{\eeqn}{\end{align}}
\newcommand{\beqs}{\begin{align*}} % no equation numbers
\newcommand{\eeqs}{\end{align*}}  % no equation numbers
\icmltitlerunning{Submission and Formatting Instructions for ICML 2024}
\begin{document}

\twocolumn[
\icmltitle{
Boosting Adversarial Robustness and Generalization with Structural Prior\\
% Structure Priors Improves Robustness Generalization, Significantly
% Robust Elastic Dictionary Learning \\
% Elastic Robust Representation Learning\\
% \red{too general, dictionary learning}
% Elastic Dictionary Learning for Visual Recognition\\
% Towards Optimal Robustness via Tail-Adaptive Learning\\
% Tail-Adaptive Learning for Optimal Robustness (TAILOR)\\
% \red{remove dictionary learning, emphasize adverarial robustness instead of task; can we try MLP (simple task)? }
}

% It is OKAY to include author information, even for blind
% submissions: the style file will automatically remove it for you
% unless you've provided the [accepted] option to the icml2024
% package.

% List of affiliations: The first argument should be a (short)
% identifier you will use later to specify author affiliations
% Academic affiliations should list Department, University, City, Region, Country
% Industry affiliations should list Company, City, Region, Country

% You can specify symbols, otherwise they are numbered in order.
% Ideally, you should not use this facility. Affiliations will be numbered
% in order of appearance and this is the preferred way.
\icmlsetsymbol{equal}{*}

\begin{icmlauthorlist}
% \icmlauthor{Firstname1 Lastname1}{equal,yyy}
% \icmlauthor{Firstname2 Lastname2}{equal,yyy,comp}
\icmlauthor{Zhichao Hou}{yyy}
\icmlauthor{Weizhi Gao}{yyy}
\icmlauthor{Hamid Krim}{yyy}
\icmlauthor{Xiaorui Liu}{yyy}
% \icmlauthor{Firstname7 Lastname7}{comp}
%\icmlauthor{}{sch}
% \icmlauthor{Firstname8 Lastname8}{sch}
% \icmlauthor{Firstname8 Lastname8}{yyy,comp}
%\icmlauthor{}{sch}
%\icmlauthor{}{sch}
\end{icmlauthorlist}

\icmlaffiliation{yyy}{Department of Computer Science, North Carolina State University, Raleigh, USA}
% \icmlaffiliation{comp}{Company Name, Location, Country}
% \icmlaffiliation{sch}{School of ZZZ, Institute of WWW, Location, Country}

\icmlcorrespondingauthor{Xiaorui Liu}{xliu96@ncsu.edu}
% \icmlcorrespondingauthor{Firstname2 Lastname2}{first2.last2@www.uk}

% You may provide any keywords that you
% find helpful for describing your paper; these are used to populate
% the "keywords" metadata in the PDF but will not be shown in the document
\icmlkeywords{Machine Learning, ICML}

\vskip 0.3in
]

% this must go after the closing bracket ] following \twocolumn[ ...

% This command actually creates the footnote in the first column
% listing the affiliations and the copyright notice.
% The command takes one argument, which is text to display at the start of the footnote.
% The \icmlEqualContribution command is standard text for equal contribution.
% Remove it (just {}) if you do not need this facility.

\printAffiliationsAndNotice{}  % leave blank if no need to mention equal contribution
% \printAffiliationsAndNotice{\icmlEqualContribution} % otherwise use the standard text.

% \input{sections/abs}
\begin{abstract}

% Adversarial training methods have dominated robustness benchmarks, but they heavily reply on extensive training data, substantial computation resources, and large model backbones while suffering from strong overfitting and a noticeable performance plateau over the years. 

This work investigates a 
novel approach to boost adversarial robustness and generalization by incorporating structural prior into the design of deep learning models.
Specifically, our study surprisingly reveals that existing dictionary learning-inspired convolutional neural networks (CNNs) provide a false sense of security against adversarial attacks. To address this, we propose Elastic Dictionary Learning Networks (EDLNets), a novel ResNet architecture that significantly enhances adversarial robustness and generalization. This novel and effective approach is supported by a theoretical robustness analysis using influence functions. Moreover, extensive and reliable experiments demonstrate consistent and significant performance improvement on open robustness leaderboards such as RobustBench, surpassing state-of-the-art baselines. To the best of our knowledge, this is the first work to discover and validate that structural prior can reliably enhance deep learning robustness under strong adaptive attacks, unveiling a promising direction for future research. 
% Our implementation is available at \url{https://anonymous.4open.science/r/ElasticDL-E9DB}.
% and will be make publicly available.

% To break this stagnation, we advocate an orthogonal and promising avenue by introducing structural priors into neural network design. Specifically, we revisit dictionary learning in deep learning and identify its limitations under light-tailed noise and adaptive attacks. 
% As a countermeasure, we present a novel elastic dictionary learning framework and solve it using an efficient RISTA algorithm, which can be readily integrated into model layer.  
% Our structural prior can effectively mitigate the robust overfitting problem in adversarial training, achieving remarkable robustness and improved generalization. Furthermore, our method
% is complementary to existing adversarial training approaches  and can be integrated with them to achieve state-of-the-art robustness, significantly surpassing top baselines in RobustBench in terms of both natural and robust performance.
% \xr{need to highlight how good our model is in the abstract}

\end{abstract}

\section{Introduction}

% In the field of adversarial robustness,
In adversarial machine learning, most works have demonstrated their effectiveness in safeguarding deep neural networks by refining the geometry and landscape of parameter space, including robust training~\citep{madry2017towards,zhang2019theoretically,gowal2021improving} and regularization~\citep{cisse2017parseval,zheng2016improving}.
Especially in the visual domain, adversarial training methods
powered by generative AI ~\cite{wang2023better,gowal2021improving}
have achieved remarkable success and dominated the robustness leaderboard~\cite{croce2020robustbench}. 
However, their success increasingly relies on extensive synthetic training data crafted by generative models and increasing networks capacity, posing critical challenges to breaking through the plateau in adversarial robustness.

% \red{focus training strategy}

In fact, adversarially trained neural networks improve robustness and generalization largely by memorizing adversarial perturbations during training~\cite{madry2017towards}. While increasing the backbone model size can enhance memorization capacity, adversarial training methods often suffer from  catastrophic \emph{robust overfitting} problem~\cite{rice2020overfitting}. Existing methods aim to mitigate robust overfitting from various perspectives, including regularization~\cite{andriushchenko2020understanding,qin2019adversarial,sriramanan2020guided}, data augmentation~\cite{devries2017improved,zhang2017mixup,carmon2019unlabeled,zhai2019adversarially}, and generative modeling techniques~\cite{wang2023better,gowal2021improving}
% . For example, early stopping~\cite{rice2020overfitting} is a simple yet effective strategy to prevent overfitting during adversarial training. Regularization techniques~\cite{andriushchenko2020understanding,qin2019adversarial,sriramanan2020guided} have also proven effective by penalizing model complexity. Data augmentation approaches, such as Cutout~\cite{devries2017improved}, Mixup~\cite{zhang2017mixup}, and semi-supervised learning techniques~\cite{carmon2019unlabeled, zhai2019adversarially}, are commonly used to reduce overfitting in deep networks~\cite{schmidt2018adversarially}. Moreover, generative modeling techniques~\cite{wang2023better,gowal2021improving} have demonstrated significant success in generating substantial training datasets. 
However, all these methods incrementally contribute along a similar trajectory by refining the training strategy, making it difficult to achieve further groundbreaking advancements with the same network capacity.

% \red{plot a figure to show structural prior importance}

% Instead
In this work, we propose to explore an orthogonal direction for innovation
upon the finding that most state-of-the-art approaches largely ignore the \emph{structural prior} in deep neural networks. 
% In fact, structural prior knowledge plays a pivotal role in guiding representation learning to 
% offer better interpretability and improve the memory and computational efficiency. 
% For instance, CNNs~\cite{lecun1998gradient} and RNNs~\cite{rumelhart1986learning} incorporate structural priors into neural networks to capture spatial structures for images and temporal dependencies for sequences, respectively.
% ResNets~\cite{he2016deep} have achieved groundbreaking success by introducing the residual connection prior to avoid degradation problem in the deep neural networks. 
% \xr{I do not think these examples like CNN, RNN, ResNet represent that kind of structure prior we talk about. It is better to focus on the discussion of diction structure to avoid disatractions}
Several works~\citep{papyan2017convolutional, cazenavette2021architectural, mahdizadehaghdam2019deep, li2022revisiting} have explored building robust architectures based on sparse coding or dictionary learning priors, assuming that a signal can be sparsely represented as a linear superposition of atoms from a convolutional dictionary. This approach enables natural image patterns to be captured by the sparse code while effectively denoising corruptions. However, their effectiveness has only been validated in scenarios involving weak random corruptions and universal perturbations. Its full potential remains unexplored in the pursuit of stronger adversarial robustness and improved generalization.

% However, structural priors have yet to be fully exploited in the pursuit of stronger adversarial robustness and generalization.

% Although certain works~\citep{papyan2017convolutional, cazenavette2021architectural, mahdizadehaghdam2019deep, li2022revisiting} have attempted to build robust architectures based on sparse coding or dictionary learning prior, their effectiveness has been validated only in scenarios involving weak random corruptions or universal perturbations.
% While robust dictionary learning has exceled in conventional signal and image processing with strong interpretability and  theoretical guarantees, its potential in adversarially robust deep learning
% % , particularly against adaptively crafted
% % adversarial attacks, 
% remains underexplored.

To fill this research gap, we revisit dictionary learning in deep learning, which uncovers its inability in handling adaptive adversarial attacks, supported by both empirical evidence and theoretical reasoning.
To overcome these limitations, we propose a novel elastic dictionary learning (Elastic DL) framework that complements existing adversarial training methods to achieve superior robustness and generalization.
Our contributions are as follows:
\begin{itemize}[left=0pt]
    \item We revisit convolutional dictionary learning in deep learning, highlighting its failures under adaptive attacks, and provide theoretical insights into these limitations.
    \item  We first propose a robust dictionary learning approach via $\ell_1$-reconstruction and highlight its lower natural performance and the challenges in handling adaptive attacks. Furthermore, we introduce a novel Elastic Dictionary Learning (Elastic DL) framework to enable a better trade-off between natural and robust performance.
    \item We develop an efficient reweighted iterative  shrinkage thresholding algorithm (RISTA) to approximate the non-smooth Elastic DL objective with theoretical convergence guarantees. The algorithm can be seamlessly integrated into deep learning models as a replacement for conventional convolutional layers.
    \item Extensive experiments demonstrate that our proposed Elastic DL framework can significantly improves adversarial  robustness and generalization. 
    Notably, our Elastic DL can achieve state-of-the-art performance, significantly outperforming the previous best defense PORT~\cite{sehwag2021robust} on leaderboard across various budgets under $\ell_\infty$-norm and $\ell_2$-norm attacks.
\end{itemize}

\section{Related Works}

\textbf{Robust overfitting. } Overfitting in adversarially trained deep networks has been shown to significantly harm test robustness~\cite{rice2020overfitting}. To address the issue of severe robust overfitting, several efforts have been made from various perspectives. For instance, Dropout~\cite{srivastava2014dropout} is a widely used regularization method that randomly disables units and connections during training to mitigate overfitting. 
Regularization techniques~\cite{andriushchenko2020understanding,qin2019adversarial,sriramanan2020guided} have also proven effective in preventing overfitting by penalizing the complexity of model parameters.
Data augmentation is another common approach for reducing overfitting in deep network training~\cite{schmidt2018adversarially}, with methods including Cutout~\cite{devries2017improved}, Mixup~\cite{zhang2017mixup}, semi-supervised learning techniques~\cite{carmon2019unlabeled, zhai2019adversarially}, and generative modeling~\cite{wang2023better,gowal2021improving} being particularly notable. Additionally, early stopping~\cite{rice2020overfitting} has demonstrated great effectiveness in achieving optimal robust performance during adversarial training. However, existing methods have yet to fully realize the potential of structural priors for improving adversarial robustness and generalization.

\textbf{Dictionary learning prior in deep learning.} Dictionary learning has been well-studied and widely applied in signal and image processing~\cite{olshausen1996emergence, wright2008robust, wright2020dense, zhao2011background, yang2011robust, lu2013online, chen2013robust, jiang2015robust, yang2011robust}, based on the assumption that an input signal can be represented by a few atoms from a dictionary. Building on this foundation, \citet{papyan2017convolutional, cazenavette2021architectural, mahdizadehaghdam2019deep, li2022revisiting} successfully incorporated dictionary learning into deep learning to interpret or replace the "black-box" nature of neural networks. 
While these methods have demonstrated promising generalization and robustness against random noise and universal attacks~\cite{li2022revisiting, mahdizadehaghdam2019deep}, their practical benefits for improving robustness under adaptive attacks are yet to be thoroughly investigated.

We leave the related works about general adversarial attacks and defenses in the Appendix~\ref{sec:related_works_app} due to the space limit.

\section{Revisiting Convolutional Dictionary Learning in Deep Learning}

% \begin{figure*}[h!]
%     \centering
%     \includegraphics[width=0.8\linewidth]{figures/dictionary_learning.pdf}
%     \caption{
%       Vanilla DL vs. Robust DL. An input signal $\bx$ is assumed to be concisely encoded by a sparse code $\bz$, which extracts a few elements from a dictionary $\vA$. Vanilla DL and Robust DL involve different norms ($\|\cdot\|_2^2$ and $\|\cdot\|_1$) to penalize the reconstruction residual $\bx - \cA^*(\bz)$, under the  heavy-tailed and light-tailed noise assumptions.
%       }
%     \label{fig:dictionary_learning}
% \end{figure*}

\textbf{Notations.} 
Let the input signal be denoted as $\bxi \in \mathbb{R}^{H \times W}$ and the convolution kernel as $\balpha \in \mathbb{R}^{k \times k}$, where $k = 2k_0 + 1$. The \emph{convolution} of $\bxi$ and $\balpha$ is defined as:
\begin{equation} 
(\balpha \star \bxi)[i, j] = \sum_{p=-k_0}^{k_0} \sum_{q=-k_0}^{k_0} \bxi[i+p, j+q] \cdot \balpha[p, q], \label{eq:correlation} 
\end{equation}
while the \emph{transposed convolution} of $\bxi$ and $\balpha$ is defined as:
\begin{equation} 
(\balpha * \bxi)[i, j] = \sum_{p=-k_0}^{k_0} \sum_{q=-k_0}^{k_0} \bxi[i-p, j-q] \cdot \balpha[p, q]. \label{eq:convolution} 
\end{equation}

% Different from the conventions in signal processing, the convolution layers in deep learning models actually perform multi-channel \emph{correlation} operations that map the $C$-channel input signal to the $D$-channel output signal.

Let the $C$-channel input signal be denoted as $\bx=\left\{\bxi_1,...,\bxi_C\right\}\in\mathbb{R}^{H\times W\times C}$, and $D$-channel
the output signal as $\bz=\left\{\bet_1,...,\bet_D\right\}\in\mathbb{R}^{H\times W\times D}$. The
convolution operator $\cA(\cdot)$ is associated with kernel $\vA$ as:
\begin{equation}
\cA(\bx) = \sum_{c=1}^{C}\left(\balpha_{1c}\star\bxi_c,..., \balpha_{Dc}\star\bxi_c\right) \in \mathbb{R}^{H \times W \times D}.
\label{eq:convolution_layer}
\end{equation}
The adjoint transposed convolution operator of $\cA$, denoted as $\cA^*$, is defined as:
\begin{equation}
\cA^*(\bz) = \sum_{d=1}^{D}\left(\balpha_{d1}*\bet_d,..., \balpha_{dC}*\bet_d\right) \in \mathbb{R}^{H \times W \times C},
\label{eq:adjoint_convolution_layer}
\end{equation}
where the associated kernel $\vA$ is:
\begin{equation}
\mathbf{A} = 
\begin{pmatrix}
\balpha_{11} & \balpha_{12} & \balpha_{13} & \cdots & \balpha_{1C} \\
\balpha_{21} & \balpha_{22} & \balpha_{23} & \cdots & \balpha_{2C} \\
\vdots & \vdots & \vdots & \ddots & \vdots \\
\balpha_{D1} & \balpha_{D2} & \balpha_{D3} & \cdots & \balpha_{DC}
\end{pmatrix}
\in \mathbb{R}^{D \times C \times k \times k}.
\label{eq:convolution_kernel}
\end{equation}

Here, $H$, $W$, $C$, $D$, and $k$ represent the height, width, input dimension, output dimension, and kernel size, respectively.

\subsection{Vanilla Dictionary Learning}

% Fundamentally, DNNs inherently function as representation learning modules by transforming raw data into progressively more compact embeddings~\citep{lecun2015deep, goodfellow2016deep}. The basic transformations, including fully-connected linear layer, convolution, attention, graph convolution, are broadly built within various
% deep learning architectures to capture particular feature patterns of interest. 

% In fact, all of these basic transformations can be explicitly unified as the entanglement interaction of the filter (kernel) $\cA$  and input signal $\bx$,
% \begin{equation}
%     \bz=\cA(\bx) \in \mathbb{R}^{H \times W \times D}.
%     \label{eq:linear_mapping}
% \end{equation}
% Specifically,  when $k=1$, the convolution is reduced to fully-connected linear mapping.
%  Also, this unified view covers the linear operator over various kinds of data structures, including 2D grid data, sequence data ($W=1$), and a single vector ($H=W=1$). 

% Deep neural networks (DNNs) have been criticized as "black boxes" by explicitly formulating the  basic transformation as the entanglement interaction of the filter (kernel) $\cA$  and input signal $\bx$,
% \begin{equation}
%     \bz=\cA(\bx) \in \mathbb{R}^{H \times W \times D}.
%     \label{eq:linear_mapping}
% \end{equation}
% This formulation covers the fully-connected linear mapping ($k=1$) and various kinds of data structures, including 2D grid data, sequence data ($W=1$), and one single vector ($H=W=1$).

 % To enhance the interpretability, generalization and robustness of black-box deep neural networks (DNNs),  

 To enhance the interpretability of black-box deep neural networks (DNNs),   
\citet{papyan2017convolutional, cazenavette2021architectural, mahdizadehaghdam2019deep, li2022revisiting} introduce the structural prior of dictionary learning into the design of neural networks, assuming that the signal $\bx$ can be represented by a linear superposition of several atoms $\{\balpha_{dc}\}$ from a convolutional dictionary $\vA$:
\begin{equation}
   \bx = \cA^*(\bz) \in \mathbb{R}^{H \times W \times C}.
\end{equation} 
Then a sparse code $\bz$ is sought to extract few descriptors out of the collected dictionary for any given input $\bx$:
\begin{equation}
\min_\bz \|\bx - \cA^*(\bz)\|_2^2 + \lambda \|\bz\|_1,
\label{eq:sparse_coding}
\end{equation}
% or equivalently,
% \[
% \min_\bz \|\bz\|_1 + \|\epsilon\|_2^2 \quad \text{s.t. } \bx = \cA^*(\bz) + \epsilon.
% \]
where $\lambda$ is the hyperparameter to balance the fidelity and sparsity terms.
Although several works~\cite{cazenavette2021architectural, mahdizadehaghdam2019deep, li2022revisiting} demonstrated promising robustness of this vanilla dictionary learning (Vanilla DL) defined in Eq.~\eqref{eq:sparse_coding} against random corruptions and universal adversarial attacks, it remains unclear whether Vanilla DL can withstand stronger adaptive attacks.

% \subsection{Our Study: Is Vanilla DL Truly Robust?} 
% \subsection{Our Study: VanillaDL-based ResNets Suffer From False Security} 
\subsection{Preliminary Study: Vanilla DL-based SDNets Is Not Truly Robust} 

To validate the robustness of Vanilla DL, 
% in ~\citet{cazenavette2021architectural, mahdizadehaghdam2019deep, li2022revisiting}
we conduct a preliminary experiment on SDNet18~\cite{li2022revisiting}, a variant of ResNet18 in which all convolutional layers are replaced with convolutional sparse coding (CSC) layers based on Vanilla DL in Eq.~\eqref{eq:sparse_coding}.
% which employs the Vanilla DL framework as in Eq.~\eqref{eq:sparse_coding}
% As claimed by~\citet{li2022revisiting}, SDNets 
% which claimed to deliver excellent robustness after tuning the sparsity weight $\lambda$. 
We evaluate the SDNet18 (with fixed $\lambda$ and tuned $\lambda$) under both random impulse noise
% (5 noise levels as in Figure~\ref{fig:dist_impulse}) 
and adaptive PGD adversarial attack~\cite{madry2017towards} with budget $\frac{8}{255}$. 
As shown in Table~\ref{tab:pre_sdnet18}, 
SDNet18 improves upon ResNet18 in terms of robustness against random noise, with more significant improvement achieved by tuning the sparsity weight $\lambda$.
However, SDNet18 still experiences a sharp drop in performance under adaptive PGD attack, with accuracy approaching zero. The detailed results of the performance under various noise levels and $\lambda$ values are presented in Figure~\ref{fig:pre_sdnet18_various_lambda} in Appendix~\ref{sec:pre_study}.

% increasing the noise level and making the distribution tail heavier leads to accuracy degradation in Vanilla DL. The resilience to random noise can be improved by tuning the sparsity weight $\lambda$; however, the model with any $\lambda$ experiences a sharp drop in performance under adaptive PGD attacks, with accuracy approaching zero. The detailed results of the performance under various noise levels and $\lambda$ values are presented in Figure~\ref{fig:pre_sdnet18_various_lambda} in Appendix~\ref{sec:pre_study}.

% \xr{the description in this section may lead reviewers to think it is the study on dictionary learning instead of deep learning. You may emphasize we study the variants of ResNet driven by dictionary learning}

% \xr{I personally feel the discussion of light-tail and heavy-tail assumptions are very strong, specific, and unclear. It may be better to use a different motivation as in Dense Error Correction and AirGNN}

\begin{table}[h!]
\centering
\caption{ Preliminary study on SDNet18~\cite{li2022revisiting} under varying levels of random noise and PGD attack ($\epsilon=\frac{8}{255}$).
% While Vanilla DL performs well under light-tailed noise (lower noise levels), its performance degrades significantly under heavy-tailed noise (higher noise levels). Although adjusting $\lambda$ can mitigate the impact of heavy-tailed noise, as claimed in ~\cite{li2022revisiting}, the model 
% Vanilla DL-based SDNet18 is highly vulnerable to the adaptive PGD attack with $\epsilon=8/255$.
}

% \vspace{0.1in}
\begin{center}
\begin{sc}
\resizebox{0.48\textwidth}{!}{
\setlength{\tabcolsep}{1.8pt}
\begin{tabular}{c|ccccc|cccc}
% \toprule
\rowcolor[HTML]{C0C0C0}
\hline
\textbf{Model $\backslash$ Noise Level }  & \textbf{L-1} &\textbf{L-2}&\textbf{L-3}&\textbf{L-4}&\textbf{L-5}&\textbf{PGD} \\ \hline
ResNet18&81.44 & 57.23 & 48.32 & 32.49 & 16.98  &\red{\textbf{0.00}}\\
SDNet18 ($\lambda=0.1$)&82.39 &68.90 &59.28 &40.8 &23.83 &\red{\textbf{0.01}}\\
SDNet18 (Tune $\lambda$)&82.39 &68.90 & 59.28 &43.71 & 33.43 &\red{\textbf{0.13}}\\

\bottomrule

\end{tabular}

}

\end{sc}
\end{center}
\label{tab:pre_sdnet18}
\end{table}

% \begin{figure}[h!]
%     \centering
% \includegraphics[width=0.4\textwidth]{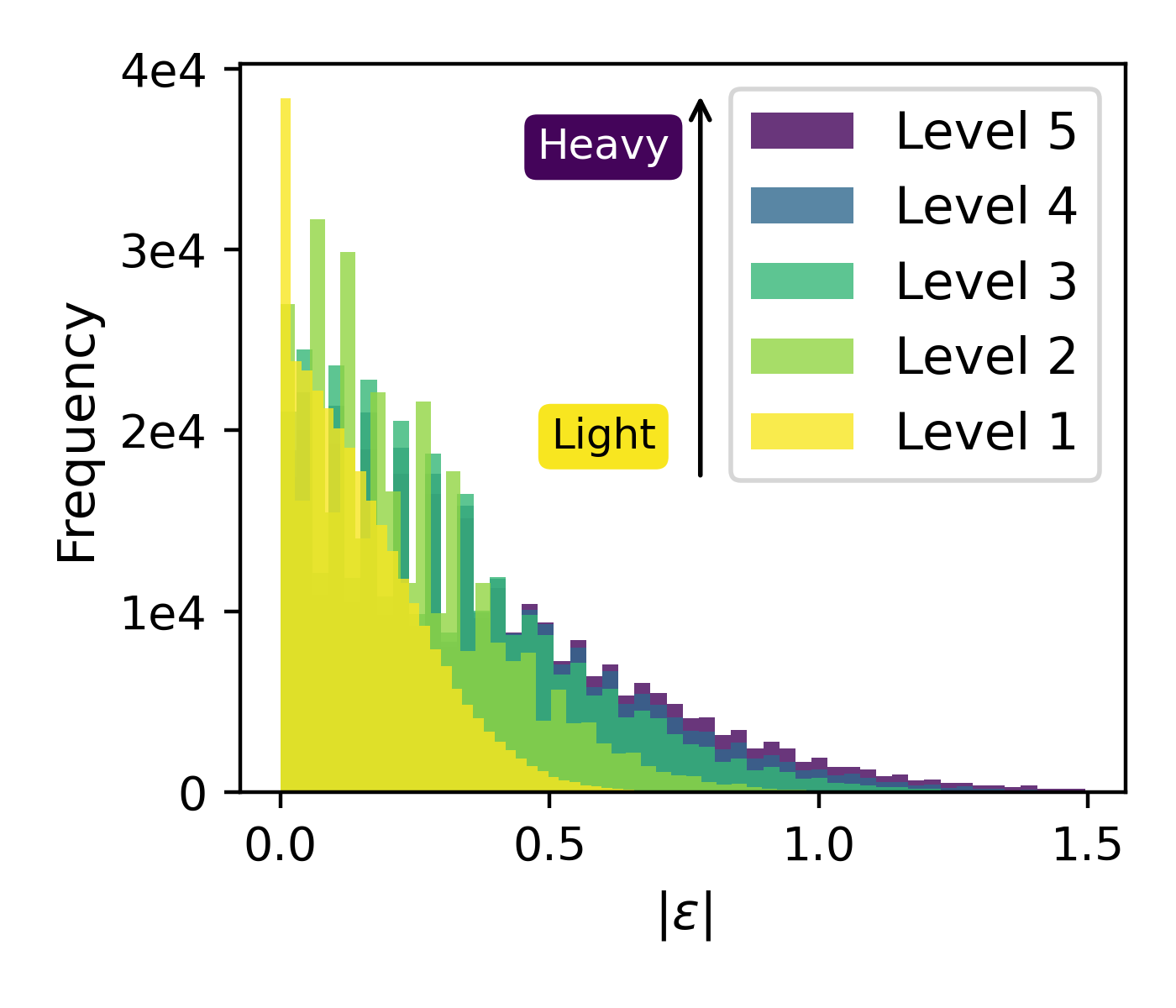} 
%     % \centering
% % \includegraphics[width=0.4\textwidth]{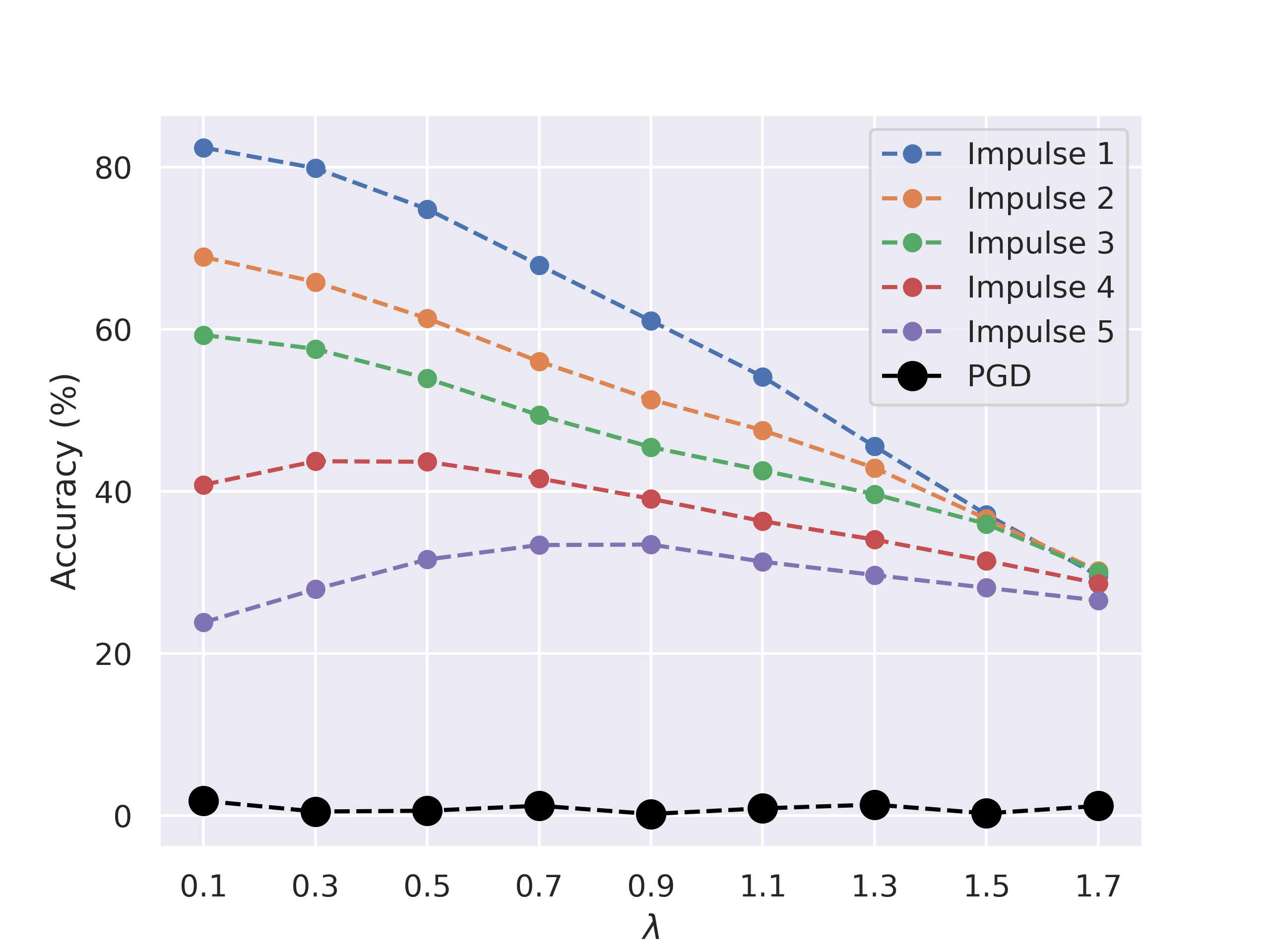}
% \caption{Distribution of Impulse noise. The distribution tail becomes heavier with high noise levels.}
%     \label{fig:dist_impulse}
% \end{figure}

 % In fact, $\ell_2$-reconstruction term in  the Eq.~\eqref{eq:sparse_coding} 

 % inherently makes a \emph{light-tailed} noise assumption, which does not hold in case of \emph{heavy-tailed} noise involving large corruptions and outliers.  

 In fact, the $\ell_2$-reconstruction term of Vanilla DL in Eq.~\eqref{eq:sparse_coding} imposes a quadratic penalty $\|\cdot\|_2^2$ on the residual $\bx-\cA^*(\bz)$, making it highly sensitive to outliers introduced by high-level noise and adaptive attacks.
 The experimental results reveal that existing Vanilla DL gives a \textit{false sense of security} under random noise and can easily compromised by adaptive attack. Thus, there still remains a huge gap to achieve truly robust dictionary learning in deep learning.

% \xr{you may need to explain what long-tail and heavy-tail mean since some reviewers may not be familiar with these concepts}

\begin{figure*}[h!]
    \centering
    \includegraphics[width=0.75\textwidth]{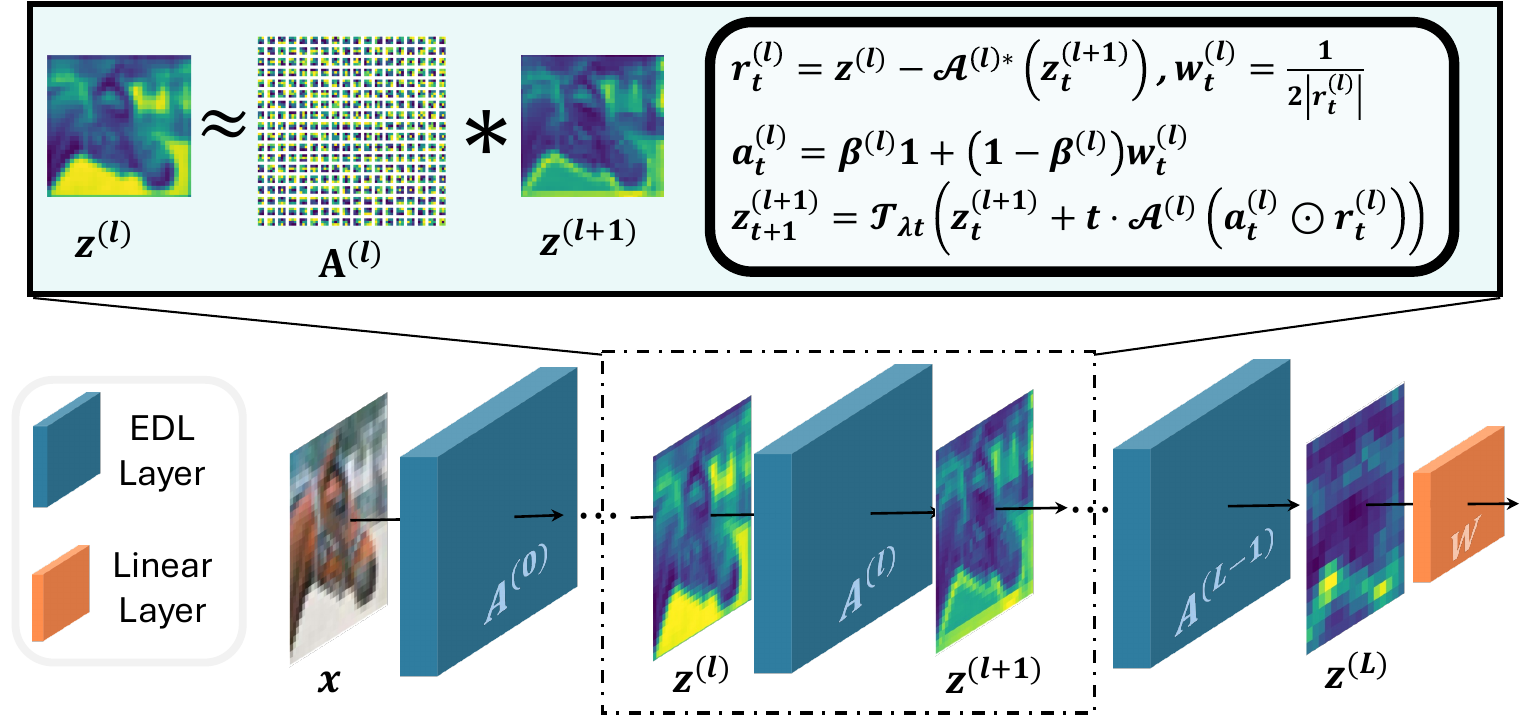} 
    \caption{Overview of Elastic DL Networks (EDLNets). EDLNets are constructed by replacing the convolutional layers in conventional backbones 
    % (e.g., ResNets~\cite{he2016deep}) 
    (e.g., ResNets)
    with EDL layers that are unrolled with the proposed efficient RISTA algorithm. Each EDL layer introduces a 
    % dictionary learning structural prior, 
    dictionary structural prior, 
    % which 
    assuming the input signal $\bz^{(l)}$ is encoded as a sparse code $\bz^{(l+1)}$ using a few atoms from diction $\mathbf{A}^{(l)}$.
  }
    \label{fig:overview_edlnets}
    \vspace{-0.1in}
\end{figure*}

% \newpage
\section{Elastic Dictionary Learning}
\label{sec:method}

To overcome the aforementioned limitation brought by the Vanilla DL models, we first propose a robust dictionary learning (Robust DL) via $\ell_1$-reconstruction to mitigate the impact of outlying values in Section~\ref{sec:rdl}. Moreover, we conduct a comprehensive experiment to demonstrate the advantages of Robust DL and highlight its pitfalls in Section~\ref{sec:pitfall}. Furthermore, to achieve a better inherent trade-off between natural and robust performance, we propose a novel elastic dictionary learning (Elastic DL) approach that enhances both  natural performance and robustness in Section~\ref{sec:edl}, followed by insightful theoretical robustness analyses in Section~\ref{sec:analysis}. The overview of our Elastic DL networks (EDLNets) can be found in Figure~\ref{fig:overview_edlnets}.

% \red{use dense error correction to motivate}
% adaptive residual why need this to denoise

\subsection{Robust Dictionary Learning via $\ell_1$-Reconstruction}
\label{sec:rdl}

As observed in the previous section, $\ell_2$-fidelity assumes light-tailed noise and performs poorly as the noise becomes increasingly heavy-tailed. 
To address the sensitivity of $\ell_2$-fidelity in Vanilla DL, we first propose a robust dictionary learning approach (Robust DL) with $\ell_1$-reconstruction to effectively mitigate the impact of outliers:
\begin{equation}
\min_\bz \|\bx - \cA^*(\bz)\|_1+\lambda\|\bz\|_1.
\label{eq:sparse_coding_l1}
\end{equation}

Despite the sophisticated design of the model architecture, the $\ell_1$-norm terms in Eq.(\ref{eq:sparse_coding_l1}) introduce non-smoothness to the objective function, making it challenging to design an effective and efficient algorithm for approximating the solution. To address this, we first propose a \emph{localized upper bound} as an alternative objective for the $\ell_1$-fidelity term $\|\bx - \cA^*(\bz)\|_1$. Subsequently, we employ the iterative shrinkage-thresholding algorithm (ISTA) to solve the $\ell_1$-sparsity.

\textbf{Localized upper bound.}
To address $\|\bx - \cA^*(\bz)\|_1$ term, we first propose a convex upper bound $\cU(\bz,\bz_*)$  as an alternative in the following Lemma~\ref{lemma:local_upper_bound}.
\begin{lemma}
\label{lemma:local_upper_bound}
Let $\cR(\bz):=\|\bx-\cA^*(\bz)\|_1$, and for any fixed point $\bz_*$,  $\cU(\bz,\bz_*)$ is defined as 
\begin{equation}
    \label{eq:upper_bound}
    \cU(\bz,\bz_*)=\|\bw^{1/2}\odot (\bx-\cA^*(\bz))\|_2^2+\cR(\bz_*),
\end{equation}
where $\bw= \frac{1}{2|\bx-\cA^*(\bz_*)|}.$
Then, for any $\bz$, the following holds:
$$(1)\; \cU(\bz,\bz_*) \geq \cR(\bz), \quad (2)\; \cU(\bz_*,\bz_*) = \cR(\bz_*) .$$
\end{lemma}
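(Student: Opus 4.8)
The plan is to reduce both claims to an elementary scalar inequality applied coordinate-wise to the residual vectors, exploiting that $\cR$ and the weighted squared norm in $\cU$ are both separable over the entries of $\bx-\cA^*(\cdot)$.

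First I would record the pointwise majorizer of the absolute value: for any real $a$ and any $b\neq 0$,
\[
|a| \;\le\; \frac{a^2}{2|b|} + \frac{|b|}{2},
\]
with equality exactly when $|a|=|b|$. Multiplying through by $2|b|>0$, this is equivalent to $0\le a^2-2|a||b|+b^2=(|a|-|b|)^2$ (equivalently, it is AM--GM applied to the two nonnegative terms on the right-hand side). Setting $\br:=\bx-\cA^*(\bz)$ and $\br_*:=\bx-\cA^*(\bz_*)$, and using $\bw=\tfrac{1}{2|\br_*|}$ entrywise, I would sum this bound over all coordinates $i$:
\[
\cR(\bz)=\sum_i|\br_i|\;\le\;\sum_i\frac{\br_i^2}{2|\br_{*,i}|}+\sum_i\frac{|\br_{*,i}|}{2}=\big\|\bw^{1/2}\odot\br\big\|_2^2+\tfrac12\cR(\bz_*),
\]
which gives claim (1) (the tight additive constant being $\tfrac12\cR(\bz_*)$).

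For claim (2) I would substitute $\bz=\bz_*$, so $\br=\br_*$; then each term of the weighted squared norm collapses to $\tfrac{\br_{*,i}^2}{2|\br_{*,i}|}=\tfrac12|\br_{*,i}|$, whence $\big\|\bw^{1/2}\odot\br_*\big\|_2^2=\tfrac12\cR(\bz_*)$ and therefore $\cU(\bz_*,\bz_*)=\tfrac12\cR(\bz_*)+\tfrac12\cR(\bz_*)=\cR(\bz_*)$ --- equivalently, the scalar inequality is tight in every coordinate because it is applied with $a=b$. I would also note, since the surrounding text calls $\cU$ a ``convex upper bound,'' that convexity of $\cU(\cdot,\bz_*)$ is immediate: it is a nonnegatively weighted sum of squares of affine functions of $\bz$ plus a constant, hence a convex quadratic.

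The one genuine subtlety, and the step I would treat carefully, is the degenerate case in which some entry of $\br_*$ vanishes, so that $\bw$ and the quotient $\br_i^2/(2|\br_{*,i}|)$ are undefined. I would dispose of this as in standard iteratively reweighted least squares: either restrict to linearization points $\bz_*$ whose residual has no zero entry, or replace $|\br_*|$ by the smoothed $\sqrt{\br_*^2+\varepsilon^2}$ (which is what one implements in practice), for which the analogous pointwise bound $|a|\le\frac{a^2+\varepsilon^2+b^2}{2\sqrt{b^2+\varepsilon^2}}$ and its near-tightness at $a=b$ follow by the same one-line algebra. Everything else is routine bookkeeping, and the convolutional structure of $\cA^*$ plays no role beyond making $\bz\mapsto\cA^*(\bz)$ affine.
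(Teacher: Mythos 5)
Your proof is correct and uses essentially the same argument as the paper: the scalar majorization $|a|\le \frac{a^2}{2|b|}+\frac{|b|}{2}$ (the paper writes it as $\sqrt{a}\le\frac{a}{2\sqrt{b}}+\frac{\sqrt{b}}{2}$) applied entrywise to the residuals and summed. One substantive point you raise deserves emphasis: the tight additive constant is indeed $\tfrac12\cR(\bz_*)$, so with $\cU$ defined exactly as in the lemma (with $+\cR(\bz_*)$) claim (1) still holds but claim (2) fails, since $\cU(\bz_*,\bz_*)=\tfrac12\cR(\bz_*)+\cR(\bz_*)=\tfrac32\cR(\bz_*)$; the paper's own proof contains the same slip, silently identifying $\tfrac12\sum_{i,j,c}|\bx[i,j,c]-\cA^*(\bz_*)[i,j,c]|$ with $\cR(\bz_*)$ in its final display. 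Your version, with the constant $\tfrac12\cR(\bz_*)$, is the corrected statement (and the constant is irrelevant to the downstream argmin in the RISTA derivation, so nothing else breaks). Your explicit treatment of the degenerate case of vanishing residual entries, via restriction or $\varepsilon$-smoothing, is also a genuine improvement over the paper, which ignores the division-by-zero issue in the definition of $\bw$.
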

\begin{proof}
    Please refer to Appendix~\ref{sec:proof_local_upper_bound}.
\end{proof}

The statement (1) indicates that $\cU(\bz,\bz_*)$ serves as an upper bound for $\cR(\bz)$, while statement (2) demonstrates that  $\cU(\bz,\bz_*)$ equals $\cR(\bz)$ at point $\bz_*$.
With fixed $\bz_*$, the alternative objective $\cU(\bz,\bz_*)$ in Eq.~\eqref{eq:upper_bound} is quadratic and can be efficiently optimized.
Therefore, instead of minimizing the non-smooth $\cR(z)$ directly, we can alternatively optimize the quadratic upper bound $\cU(\bz,\bz_t)$ with gradient descent algorithm at iteration $t$.

\textbf{RISTA algorithm.}
According to Lemma~\ref{lemma:local_upper_bound}, we can find an alternative objective for Eq.~\eqref{eq:sparse_coding_l1} at each step $t$:
\begin{equation}
\begin{aligned}
\bz_{t+1}&=\argmin_\bz  \|\bw_t^{1/2} \odot \left(\bx - \cA^*(\bz)\right)\|_2^2 + \lambda\|\bz\|_1 ,
\label{eq:alternative_objective_l1}
\end{aligned}
\end{equation}
where 
% $
% \bw_t_i = \frac{\tilde{\bw}_t_i}{\frac{1}{N}\sum_{i=1}^N \tilde{\bw}_t_i}, \quad \tilde{\bw}_t = \frac{1}{|\bx - \cA^*(\bz_t)|} \in \mathbb{R}^N, \quad N = H \times W \times C.
% $
$
\bw_t =  \frac{1}{2|\bx - \cA^*(\bz_t)|} \in \mathbb{R}^{H \times W \times C}.
$
Specifically, when $\bw_t = \mathbf{1}$, the problem reduces to the formulation in Eq.~\eqref{eq:sparse_coding}.

% \subsection{Algorithm Development }

Then, we can 
optimize the $\ell_1$-regularized problem in Eq.~\eqref{eq:alternative_objective_l1} instead of original Eq.~\eqref{eq:sparse_coding_l1} by our reweighted iterative  shrinkage thresholding algorithm (RISTA):
\begin{equation}
    \begin{aligned}
        \bz_{t+1} = \mathcal{T}_{\lambda t} \left( \bz_t + t \cdot \cA \left(
        \bw_t\odot\left(\bx - \cA^*(\bz_t)\right)
        \right)  \right),
        \label{eq:algo_iteration_l1}
    \end{aligned}
\end{equation}
where $\mathcal{T}_{\lambda t}(\bz)=\text{sign}(\bz) \left(|\bz-\lambda t|\right)_+$ represents the soft thresholding operator. The detailed derivation of Eq.~\eqref{eq:algo_iteration_l1} is provided in Appendix~\ref{sec:proof_algo_iteration}.
%%%
As a consequence of Lemma~\ref{lemma:local_upper_bound}, we can conclude the iteration $\{\bz_t\}_{t=0}^{T}$ obtained by Eq.~\eqref{eq:algo_iteration_l1} fulfill the loss descent of $\cR(\bz)$:
$$\cR(\bz_{t+1})\leq\cU(\bz_{t+1},\bz_t)\leq\cU(\bz_t, \bz_t)=\cR(\bz_t).$$
This implies 
convergence of  Eq.~\eqref{eq:sparse_coding_l1}   can be achieved by optimizing the localized upper bound Eq.~\eqref{eq:alternative_objective_l1}. 
% The comprehensive comparison of Vanilla DL and Robust DL can be found in Figure~\ref{fig:dictionary_learning_table}.

% \begin{figure}[h!]
% \centering

% \vspace{0.1in}
% \begin{center}
% % \begin{sc}
% \resizebox{0.48\textwidth}{!}{
% % \setlength{\tabcolsep}{0.8pt}
% \renewcommand{\arraystretch}{1.3}
% \rowcolors{2}{gray!20}{white}
% \begin{tabular}{|c|c|c|ccc}
% % \toprule
% % \hline
% \hline

% \rowcolor[HTML]{C0C0C0}
% Model & \textbf{Vanilla DL} & \textbf{Robust DL}\\\hline
% Formula&$\min_\bz \|\bx - \cA^*(\bz)\|_2+\lambda\|\bz\|_1$&$\min_\bz \|\bx - \cA^*(\bz)\|_1+\lambda\|\bz\|_1$\\\hline
% Assumption&Light-tailed&Heavy-tailed-tailed\\\hline
% Algorithm &ISTA&RISTA\\\hline
% Layer &\multicolumn{2}{c|}{ $ \bz_{t+1} = \mathcal{T}_{\lambda t} \left( \bz_t + t \cdot \cA \left(\br(\bz_t)\right)  \right)$}\\\hline
% Residual $\br$ & $\br(\bz_t)=\bx-\cA^*(\bz_t)$& $\br(\bz_t)=\bw_t\odot\left(\bx-\cA^*(\bz_t)\right)$ \\\hline
% \textbf{Elastic DL}&\multicolumn{2}{c|}{$\min_\bz \frac{\beta}{2}\|\bx - \cA^*(\bz)\|_2^2 + \frac{1-\beta}{2}\|\bx - \cA^*(\bz)\|_1 + \lambda\|\bz\|_1$} \\\hline
% \end{tabular}
% }

% % \end{sc}
% \end{center}

% \caption{ Vanilla DL vs. Robust DL. An input signal $\bx$ is assumed to be concisely encoded by a sparse code $\bz$, which extracts a few elements from a dictionary $\vA$. Vanilla DL and Robust DL involve different norms ($\|\cdot\|_2^2$ and $\|\cdot\|_1$) to penalize the reconstruction residual $\bx - \cA^*(\bz)$, under the  heavy-tailed and light-tailed noise assumptions. 
% }

% \label{tab:dictionary_learning_table}
% \end{figure}

\subsection{Pitfalls in $\ell_1$-based Robust DL}
\label{sec:pitfall}

\begin{table}[h!]
\vspace{-0.1in}
\centering
\caption{  Vanilla DL vs. Robust DL under random corruption (Impulse noise), PGD-$\ell_\infty$  and  PGD-$\ell_2$ with various noise levels. Robust DL demonstrates significant improvement over Vanilla DL in robustness but sacrifices natural performance as a trade-off.
}

% \vspace{0.1in}
\begin{center}
\begin{sc}
\resizebox{0.47\textwidth}{!}{
\begin{tabular}{c|c|ccccc}
% \toprule
% \hline
\hline
\rowcolor{gray!20}
\textbf{Random} & \textbf{Natural} & \textbf{L-1}&\textbf{ L-2}&\textbf{L-3}&\textbf{L-4}&\textbf{L-5} \\ 
\hline
%  \blue{Vanilla DL} &          82.39 & 68.90 & 59.28 & 40.80  & 23.83&\textbf{0.11} \\ 
%  % \hline
% \red{Robust DL}   &  81.46 & 70.54 & 62.04 & 46.40  & 31.84&\textbf{0.53} \\ 
% \hline
Vanilla DL &93.38 &84.95 & 75.83 & 67.22 & 44.01 & 24.91\\
% Robust DL&85.37 & 79.25 & 72.51 & 55.84 & 36.71\\
Robust DL&83.25 &77.71 & 71.69 & 64.9 & 51.02 & 37.78\\
\hline
% PGD&0&1/255 & 2/255 & 3/255 & 4/255 & 5/255 & 6/255 & 7/255 & 8/255\\
% Vanilla DL&93.38&59.33 & 12.64 & 1.65 & 0.33 & 0.06 & 0.02 & 0.02 & 0.01\\
% Robust DL&83.25&64.16 & 37.76 & 18.64 & 8.0 & 3.12 & 1.37 & 0.57 & 0.2\\
\rowcolor{gray!20}
\textbf{PGD-$\ell_\infty$ } & \textbf{Natural} &\textbf{1/255} & \textbf{2/255} & \textbf{3/255} & \textbf{4/255} &  \textbf{8/255}\\\hline
Vanilla DL&93.38 &59.33 & 12.64 & 1.65 & 0.33 &  0.01\\
Robust DL&83.25 &64.16 & 37.76 & 18.64 & 8.10 &  0.20\\
\hline
\rowcolor{gray!20}
\textbf{PGD-$\ell_2$}&\textbf{Natural} &\textbf{0.1}&\textbf{0.2}&\textbf{0.3}&\textbf{0.4}&\textbf{0.6}\\\hline
Vanilla DL&93.38 &63.61 & 27.86 & 9.78 & 3.31 & 0.10 \\
Robust DL&83.25 &69.56 & 50.17 & 32.58 & 20.25 & 2.79 \\
% \bottomrule
\hline

\end{tabular}

}

\end{sc}
\end{center}

\label{tab:l1_vs_l2_pre}
\vspace{-0.1in}
\end{table}

To demonstrate the advantages of Robust DL over Vanilla DL, we evaluate the models under random noise and adaptive PGD attacks with attack budgets measured in $\ell_\infty$ and $\ell_2$ norms.
From Table~\ref{tab:l1_vs_l2_pre}, we observe that $\ell_1$-based Robust DL has the following pitfalls:
\begin{itemize}[left=0.0em]
\vspace{-0.1in}
\item \textbf{Pitfall 1: Limited robustness.} In terms of robustness, Robust DL demonstrates a significant advantage over Vanilla DL under high-level random noise and adaptive adversarial attacks (PGD-$\ell_\infty$ and PGD-$\ell_2$) across various budget levels. However, both methods remain vulnerable to adversarially crafted perturbations, achieving nearly zero accuracy under adaptive attacks with imperceptible budgets ($8/255$ for PGD-$\ell_\infty$ and $0.6$ for PGD-$\ell_2$).
\item \textbf{Pitfall 2: Natural performance sacrifice.} Despite of certain improvement in robustness, Robust DL sacrifices natural performance by $10.13\%$. We conjecture that although $\ell_1$-based Robust DL effectively mitigates the impact of outlying values, it also misses important information due to the tradeoff between accuracy and robustness.
% Similar to the behavior of the median ($\ell_1$-estimator) and the mean ($\ell_2$-estimator), although $\ell_1$-based Robust DL effectively mitigates the impact of outlying values, it also misses important information by selecting the median instead of averaging over all features.

% \item Vanilla DL demonstrates superior performance compared to Robust DL under light-tailed random noise. In contrast, Robust DL shows significantly greater robustness in the presence of heavy-tailed random noise. This validates the assumptions about the underlying distributions inherent to the Vanilla DL and Robust DL models.
% \item
% In addition to heavy-tailed random noise, Robust DL outperforms  Vanilla DL under adaptive adversarial attacks (PGD-$\ell_\infty$ and PGD-$\ell_2$) across various budget levels. Nonetheless, both methods remain vulnerable to adaptive attacks with imperceptible budgets ($8/255$ for PGD-$\ell_\infty$ and 0.6 for PGD-$\ell_2$), highlighting that both approaches are limited by their reliance on light-tailed or heavy-tailed assumptions, which are easily compromised by adversarially crafted perturbations.
\end{itemize}

\subsection{Elastic Dictionary Learning}
\label{sec:edl}

From previous section, we can see that it is not trivial to design an optimal dictionary learning framework 
   with  either $\ell_2$ or $\ell_1$ reconstruction alone.
% As both $\ell_2$-based Vanilla DL and $\ell_1$-based Robust DL methods assume one single noise distribution and easily fail under adaptive attack, 
To this end, we propose an elastic dictionary learning (Elastic DL) to achieve
well-balanced trade-off between natural and robust performance:
% capable of accommodating both light-tailed and heavy-tailed noise distributions:
\begin{equation}
\min_\bz \frac{\beta}{2}\|\bx - \cA^*(\bz)\|_2^2 + \frac{1-\beta}{2}\|\bx - \cA^*(\bz)\|_1 + \lambda\|\bz\|_1,
\label{eq:robust_dictionary_learning}
\end{equation}
where $\beta$ is a layer-wise learnable parameter to adaptively balance the two fidelity terms. 
Similarly, we can generalize the RISTA algorithm from Robust DL to Elastic DL as in Appendix~\ref{sec:proof_algo_iteration}. 
The RISTA algorithm for the Elastic DL layer is presented in Algorithm~\ref{alg:irls-ista}, and an overview of the entire EDLNet architecture is shown in Figure~\ref{fig:overview_edlnets}.

\begin{algorithm}[h!]
   \caption{RISTA for Elastic DL Layer}
   \label{alg:irls-ista}
\begin{algorithmic}
   \STATE {\bfseries Input:} input signal $\bx$, kernel $\vA$, 
   % \REPEAT
   \STATE Initialize $\bz_0\leftarrow\cA(\bx)$
   \FOR{$t=1$ {\bfseries to} $T-1$}
    % \STATE $\br_t\leftarrow \bx-\cA^*(\bz_t)$
    \STATE $\bw_t\leftarrow\frac{1}{2|\bx-\cA^*(\bz_t)|}$
    % \STATE $\tilde{\bw}_t\leftarrow\frac{1}{|\bx-\cA^*(\bz_t)|}$
    % \STATE $\bw_t_i\leftarrow \frac{\tilde{\bw}_t_i}{\frac{1}{N}\sum_{i=1}^N\tilde{\bw}_t_i}$ for $i=1,...,N$ 
    \STATE $\br_t$ $\leftarrow $ $\left(\beta \mathbf{1} + (1-\beta) \bw_t\right)\odot\left(\bx-\cA^*(\bz_t)\right)$
    \STATE $\bz_{t+1}\leftarrow\mathcal{T}_{\lambda t} \left( \bz_t+t\cdot \cA\left( \br_t\right)\right)$
   \ENDFOR
   % \FOR{$t=1$ {\bfseries to} $T-1$}
   %  \STATE $\bw_t^{(l)}\leftarrow\frac{1}{\left|\bz^{(l)}-\cA^{*(l)}\left(\bz_t^{(l+1)}\right)\right|}$
   %  \STATE $\nabla$ $\leftarrow $ $\cA^{(l)}\left(\left(\beta^{(l)} \mathbf{1} + (1-\beta^{(l)}) \bw_t\right)\odot\left(\bz^{(l)}-\cA^{*(l)}(\bz^{(l+1)}_t)\right)\right)$
   %  \STATE $\bz^{(l+1)}_{t+1}\leftarrow\mathcal{T}_{\lambda t} \left( \bz_t^{(l)}+t\cdot \nabla \right)$
   % \ENDFOR
    \STATE {\bfseries Output:} sparse code $\bz_T$
   % \UNTIL{$noChange$ is $true$}
\end{algorithmic}
\end{algorithm}

\subsection{Theoretical Robustness Analysis}
\label{sec:analysis}

To gain theoretical insight, we conduct a robustness analysis on vanilla, robust, and our elastic dictionary learning using the influence function~\citep{law1986robust}, which measures the sensitivity of an operator to perturbations.
For simplicity, we consider a single-step case for our RISTA algorithm ($T=1$) and focus on the analysis of core part $\br_t$. 
We derive 
their influence functions 
in Theorem~\ref{thm:influence_function} to demonstrate their sensitivity against input perturbations, with a proof presented in Appendix~\ref{sec:proof_influence_function}.

\begin{theorem}[Robustness Analysis via Influence Function]
\label{thm:influence_function} 

The influence function is defined as the sensitivity of the estimate to a small contamination at $\Delta$:
\[
IF( \Delta ; \cP, \by) = \lim_{t\to 0^+} \frac{\cP(t\Delta + (1-t)\by) - \cP(\by)}{t}.
\]
Let the reconstruction operator is defined as $\cE(\cdot):=(\mathbf{I}-\cA^*\circ\cA)(\cdot)$, then the Vanilla, Robust, and Elastic DL operators are 
 $\cP_{\text{vanilla}}(\bx)=\cE(\bx)$, $\cP_{\text{robust}}(\bx) = \bw\odot\cE(\bx)$, and $\cP_{\text{elastic}}(\bx) =\left(\beta \mathbf{1} + (1-\beta) \bw\right)\odot\cE(\bx)$, where $\bw=\frac{1}{2(|\cE(\bx)|+\epsilon)}$ and $\epsilon$ is set to avoid zero residual values.
Then, we have:
\vspace{-0.1in}
\[
IF(\Delta; \cP_{\text{vanilla}}, \bx) = \cE(\Delta-\bx),
\] 
\[
IF(\Delta; \cP_{\text{robust}}, \bx) = 2\epsilon \bw^2 \odot \cE(\Delta-\bx),
\]
% and 
\[
IF(\Delta; \cP_{\text{elastic}}, \bx) = (\beta \mathbf{1} + 2(1-\beta)\epsilon \bw^2) \odot \cE(\Delta-\bx). \]
\end{theorem}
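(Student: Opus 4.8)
The plan is to compute each influence function directly from the definition by substituting the contamination path $\by_t := t\Delta + (1-t)\bx$, expanding to first order in $t$, and reading off the coefficient. The key observation that makes everything tractable is that $\cE$ is an affine (in fact linear) operator: $\cE(\cdot) = (\mathbf{I} - \cA^*\circ\cA)(\cdot)$ is linear, so $\cE(\by_t) = \cE(\bx) + t\,\cE(\Delta - \bx)$ exactly, with no higher-order terms. This immediately handles $\cP_{\text{vanilla}}$: $IF(\Delta;\cP_{\text{vanilla}},\bx) = \lim_{t\to 0^+}\frac{\cE(\by_t) - \cE(\bx)}{t} = \cE(\Delta - \bx)$.

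For $\cP_{\text{robust}}(\bx) = \bw\odot\cE(\bx)$ with $\bw = \frac{1}{2(|\cE(\bx)|+\epsilon)}$, I would write $\bw(\by_t) = \frac{1}{2(|\cE(\bx) + t\,\cE(\Delta-\bx)| + \epsilon)}$ and differentiate at $t=0$. The perturbation of $|\cE(\bx) + t u|$ at $t=0$, where $u := \cE(\Delta-\bx)$, is $\sign(\cE(\bx))\odot u$ entrywise (valid wherever $\cE(\bx)$ has no zero entry, which is the role of $\epsilon$ — it keeps the denominator bounded away from $0$, though strictly the sign derivative still needs $\cE(\bx)\neq 0$ entrywise; I would state this as the regime of validity). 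Then by the quotient/chain rule, $\frac{d}{dt}\bw(\by_t)\big|_{t=0} = -\frac{2\,\sign(\cE(\bx))\odot u}{(2(|\cE(\bx)|+\epsilon))^2} = -2\bw^2\odot\sign(\cE(\bx))\odot u$. Applying the product rule to $\bw(\by_t)\odot\cE(\by_t)$:
\begin{equation}
IF(\Delta;\cP_{\text{robust}},\bx) = \left(\tfrac{d}{dt}\bw(\by_t)\big|_{t=0}\right)\odot\cE(\bx) + \bw\odot u = -2\bw^2\odot\sign(\cE(\bx))\odot\cE(\bx)\odot u + \bw\odot u. \nonumber
\end{equation}
Now use the identity $\sign(\cE(\bx))\odot\cE(\bx) = |\cE(\bx)| = \frac{1}{2\bw} - \epsilon$ entrywise, so $-2\bw^2\odot|\cE(\bx)| = -2\bw^2\odot(\frac{1}{2\bw} - \epsilon) = -\bw + 2\epsilon\bw^2$. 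Adding back the $\bw\odot u$ term, the $-\bw\odot u$ cancels and we are left with $2\epsilon\bw^2\odot u = 2\epsilon\bw^2\odot\cE(\Delta-\bx)$, as claimed.

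The elastic case is then essentially free: $\cP_{\text{elastic}}(\bx) = \beta\cE(\bx) + (1-\beta)\bw\odot\cE(\bx) = \beta\cE(\bx) + (1-\beta)\cP_{\text{robust}}(\bx)$ (treating $\beta$ as a fixed constant, not a function of the input), so by linearity of the $IF$ operation in the estimator, $IF(\Delta;\cP_{\text{elastic}},\bx) = \beta\,\cE(\Delta-\bx) + (1-\beta)\cdot 2\epsilon\bw^2\odot\cE(\Delta-\bx) = (\beta\mathbf{1} + 2(1-\beta)\epsilon\bw^2)\odot\cE(\Delta-\bx)$. The main obstacle — really the only delicate point — is the non-differentiability of $|\cdot|$ and $\sign(\cdot)$ at zero: the formulas hold entrywise only where $\cE(\bx)$ has nonzero entries, and I would either assume this genericity or note that $\epsilon>0$ regularizes the magnitude term so that the one-sided limit $t\to 0^+$ still exists and the stated expression is the correct directional derivative. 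Everything else is bookkeeping with the Hadamard product, the chain rule, and the single algebraic identity $\sign(v)\,v = |v| = \frac{1}{2w} - \epsilon$.
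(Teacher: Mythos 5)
Your proposal is correct and follows essentially the same route as the paper's proof: both exploit the linearity of $\cE$ for the vanilla case, apply a product-rule decomposition of $\bw_t\odot\cE(\bx_t)-\bw\odot\cE(\bx)$ with the derivative of $\bw$ computed via the sign/chain rule, and reduce to $2\epsilon\bw^2$ by the same algebraic cancellation (the paper combines the two fractions directly where you use $|\cE(\bx)|=\tfrac{1}{2\bw}-\epsilon$, which is equivalent). Your explicit remark that the sign-derivative step requires $\cE(\bx)$ to be entrywise nonzero is a caveat the paper only makes implicitly via its assumption that $\sign(\cE(\bx_t))=\sign(\cE(\bx))$ for small $t$.
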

\begin{proof}
    Please refer to Appendix~\ref{sec:proof_influence_function}.
\end{proof}

Theorem~\ref{thm:influence_function} offers several insights into the robustness of different dictionary learning methods:
\vspace{-0.1in}
\begin{itemize}[left=0.0em] 
\item For Vanilla DL, the influence function is expressed as $\cE(\Delta - \bx)$, indicating that the sensitivity of Vanilla DL is determined by the difference between the noisy sample $\Delta$ and the clean sample $\bx$.
\item For Robust DL, the influence function is given by $2\epsilon \bw^2 \odot \cE(\Delta - \bx)$. The instances with large residuals $|\cE(\bx)|$ are treated as outliers and downweighted by $\bw$. Moreover, while a small $\epsilon$ can significantly reduce overall sensitivity, it may also suppress the impact of input variations,  leading to natural performance degradation.

% While the robustness of Robust DL is still influenced by the difference $\Delta - \bx$, the impact can be significantly mitigated by $\epsilon$. Additionally, instances with large residuals $|\cE(\bx)|$ are treated as outliers and downweighted by $\bw$. However, the influence score becomes highly sensitive to the choice of $\epsilon$, which can result in suboptimal performance in scenarios with light-tailed noise or clean data.

\item 
For Elastic DL, the layerwise parameter $\beta$ can be learned to balance Vanilla and Robust DL, adaptively achieving a better trade-off between natural and robust performance.

% provides the flexibility to balance between the robustness characteristics of Vanilla DL and Robust DL, allowing adaptive control over the trade-off.
\end{itemize}

% These analyses offer valuable insights and a theoretical explanation for the robust nature of the proposed technique.

\section{Experiment}

In this section, we comprehensively evaluate the effectiveness of our proposed 
% elastic dictionary training 
EDLNets under various experimental settings. Additionally, we provide several ablation studies to demonstrate the working mechanism of our approach.

\subsection{Experimental Setting}

\textbf{Datasets.} We conduct the experiments on several datasets including CIFAR10~\citep{krizhevsky2009learning}, CIFAR100~\citep{krizhevsky2009learning} and Tiny-ImageNet~\citep{le2015tiny}.

\textbf{Backbone architectures.} We select ResNets as the backbones, including ResNet10, ResNet18, ResNet34, and ResNet50~\citep{he2016deep}. Each of the convolutional layers in ResNets are replaced with our Elastic DL layer, resulting in the corresponding EDLNets.
% Elastic DL neural networks
We use ResNet18 as the default backbone if not being specified.

\textbf{Evaluation methods.} 
We evaluate the performance of the models against various attacks, including FGSM~\citep{goodfellow2014explaining}, PGD~\citep{madry2017towards}, C\&W~\citep{carlini2017towards}, AutoAttack~\citep{croce2020reliable}, and SparseFool~\cite{modas2019sparsefool}, covering budget measurements across $\ell_\infty$-norm, $\ell_2$-norm, and $\ell_1$-norm.
For the PGD attack, we consider both $\ell_\infty$-norm and $\ell_2$-norm, denoted as PGD-$\ell_\infty$ and PGD-$\ell_2$, respectively. SparseFool uses the $\ell_1$-norm. Unless otherwise specified, $\ell_\infty$ is used as the default measurement.
% AutoAttack is an ensemble attack consisting of three adaptive white-box attacks and one black-box attack, which is considered as a reliable evaluation method to avoid the false sense of security. 

\textbf{Baselines.} For robust overfitting mitigation, we include the baselines including regularization ($\ell_1$, $\ell_2$ regularizations and their combination), Cutout~\cite{devries2017improved}, Mixup~\cite{zhang2017mixup}, and early stopping~\cite{rice2020overfitting}. For adversarial training methods, we compare the baselines including PGD-AT~\citep{madry2017towards}, TRADES~\citep{zhang2019theoretically}, MART~\citep{wang2019improving}, SAT~\citep{huang2020self},  AWP~\citep{wu2020adversarial}, Consistency~\citep{tack2022consistency}, DYNAT~\cite{liu2024dynamic}, PORT~\cite{sehwag2021robust}, and HAT~\cite{rade2022reducing}.

\textbf{Hyperparameter setting.}
We train the baselines for 200 epochs with batch size 128, weight decay 2e-5, momentum 0.9, and an initial learning rate of 0.1 that is divided by 10 at the 100-th and 150-th epoch. For our Elastic DL, we pretrain the Vanilla DL model for 150 epochs and then fine-tune the Elastic DL model for 50 epochs.

\subsection{Adversarial Robustness \& Generalization}

First, we validate the effectiveness of our approach in mitigating overfitting. Next, we conduct a comprehensive evaluation of the robustness of adversarial training methods. Finally, we demonstrate that our approach surpasses the state-of-the-art methods on the leaderboard by incorporating structural priors.

\textbf{Robust overfitting mitigation.} To validate the effectiveness of incorporating structural priors, we compare our method with existing popular baselines in mitigating the \emph{robust overfitting} problem in  Table~\ref{tab:robust_overfitting} and Figure~\ref{fig:curve_resnet10_all_adv_acc_test}.
% Moreover, we also track the training curves during the pretraining and fine-tuning phases in Figure~\ref{fig:adv_train_curve} to gain insight into the working mechanism of our method. 
% In the pretraining phase, the Vanilla DL model is trained for 200 epochs using a step-decay learning rate schedule. Subsequently, the model is switched to Elastic DL with all $\beta=0.5$ as the initialization, followed by fine-tuning for 50 epochs.
We leave  the training curves of all the methods in Appendix~\ref{sec:training_curves_each_method} and Appendix~\ref{sec:curves_comparison_all_method} due to the space limit.   
From the results, we can make the following observations: 
\vspace{-0.1in}
\begin{itemize}[left=0.0em]
    \item From Table~\ref{tab:robust_overfitting}, we observe that our Elastic DL method
    not only achieves a significant advantage in both absolute FINAL and BEST performance but also maintains a relatively small gap (DIFF) between them, indicating that incorporating the structural prior effectively guides adversarial training to achieve better robustness and generalization.
    
    % \item In terms of FINAL and BEST performance, we observe that our Elastic DL method significantly outperforms other baselines in both natural and robust accuracy, indicating that incorporating the structural prior effectively guides adversarial training to achieve better robustness and generalization.
    % \item In terms of DIFF performance, early stopping~\cite{rice2020overfitting} effectively reduces the gap between the best and final performance. However, it falls short in achieving the absolute best performance, thereby limiting the upper bound of final natural and robust accuracy. In contrast, our Elastic DL method not only achieves a significant advantage in both absolute final and best performance but also maintains a relatively small gap between them.
    \item From Figure~\ref{fig:curve_resnet10_all_adv_acc_test}, we observe that during the 100th to 200th epochs, the Vanilla DL model exhibits a severe \emph{robust overfitting} phenomenon. By incorporating our Elastic DL structural prior at the 150th epoch, the test robustness improves substantially, highlighting the promising potential of the Elastic DL structural prior in overcoming the bottleneck of adversarial robustness and generalization.
\end{itemize}

\begin{table}[h!]
\centering
\caption{ Natural and robust performance of PGD-based adversarial training with different methods to mitigate the overfitting. BEST represents the highest test accuracy achieved during training, while FINAL is the average accuracy over the last five epochs. DIFF, the difference between BEST and FINAL, measures the ability to mitigate overfitting. 
% Each of the regularization methods listed is trained using the optimally chosen hyperparameter. 
The best performance is highlighted in \textbf{bold}, while the second-best is \underline{underlined}.
}

\label{tab:robust_overfitting}
% \vspace{0.1in}
\begin{center}
\begin{sc}
\resizebox{0.47\textwidth}{!}{
\setlength{\tabcolsep}{1.3pt}
\rowcolors{2}{gray!20}{white}
\begin{tabular}{c|ccc|ccc}
\hline
\rowcolor[HTML]{C0C0C0}
&\multicolumn{3}{c|}{\textbf{Natural Acc.} }&\multicolumn{3}{c}{\textbf{Robust Acc.} }\\\hline
\textbf{Method}&\textbf{Final}&\textbf{Best}&\textbf{Diff}&\textbf{Final}&\textbf{Best}&\textbf{Diff}\\\hline
Vanilla&78.98 & 79.90 & 0.92 & 44.90 & 48.01 & 3.11 \\
$\ell_1$ Reg.&64.84 & 65.71 & 0.87 & 40.94 & 41.97 & 1.03 \\
$\ell_2$ Reg.&78.88 & 79.39 & 0.51 & 42.73 & 48.26 & 5.53 \\
$\ell_2$ + $\ell_1$  Reg.&66.86 & 67.62 & 0.76 & 42.53 & 43.33 & 0.80\\
Cutout&75.11 & 75.58 & 0.47 & 47.12 & 48.23 & 1.11 \\
Mixup&69.64 & 72.05 & 2.41 & 46.10 & 48.53 & 2.43\\
% Semi-supervised&\\
Early Stopping&75.51 & 75.51 & \textbf{0.00} & \underline{47.69} & 47.95 & \textbf{0.26} \\
Vanilla DL& \underline{82.59} & \textbf{83.27} & 0.68 & 44.03 & \underline{50.53} & 6.50\\
Elastic DL \red{(Ours)}& \textbf{83.01} & \textbf{83.27} & \underline{0.26} & \textbf{54.94} & \textbf{55.66} & \underline{0.72}\\
\hline
\end{tabular}

}

\end{sc}
\end{center}
\vspace{-0.2in}
\end{table}

\begin{figure}[h!]
    \centering
    \includegraphics[width=0.9\linewidth]{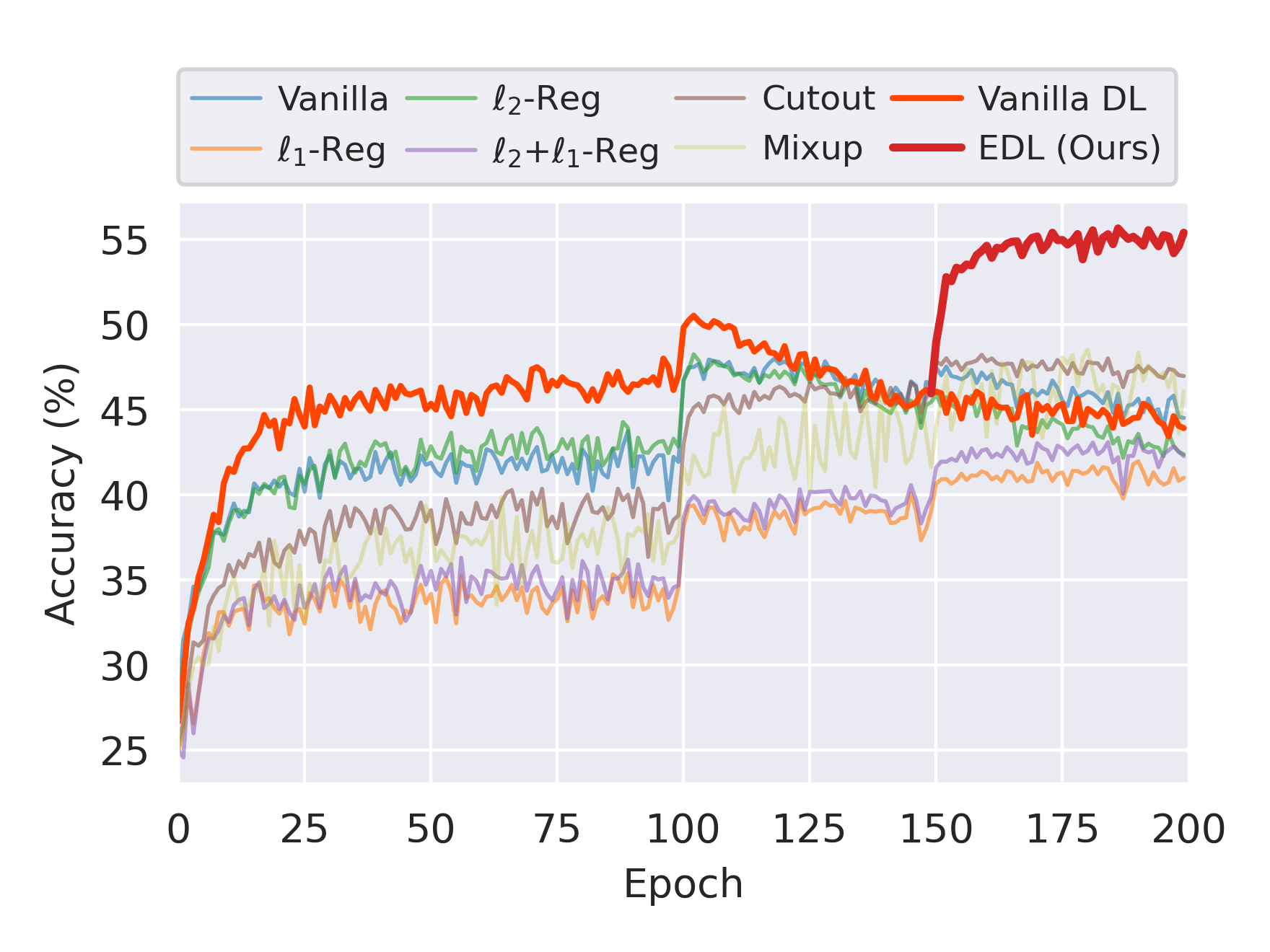}
    \vspace{-0.2in}
    \caption{Test robust accuracy during the adversarial training. we pretrain
the Vanilla DL model for 150 epochs and fine-tune the
Elastic DL model starting from 150-th epoch. Our Elastic DL method can achieve the best adversarial robustness.
    % \xr{we need to explain the experiment setting: we fine-tune the modeling using our architecture starting from 150-th epoch}
    }
    \label{fig:curve_resnet10_all_adv_acc_test}
    \vspace{-0.2in}
\end{figure}

\begin{table}[ht!]
\centering
\caption{Adversarial robsustness on CIFAR10 with ResNet18 as backbone. The best performance is highlighted in \textbf{bold}. 
}
% \vspace{-0.1in}
\begin{center}
\begin{sc}
\resizebox{0.48\textwidth}{!}{

\rowcolors{2}{gray!20}{white}
\begin{tabular}{l|c|ccccc}
\hline
\rowcolor[HTML]{C0C0C0}
\textbf{Method}
&\textbf{Clean}&\textbf{PGD}&\textbf{FGSM}&\textbf{C\&W}&\textbf{AA}\\
\hline
% PGD-AT & 80.90 & 44.35 & 58.41 & 46.72 & 42.14 \\
% TRADES & 78.92 & 48.40 & 59.60 & 47.59 & 45.44 & 50.26 \\
% TRADES-2.0&82.80&48.32&51.67&40.65&36.40\\
% TRADES-0.2&85.74&32.63&44.26&26.70&19.00\\
MART & 79.03 & 48.90 & 60.86 & 45.92 & 43.88  \\
SAT & 63.28 & 43.57 & 50.13 & 47.47 & 39.72 \\
AWP &81.20 &51.60& 55.30& 48.00& 46.90 \\
% \zc{Free-AT}\\
% PGD-AT + DL - ResNet18  &84.08&46.24&54.61&52.80&36.70&47.59
Consistency&84.37&45.19&53.84&43.75&40.88\\
DYNAT&82.34&52.25&65.96&52.19&45.10\\
% Paradigm 3  &80.43&\textbf{57.23}&\textbf{70.23}&\textbf{64.07}&\textbf{52.60}\\
% HAT&85.95&56.29&61.17&49.52&53.16\\
% HAT-200-PreActResNet18 w/ Swish&&&\\
% HAT-400-wa + DL - ResNet\\
% S2O & 40.09 & 24.05 & 29.76 & 47.00 & 44.00 & 36.20 \\

\hline
PGD-AT & 80.90 & 44.35 & 58.41 & 46.72 & 42.14 \\
+ Vanilla DL&83.28&45.64&53.88&41.22&43.70\\
+ Elastic \red{(Ours)}  &83.57&53.22&69.35&60.80&52.90 \\
\hline
TRADES-2.0&82.80&48.32&51.67&40.65&36.40\\
+ Vanilla DL&79.05 &40.64&47.12&41.49&34.90\\
% TRADES-2.0 + Elastic DL - ResNet18 (e=1e-3) &78.8&42.37&51.46&44.51&28.60\\
+ Elastic \red{(Ours)} &79.85&49.32&58.68&49.47&47.20\\
\hline
TRADES-0.2&85.74&32.63&44.26&26.70&19.00\\
+ Vanilla DL&82.55 & 25.37&44.48&30.3&15.30\\
+ Elastic \red{(Ours)} &84.75&33.61&57.86&40.68&28.10\\
% \hline
% MART-5.0 +Vanilla DL&72.11&45.4&49.7&53.01&\\
% MART-5.0 Elastic DL &\\
% \hline
% MART-0.5 +Vanilla DL&76.58&43.31&49.58&47.4&39.10&\\
% MART-0.5 + Elastic DL - ResNet18 (1e-3) &76.13&45.15&53.52&50.55&33.70\\
% MART-0.5 Elastic DL &76.13&49.02&59.66&54.67&46.83\\
% \hline
% HAT-50+Vanilla DL&83.69&54.30&58.62&50.32&40.43\\
% HAT-50+Elastic DL \red{(Ours)}&84.23&57.77&62.16&56.52&48.20\\
\hline
PORT&84.59&58.62&62.64&58.12&55.14\\
+ Vanilla DL&82.35&56.40& 60.68&56.77&54.00\\
+ Elastic \red{(Ours)} &82.76&59.00&68.54&60.92&56.30\\
\hline
HAT&85.95&56.29&61.17&49.52&53.16\\
+ Vanilla DL&86.42&57.79&62.67&51.61&54.30\\
+ Elastic \red{(Ours)} &\textbf{86.84}&\textbf{62.48}&\textbf{71.46}&\textbf{59.90}&\textbf{59.07}\\
% \hline
% HAT-400-wa +Vanilla DL\\

% \bottomrule
\hline
\end{tabular}
}
\vspace{-0.1in}
\end{sc}
\end{center}

\label{tab:cifar10-main}
\end{table}

\begin{figure*}[htbp]
    \centering
    % Subfigure (a)
    \begin{subfigure}[b]{0.33\textwidth} % Set the width of the subfigure
        \centering
        \includegraphics[width=\textwidth]{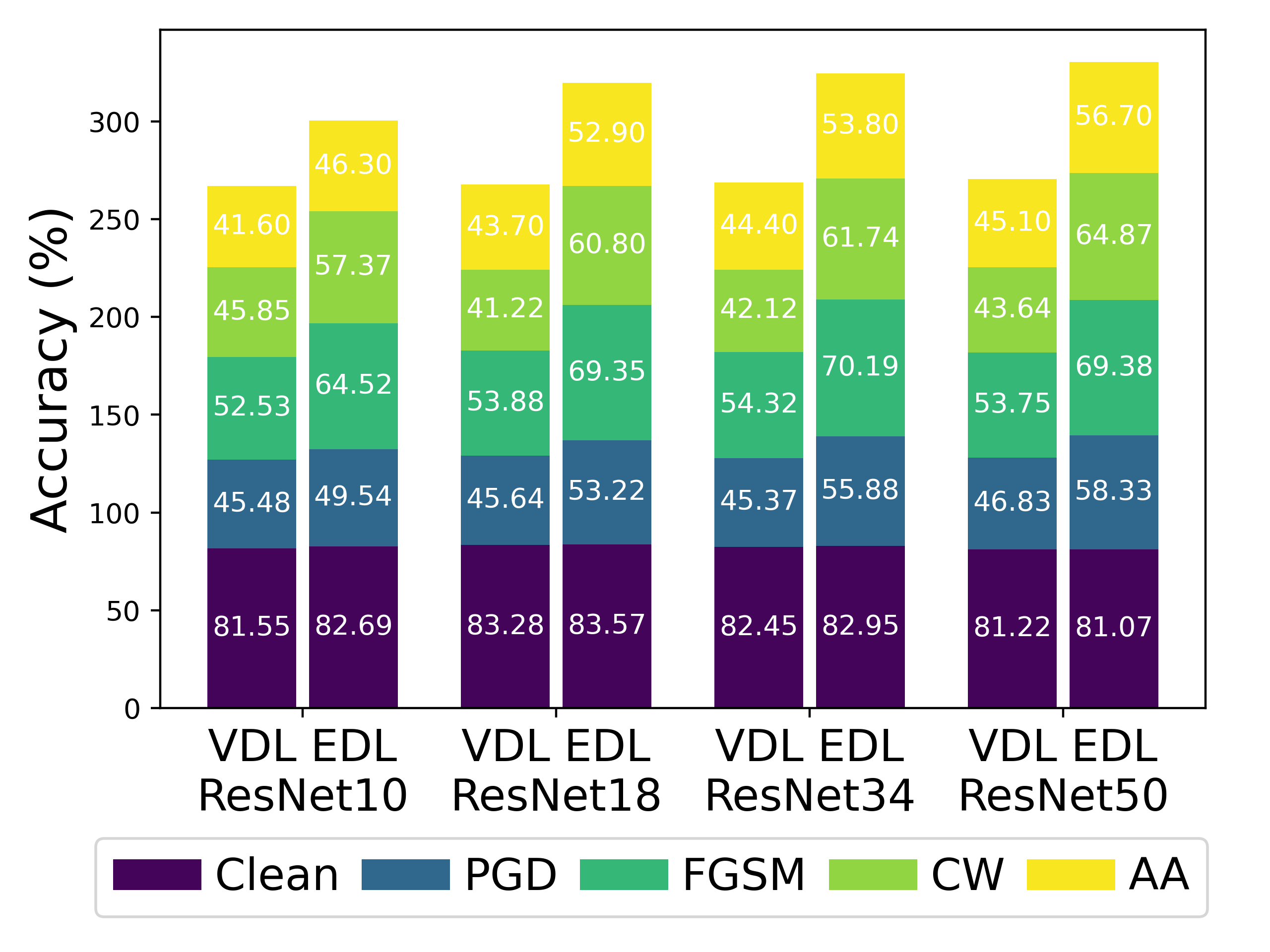} % Replace with your image path
        \caption{CIFAR10} % Subfigure caption
        \label{fig:subfig_a}
    \end{subfigure}
    \hfill
    % Subfigure (b)
    \begin{subfigure}[b]{0.33\textwidth}
        \centering
        \includegraphics[width=\textwidth]{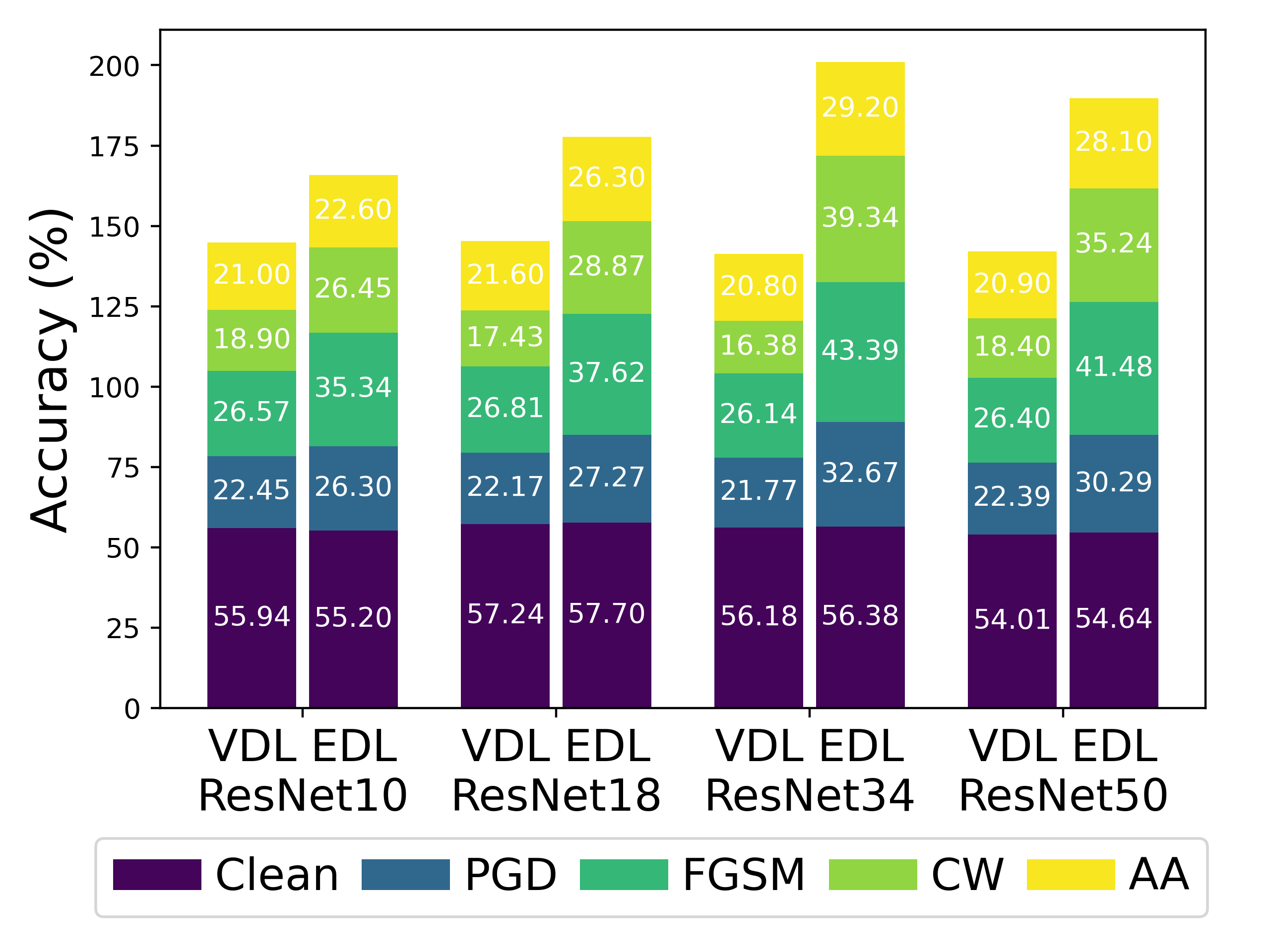} % Replace with your image path
        \caption{CIFAR100}
        \label{fig:subfig_b}
    \end{subfigure}
    \hfill
    % Subfigure (b)
    \begin{subfigure}[b]{0.33\textwidth}
        \centering
        \includegraphics[width=\textwidth]{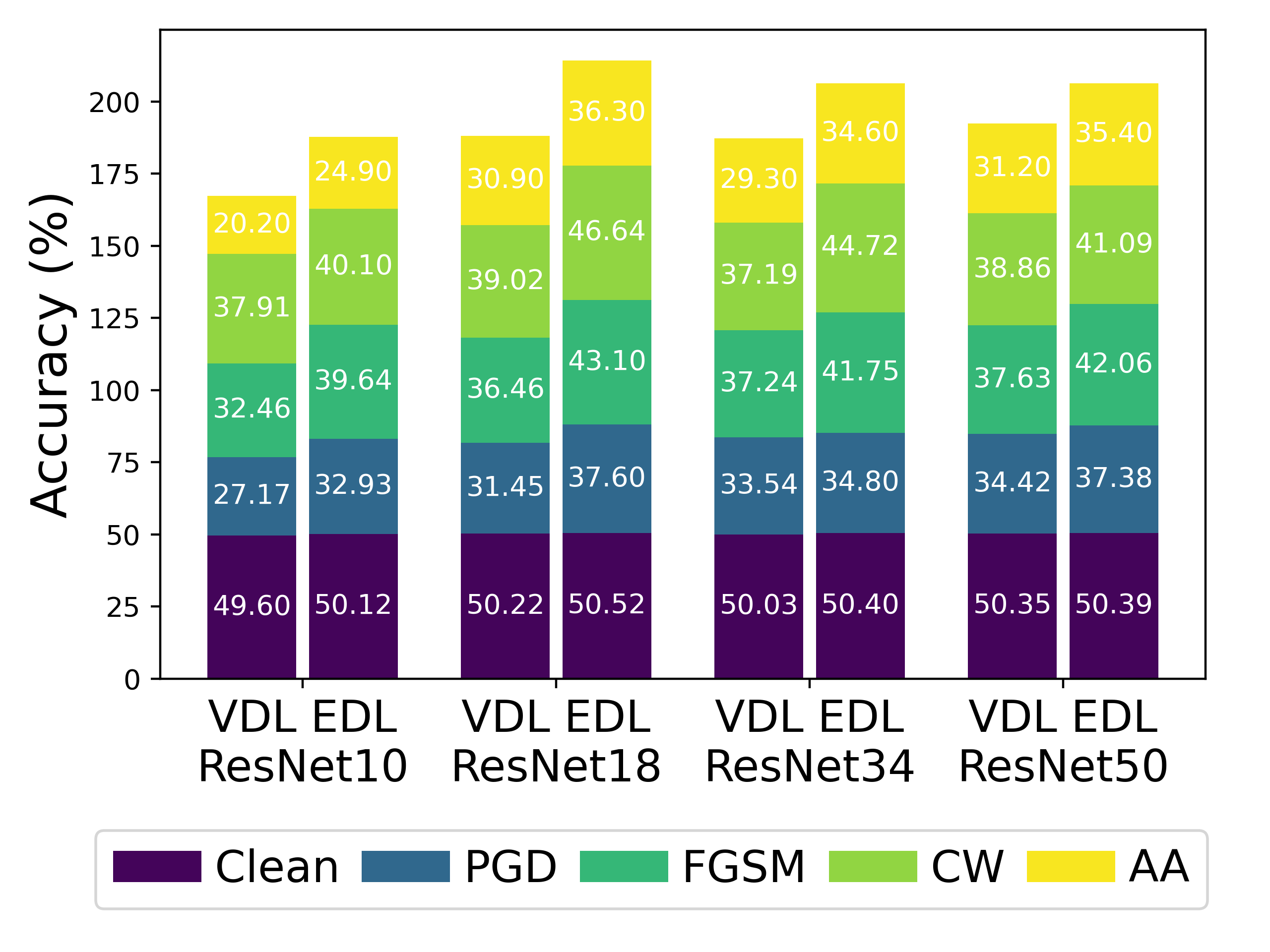} % Replace with your image path
        \caption{Tiny-ImageNet}
        \label{fig:subfig_d}
    \end{subfigure}
    % \hfill
    % % Subfigure (c)
    % \begin{subfigure}[b]{0.33\textwidth}
    %     \centering
    %     \includegraphics[width=\textwidth]{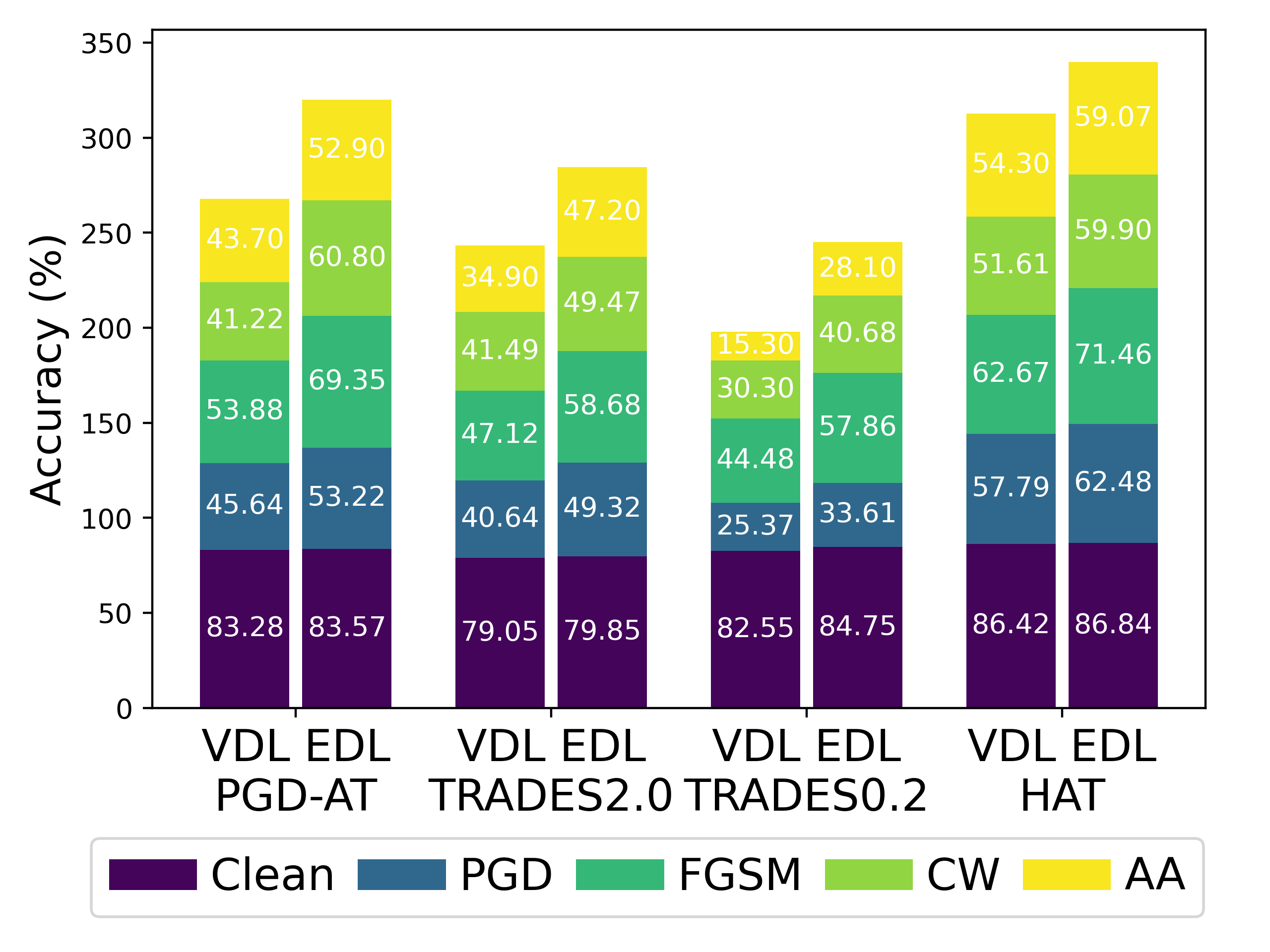} % Replace with your image path
    %     \caption{Different Adversarial Training (CIFAR10)}
    %     \label{fig:subfig_c}
    % \end{subfigure}
    %     \hfill
    % % Subfigure (b)
    % \begin{subfigure}[b]{0.66\textwidth}
    %     \centering
    %     \includegraphics[width=\textwidth]{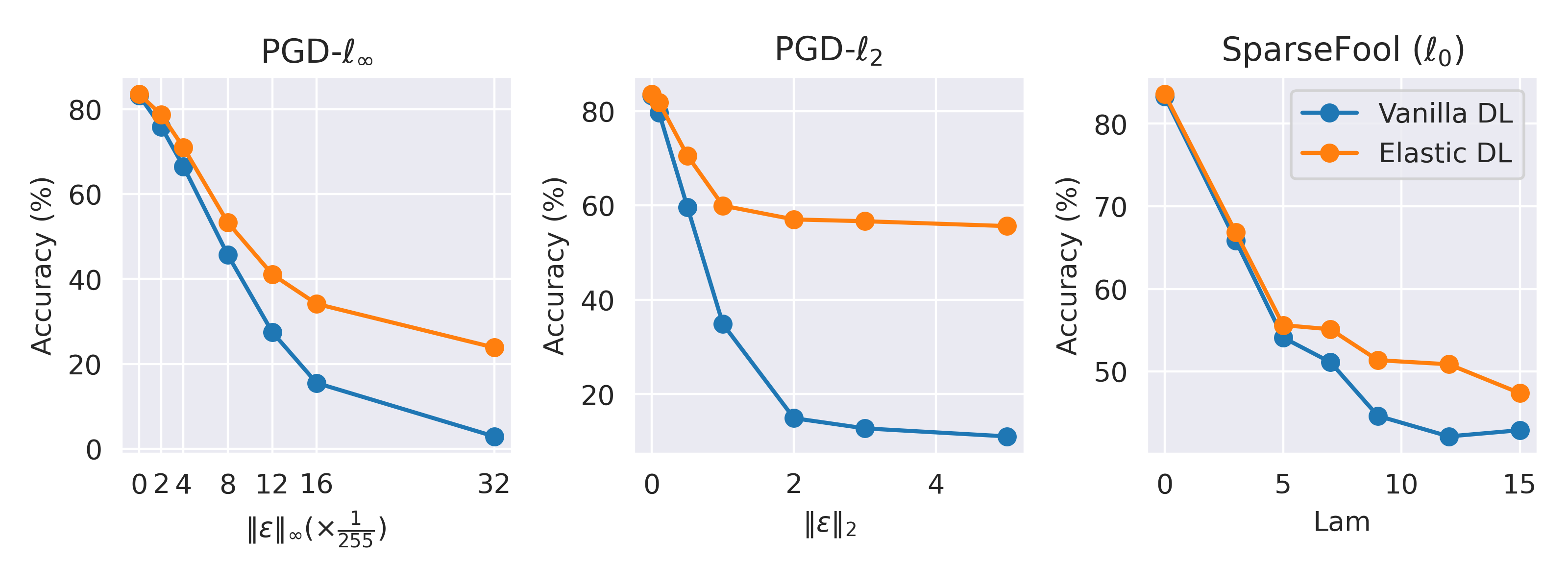} % Replace with your image path
    %     \caption{Different attack measurements (CIFAR10)}
    %     \label{fig:subfig_d}
    % \end{subfigure}
    \vspace{-0.3in}
    \caption{Adversarial robustness under various settings. Our Elastic DL outperforms Vanilla DL across various  datasets (CIFAR10 / CIFAR100 / Tiny-ImageNet), backbones (ResNet10 / ResNet18 / ResNet34 / ResNet50) and attacks (PGD / FGSM / CW / AA).
    % \xr{For all figures, we should make the font size larger (close to the text size in main paper) for readability. It is hard to read if they are such small}
    } % Main figure caption
    \label{fig:diff_datasets}
    \vspace{-0.2in}
\end{figure*}

\textbf{Adversarial training robustness.} To validate the effectiveness of our Elastic DL, we select several existing popular adversarial defenses and report the experimental
results of backbone ResNet18 under various attacks in Table~\ref{tab:cifar10-main}.  From the results we can make the following observations:
\vspace{-0.1in}
\begin{itemize}[left=0em]
    \item Our HAT + Elastic DL significantly outperforms other methods across various attacks, achieving state-of-the-art performance among all baselines.
    \item Our Elastic DL is a robust architecture that is orthogonal to existing adversarial training methods and can be combined with them to further improve robustness.
    % \item Our proposed Elastic DL consistently improve over Vanilla DL when combined with various adversarial training methods, demonstrating its effectiveness in accommodating both light-tailed and heavy-tailed noise.
\end{itemize}

\textbf{SOTA performance on leaderboard.} 
Furthermore, we validate whether incorporating our structural prior improves over state-of-the-art methods. To achieve this, we select the top-ranking methods, HAT~\cite{rade2022reducing} and PORT~\cite{sehwag2021robust}), listed on the RobustBench~\cite{croce2020robustbench} leaderboard under $\ell_\infty$-norm and $\ell_2$-norm attacks, using ResNet-18 on the CIFAR-10 dataset. As shown in Table~\ref{tab:benchmark_resnet18_cifar10_linf} ($\ell_\infty$-norm attack) and Table~\ref{tab:benchmark_resnet18_cifar10_l2} ($\ell_2$-norm attack), 
Our methods, HAT+Elastic DL and PORT+Elastic DL, consistently achieve superior performance in most cases for both natural and robust performances.

\begin{table}[h!]
\centering
\caption{Benchmarking the state-of-the-art performance under $\ell_\infty$-norm attack of ResNet18 on CIFAR10.
}

\begin{sc}
\begin{center}
% \begin{sc}
\resizebox{0.48\textwidth}{!}{
\setlength{\tabcolsep}{2.0pt}
\rowcolors{2}{gray!20}{white}
\begin{tabular}{l|c|ccc|ccc}
\bottomrule
\rowcolor[HTML]{C0C0C0}
\multicolumn{8}{c}{ \textbf{Leaderboard under $\ell_\infty$-norm Attack} }\\
\hline
&\textbf{Clean}&\multicolumn{3}{c|}{ \textbf{PGD}-$\ell_\infty$}&\multicolumn{3}{c}{ \textbf{AutoAttack}-$\ell_\infty$}\\
\hline
\textbf{ Budget}&\textbf{0}&$\mathbf{\frac{8}{255}}$&$\mathbf{\frac{16}{255}}$&$\mathbf{\frac{32}{255}}$&$\mathbf{\frac{8}{255}}$&$\mathbf{\frac{16}{255}}$&$\mathbf{\frac{32}{255}}$\\
\hline
PORT&84.59&58.62 & 27.49 & 5.79& 55.14 & 17.8 & 0.3\\
+ Vanilla DL&82.35&56.4 & 27.3 & 6.38&54.0 & 20.4 & 0.6\\
+ Elastic (\red{Ours})&82.76&59.0 & 36.53 & 22.17&56.3 & 24.6 & 1.7\\
\hline
HAT&85.95&56.29&25.82&6.09&53.16& 17.20 & 0.60\\
+ Vanilla DL&86.42&57.79&26.08 & 6.07&54.30&17.56 & 0.63\\
+ Elastic (\red{Ours})&\textbf{86.84}&\textbf{62.48}&\textbf{44.66} & \textbf{33.69}&\textbf{59.10}& \textbf{29.93}& \textbf{2.10} \\
\hline

% \bottomrule
% \hline
\end{tabular}
}
\vspace{-0.2in}
% \end{sc}
\end{center}
\end{sc}
\label{tab:benchmark_resnet18_cifar10_linf}
\end{table}

\begin{table}[h!]
\centering
\caption{Benchmarking the state-of-the-art performance under $\ell_2$-norm attack of ResNet18 on CIFAR10.
}
% \vspace{0.1in}
\begin{center}
\begin{sc}
\resizebox{0.48\textwidth}{!}{
\setlength{\tabcolsep}{2.0pt}
\rowcolors{2}{gray!20}{white}
\begin{tabular}{l|c|ccc|ccc}
\bottomrule
\rowcolor[HTML]{C0C0C0}
\multicolumn{8}{c}{ \textbf{Leaderboard under $\ell_2$-norm attack} }\\
\hline
&\textbf{Clean}&\multicolumn{3}{c|}{ \textbf{PGD}-$\ell_2$}&\multicolumn{3}{c}{ \textbf{AutoAttack}-$\ell_2$}\\
\hline
\textbf{ Budget}&\textbf{0}&\textbf{0.5}&\textbf{1.0}&\textbf{2.0}&\textbf{0.5}&\textbf{1.0}&\textbf{2.0}\\

\hline
PORT&88.82 &74.89 & 54.47 & 27.69 &\textbf{73.80}&48.1&5.90\\
+ Vanilla DL&87.34 &73.52 & 53.75 & 27.5&71.8 & 49.1 & 6.7\\
+ Elastic (\red{Ours})&87.81 &\textbf{75.56} & \textbf{60.76} & \textbf{41.44}&72.2 & \textbf{52.4} & \textbf{11.1} \\
\hline
HAT&89.92&74.68 & 47.67 & 21.38&72.9 & 40.8 & 2.2\\
+ Vanilla DL& 88.84&67.99 & 40.87 & 17.97&66.8 & 27.8 & 0.6\\
+ Elastic (\red{Ours})& \textbf{89.95}&74.62 & 51.41 & 27.05&73.2 & 44.5 & 3.2 \\
\hline

% \bottomrule
% \hline
\end{tabular}
}
\end{sc}
\end{center}
\label{tab:benchmark_resnet18_cifar10_l2}
\vspace{-0.2in}
\end{table}

\subsection{Ablation Study}

\textbf{Universality across datasets and backbones.} To validate the consistent effectiveness of our proposed methods, we conduct comprehensive abation studies on the different backbones (ResNet10, ResNet18, ResNet34, ResNet50), datasets (CIFAR10, CIFAR100, Tiny-ImageNet).
As demonstrated in the Figure~\ref{fig:diff_datasets}, Table~\ref{tab:diff_backbone_cifar10},~\ref{tab:diff_backbone_cifar100} and ~\ref{tab:diff_backbone_imagenet} in Appendix~\ref{sec:universality}, our proposed Elastic DL exhibit excellent clean performance and robustness under various attacks.

\textbf{Orthogonality to adversarial training.} Our proposed Elastic DL framework introduce structural prior into neural networks, which is orthogonal with existing adversarial training techniques. As shown in Table~\ref{tab:cifar10-main} and  Figure~\ref{fig:diff_adv_train} in Appendix~\ref{sec:orthogonality}, our Elastic DL can be combined with
different adversarial training (PGD-AT, TRADES-2.0/0.2, PORT, HAT) to further improve the performance.

% \begin{figure}[h!]
%     \centering
%     \includegraphics[width=0.4\textwidth]{figures/Adversarial_Robustness_Diff_Adv_Train.png} % Replace with your image path
%     \caption{Different Adversarial Training (CIFAR10)}
%     \label{fig:diff_adv_train}
% \end{figure}

% \begin{figure}[h!]
%     \centering
%     \includegraphics[width=0.48\textwidth]{figures/diff_norm_budget.png} % Replace with your image path
%     \caption{Different attack measurements (CIFAR10)}
%     \label{fig:diff_measure}
% \end{figure}

\textbf{Different attack measurements.} In addition to $\ell_\infty$-norm attack (PGD-$\ell_\infty$), we also validate the consistent effectiveness of our Elastic DL with $\ell_2$-norm (PGD-$\ell_2$) and $\ell_1$-norm (SparseFool) attacks in the Figure~\ref{fig:diff_measure} and  Table~\ref{tab:diff_budget_norm} in Appendix~\ref{sec:diff_measurement}.

\begin{figure*}[h!]
    \begin{minipage}[b]{0.77\textwidth}
    \includegraphics[width=1.0\linewidth]{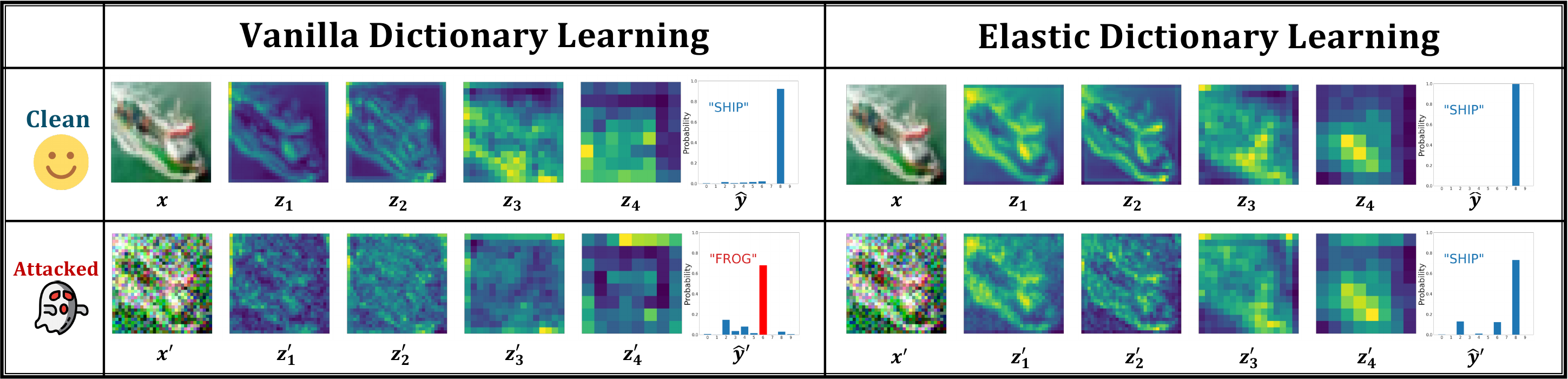}
    \caption{Hidden embedding visualization under clean and attacked scenarios. The difference between clean and attacked embeddings in Elastic DL is smaller compared to Vanilla DL, with this effect becoming more significant in deeper layers. Consequently, while an adversarial attack alters the Vanilla DL output from "\emph{SHIP}" to "\emph{FROG}", Elastic DL successfully preserves the correct prediction. }
    \label{fig:hidden_visual}
    \end{minipage}
    \vspace{-0.1in}
    \hfill
    \begin{minipage}[b]{0.22\textwidth}
    \centering
    \includegraphics[width=0.99\linewidth]{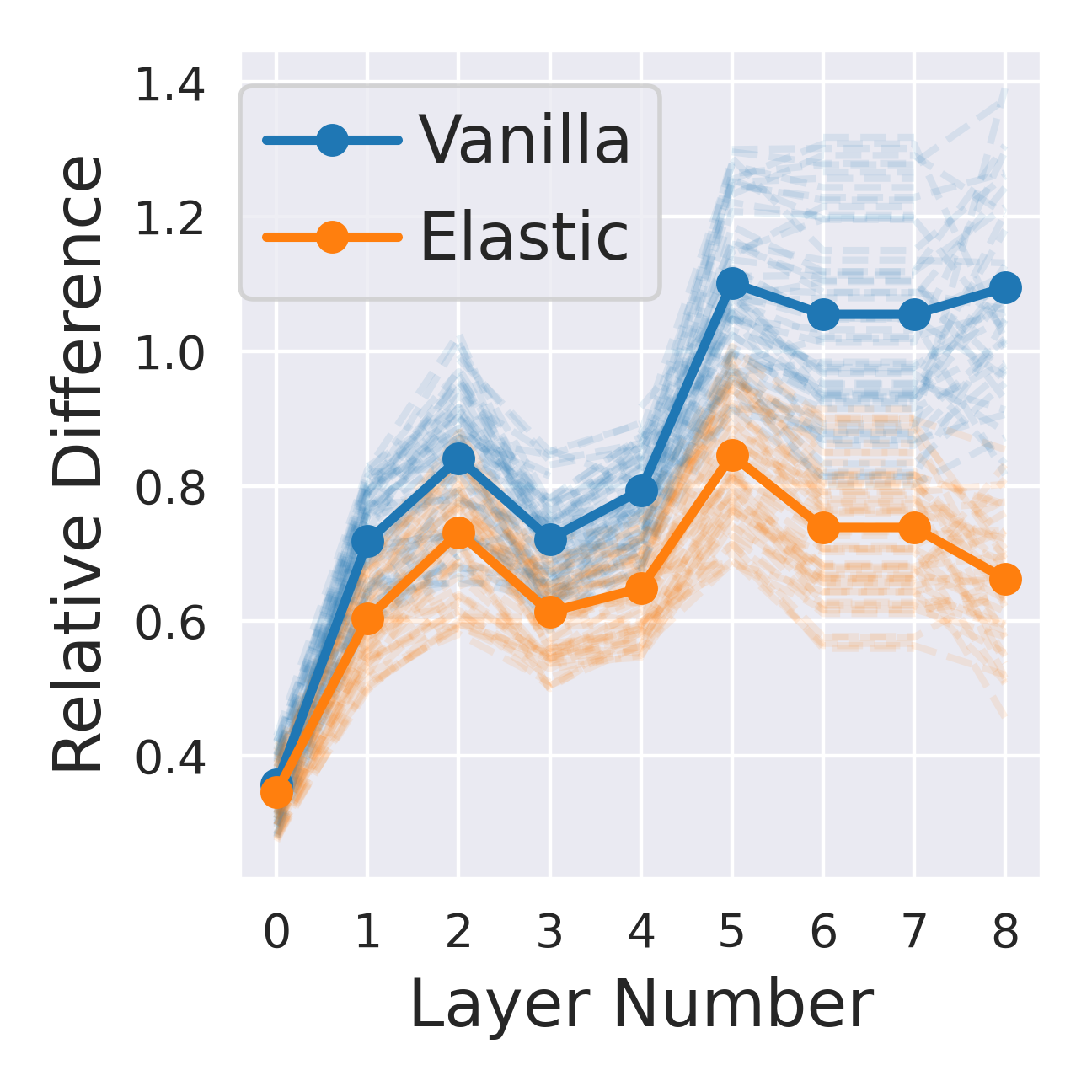}
    \vspace{-2.5em}
    \caption{Embedding difference. 
    Our Elastic DL shows smaller embedding difference than Vanilla DL.
    }
    \label{fig:embed_diff}
    \end{minipage}
    % \vspace{-0.1in}
\end{figure*}

\begin{figure*}[h!]
\centering
    \begin{minipage}[b]{0.32\textwidth}
\centering
\includegraphics[width=1.05\linewidth]{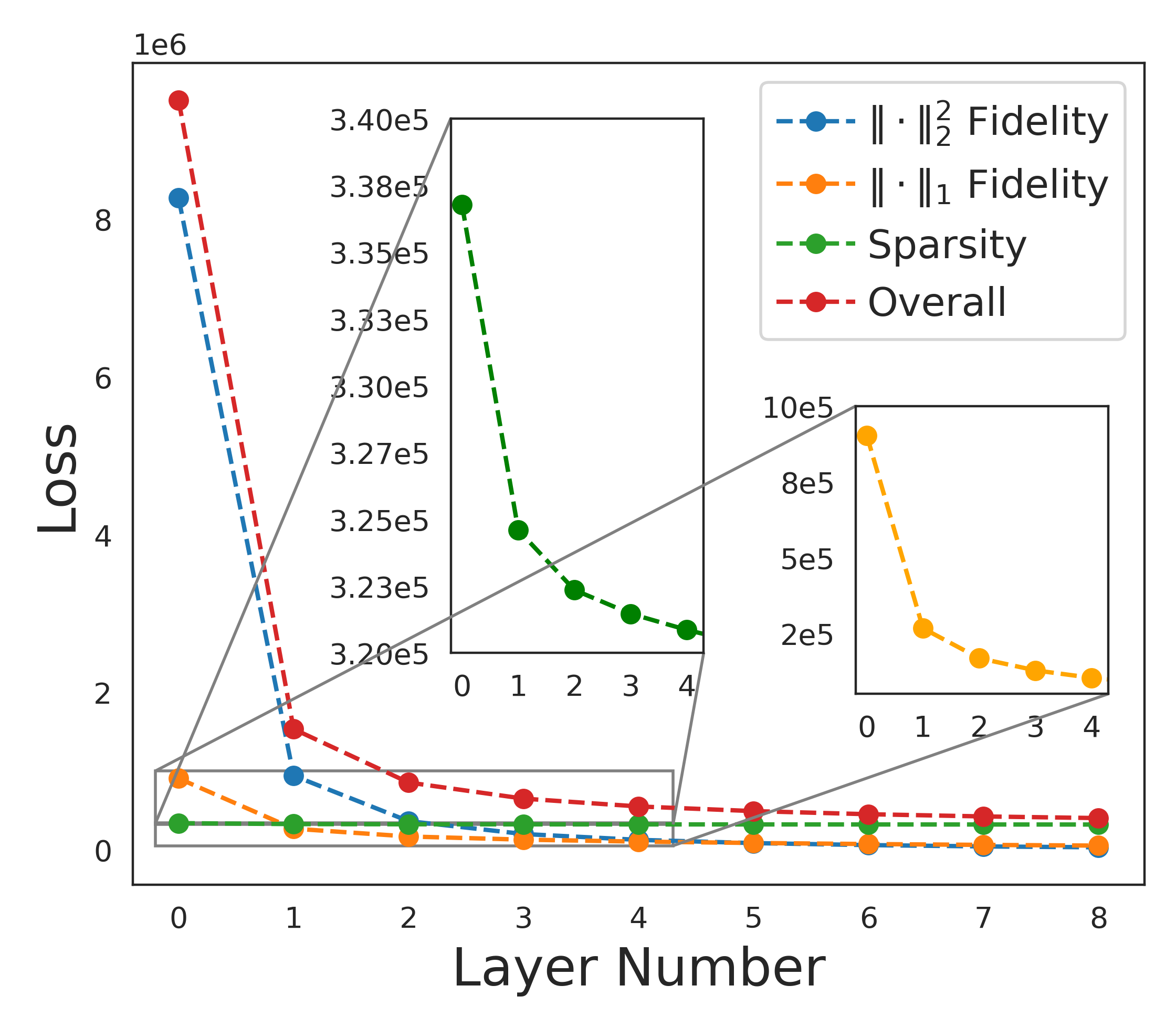}
\vspace{-0.3in}
\caption{Algorithm convergence. 
RISTA algorithm 
% effectively reduces the dictionary learning objective and 
achieves fast convergence within just three steps.
}
\label{fig:algo_convergence}
    \end{minipage}
    \hfill
    \begin{minipage}[b]{0.32\textwidth}
\centering
\includegraphics[width=1.05\linewidth]{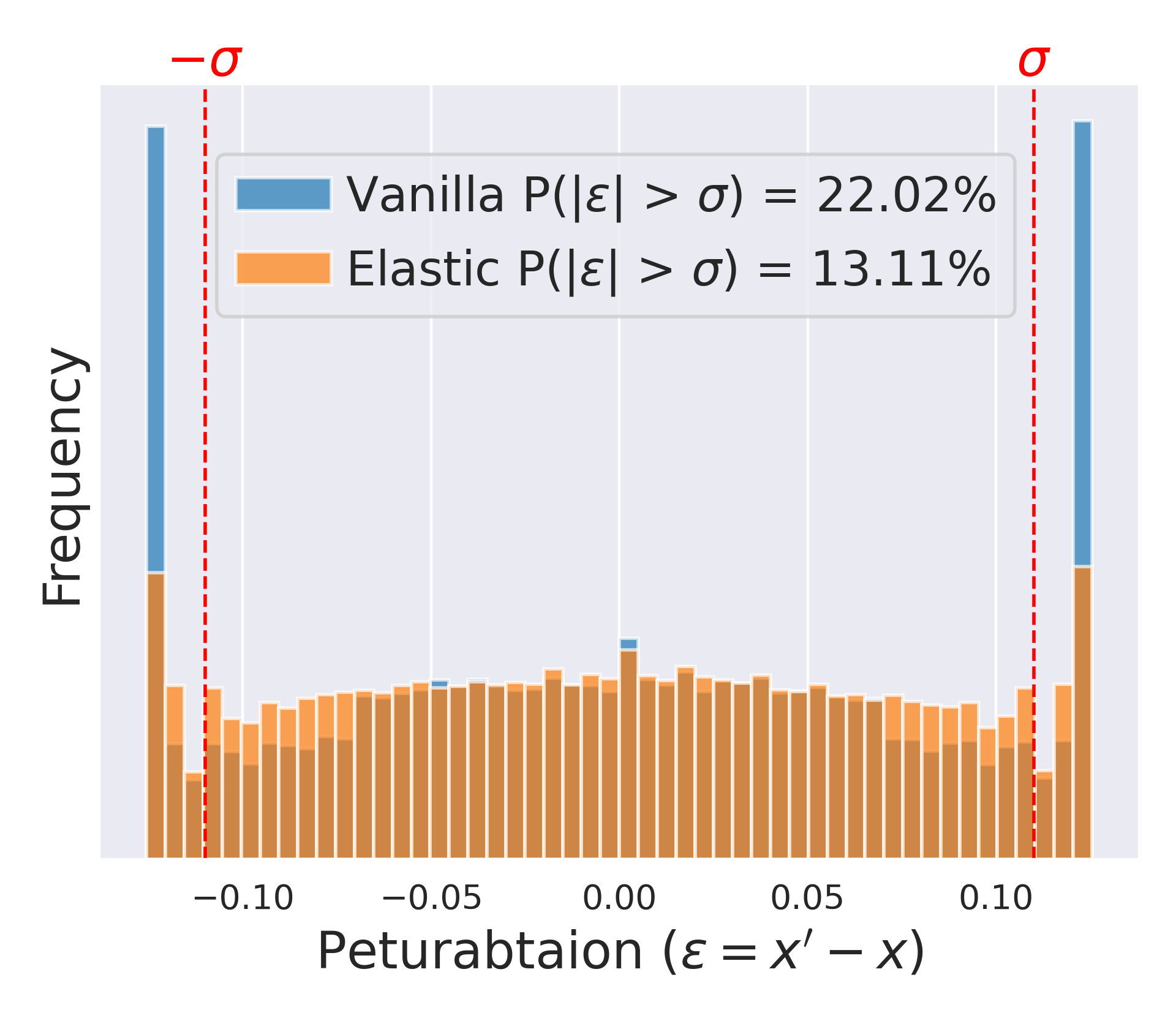}
\vspace{-0.35in}
\caption{Attack behaviors. 
The attacker tends to attack Vanilla DL model by  introducing outlying values. 
% with heavy-tailed noise, which be effectively mitigated by the Elastic DL model. 
}
\label{fig:perturbation}
    \end{minipage}
    \hfill
    \begin{minipage}[b]{0.32\textwidth}
    \centering
    \includegraphics[width=1.05\linewidth]{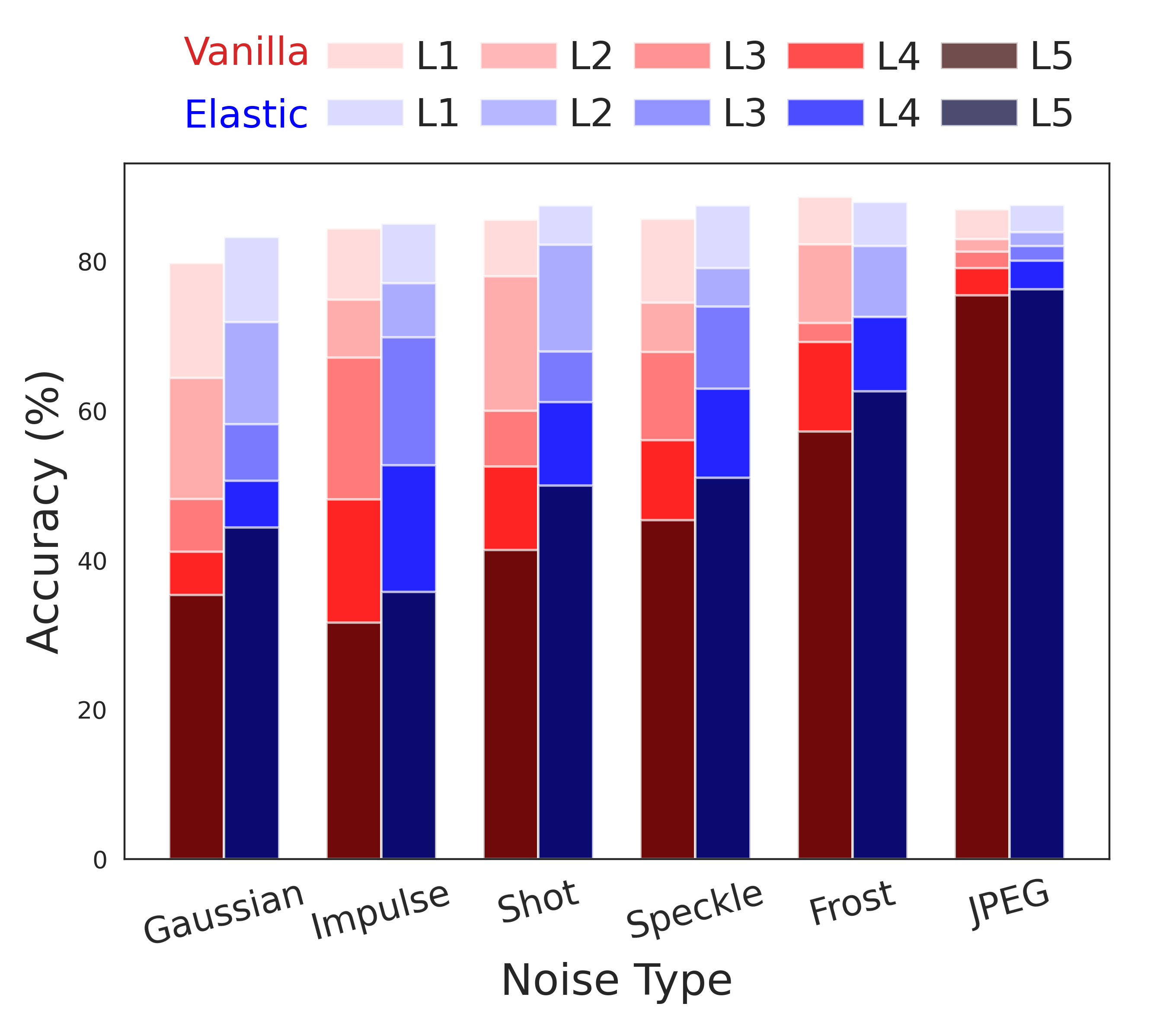}
    \vspace{-0.3in}
    \caption{Out-of-distribution robustness.
    % Beyond in-distribution robustness, 
    Our Elastic DL also demonstrates excellent out-of-distribution robustness.
    }
    \label{fig:ood_robustness}
    \end{minipage}
\end{figure*}

\textbf{Hidden embedding visualization.}
We also conduct visualization analyses on the hidden embedding to obtain better insight into the effectiveness of our proposed Elastic DL. We begin by quantifying the relative difference between clean embeddings ($\bx$ or $\bz_i$) and attacked embeddings ($\bx'$ or $\bz'_i$) across all layers, as shown in Figure~\ref{fig:embed_diff}. Additionally, we visualize one instance in Figure~\ref{fig:hidden_visual}, with more examples provided in Appendix~\ref{sec:hidden_embedding_all}.
The results in Figure~\ref{fig:embed_diff} show that Elastic DL has smaller embedding difference across  layers, indicating that our proposed Elastic DL architecture indeed mitigates the impact of the adversarial perturbation. Moreover, 
as demonstrated in the example 
in Figure~\ref{fig:hidden_visual}, the presence of adversarial perturbations can disrupt the hidden embedding patterns, leading to incorrect predictions in the case of Vanilla DL. 
In contrast, our Elastic DL appears to lessen the effects of such perturbations and maintain predicting groundtruth label. From the figures, we can also clearly tell that the difference between clean attacked embeddings of Vanilla DL is much more significant than  in  Elastic DL.

\textbf{Convergence.} 
To validate the effectiveness of our RISTA iterations, we plot the loss descent curves of overall objective Eq.\eqref{eq:robust_dictionary_learning} along with the individual terms ($\|\bx - \cA^*(\bz)\|_2^2, \|\bx - \cA^*(\bz)\|_1$ and $ \|\bz\|_1$) in Figure~\ref{fig:algo_convergence}, which shows 
% It can be observed
that RISTA converges rapidly within first three steps.

% \newpage
\textbf{Attack behaviors.}
To investigate the attack behaviors, we apply the PGD attack to both models and visualize the resulting perturbations in Figure~\ref{fig:perturbation}. It can be observed that, in the Vanilla DL model, the adversarial attack introduces substantial outlying noise, which can be largely mitigated by our Elastic DL method.
% violating the light-tailed distribution assumption. This observation also explains the inferior performance of Vanilla DL compared to our Elastic DL.

% \begin{figure*}[t!]
%     \centering
%     \includegraphics[width=1.0\linewidth]{figures/hidden_visual.pdf}
%     \caption{Hidden embedding visualization of Vanilla DL and Elastic DL under clean and attacked scenarios. The difference between clean and attacked embeddings in Elastic DL is smaller compared to Vanilla DL, with this effect becoming more significant in deeper layers. Consequently, while an adversarial attack alters the Vanilla DL output from "\emph{SHIP}" to "\emph{FROG}", Elastic DL successfully preserves the correct prediction. }
%     \label{fig:hidden_visual}
% \end{figure*}

% \begin{figure}[h!]
%     \centering
%     \includegraphics[width=1.0\linewidth]{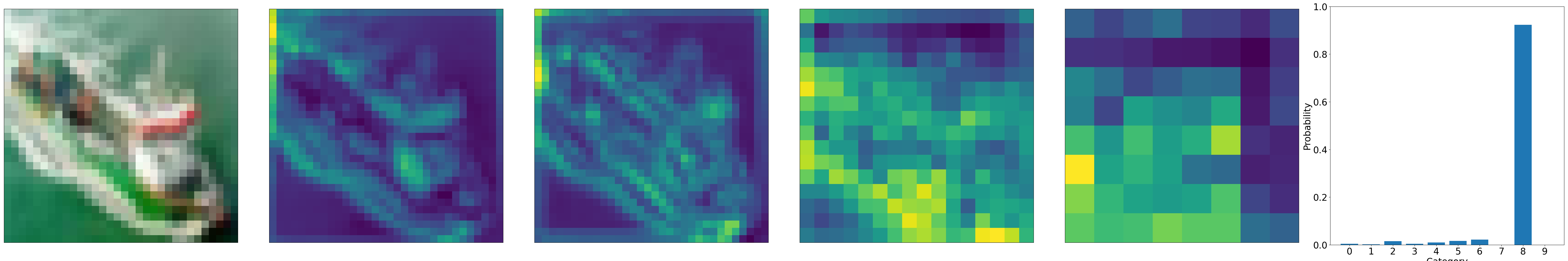}
%     \includegraphics[width=1.0\linewidth]{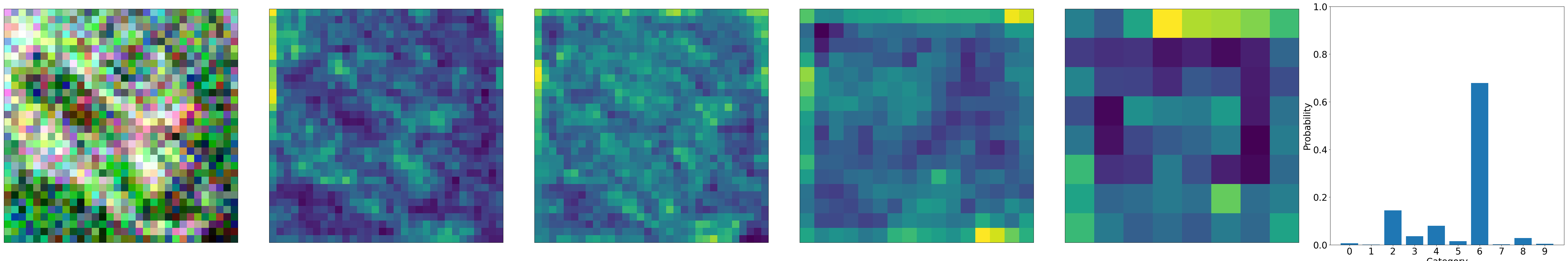}
%     % \includegraphics[width=1.0\linewidth]{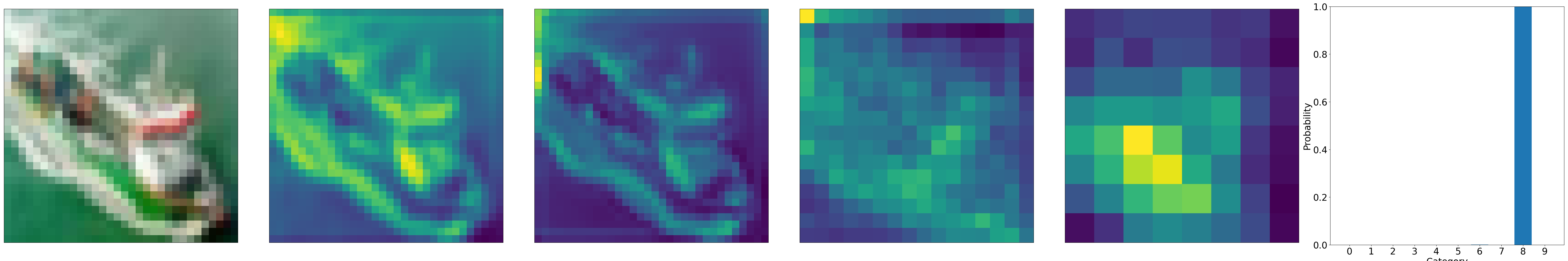}
%     % \includegraphics[width=1.0\linewidth]{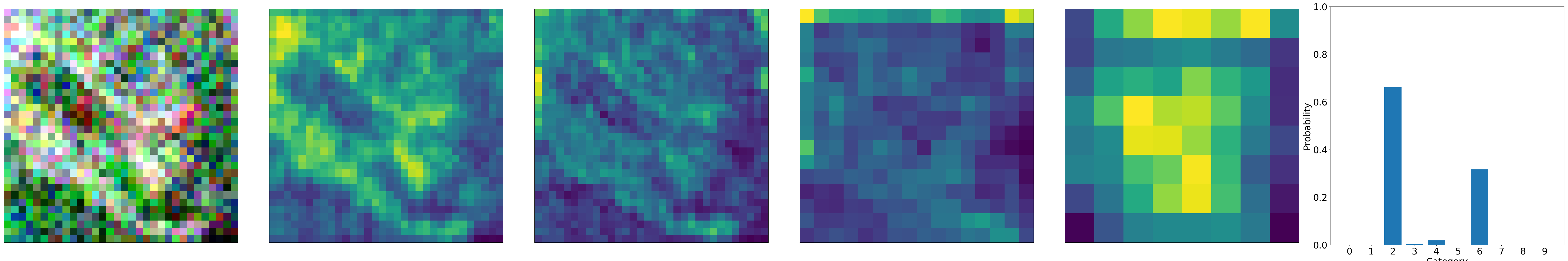}
%     \includegraphics[width=1.0\linewidth]{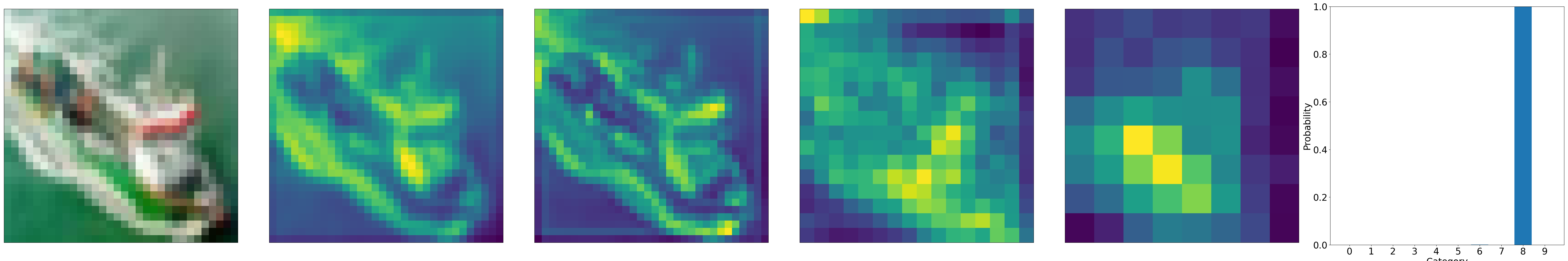}
%     \includegraphics[width=1.0\linewidth]{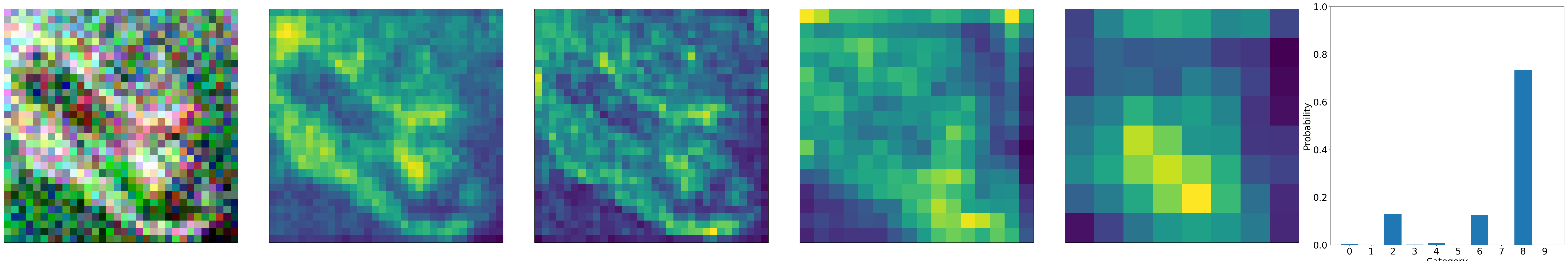}
%     \caption{Hidden Embedding Visualization}
%     \label{fig:hidden_embedding_visualization}
% \end{figure}

\textbf{Out-of-distribution Robustness.} Beyond in-distribution robustness, we further validate the advantage of our proposed Elastic DL structure by evaluating the out-of-distribution performance of Vanilla DL and Elastic DL. The results in Figure~\ref{fig:ood_robustness} demonstrate the superiority of our Elastic DL over the Vanilla  DL under various types of out-of-distribution noise.

\textbf{Running time analysis.} We also perform a 
% complexity
analysis to evaluate the inference time of different architectures using ResNet18 as the backbone. We replace multiple convolutional layers in ResNet18 with either Vanilla DL or Elastic DL layers, ranging from 0 to 14 layers. As shown in Table~\ref{tab:comlexity}, our Elastic DL introduces only a slight computational overhead compared to Vanilla DL and requires 1-3 times more computation than ResNets, which is considered acceptable. However, our Elastic DL demonstrates significantly improved robustness compared to ResNets and Vanilla DL.

\begin{table}[h!]
\centering
\vspace{-0.2in}
% \caption{ Complexity analysis. 
\caption{ Running time (ms) analysis.
% \xr{unit? second or ms?}
}
\label{tab:comlexity}
\vspace{0.1in}
\begin{center}
\begin{sc}
\resizebox{0.48\textwidth}{!}{
\setlength{\tabcolsep}{1.8pt}
\begin{tabular}{c|ccccccccc}
\hline
\rowcolor[HTML]{C0C0C0}
\textbf{Layers}  & \textbf{0 (ResNet)} & \textbf{2} & \textbf{4} & \textbf{6} & \textbf{8} & \textbf{10} & \textbf{12} & \textbf{14} \\\hline

Vanilla DL&7.82   & 8.40   & 9.28   & 10.51   & 12.13   & 13.11   & 14.16   & 15.40  \\
Elastic DL&7.82   & 8.90   & 11.39   & 13.18   & 15.99   & 16.86   & 19.57   & 21.94 \\

\bottomrule

\end{tabular}

}

\end{sc}
\end{center}
\vspace{-0.2in}
\end{table}

\section{Conclusion}
This paper proposes an orthogonal direction to break through the current plateau of adversarial robustness. We begin by revisiting dictionary learning in deep learning and reveal its limitations in dealing with outliers 
% under the single noise distribution assumption 
and its vulnerability to adaptive attacks. 
As a countermeasure, we propose a novel elastic dictionary learning approach along with an efficient RISTA algorithm unrolled as an 
% implicit
convolutional neural layers. Our comprehensive experiments demonstrate that our method achieves remarkable robustness, surpassing state-of-the-art baselines available on the robustness leaderboard.
To the best of our knowledge, this is the first work to discover and validate that structural prior can reliably enhance adversarial robustness and generalization, unveiling a promising direction for future research. 
\newpage 

\section*{Impact Statement}
\textbf{Ethical statement.}
This paper proposes to introduce elastic dictionary learning structural prior to enhance the robustness and safety of machine learning models. We do not identify any potential negative concerns.

\textbf{Societal consequences.} By improving the resilience of machine learning models against adversarial attacks, this research supports the reliable deployment of AI technologies, ensuring their reliability in real-world scenarios and fostering public confidence in their adoption.

% Authors are \textbf{required} to include a statement of the potential 
% broader impact of their work, including its ethical aspects and future 
% societal consequences. This statement should be in an unnumbered 
% section at the end of the paper (co-located with Acknowledgements -- 
% the two may appear in either order, but both must be before References), 
% and does not count toward the paper page limit. In many cases, where 
% the ethical impacts and expected societal implications are those that 
% are well established when advancing the field of Machine Learning, 
% substantial discussion is not required, and a simple statement such 
% as the following will suffice:

% ``This paper presents work whose goal is to advance the field of 
% Machine Learning. There are many potential societal consequences 
% of our work, none which we feel must be specifically highlighted here.''

% The above statement can be used verbatim in such cases, but we 
% encourage authors to think about whether there is content which does 
% warrant further discussion, as this statement will be apparent if the 
% paper is later flagged for ethics review.

% \newpage
% ~\\
% \newpage

% In the unusual situation where you want a paper to appear in the
% references without citing it in the main text, use \nocite
\nocite{langley00}

\bibliography{example_paper}
\bibliographystyle{icml2024}

%%%%%%%%%%%%%%%%%%%%%%%%%%%%%%%%%%%%%%%%%%%%%%%%%%%%%%%%%%%%%%%%%%%%%%%%%%%%%%%
%%%%%%%%%%%%%%%%%%%%%%%%%%%%%%%%%%%%%%%%%%%%%%%%%%%%%%%%%%%%%%%%%%%%%%%%%%%%%%%
% APPENDIX
%%%%%%%%%%%%%%%%%%%%%%%%%%%%%%%%%%%%%%%%%%%%%%%%%%%%%%%%%%%%%%%%%%%%%%%%%%%%%%%
%%%%%%%%%%%%%%%%%%%%%%%%%%%%%%%%%%%%%%%%%%%%%%%%%%%%%%%%%%%%%%%%%%%%%%%%%%%%%%%
\newpage
\appendix
\onecolumn

\section{Overview of Elastic Dictionary Learning}
\label{sec:overview_edl}

% \begin{figure}[h!]
%     \centering
%     \begin{subfigure}[b]{0.67\textwidth} % Set the width of the subfigure
%         \centering
%         \includegraphics[width=\textwidth]{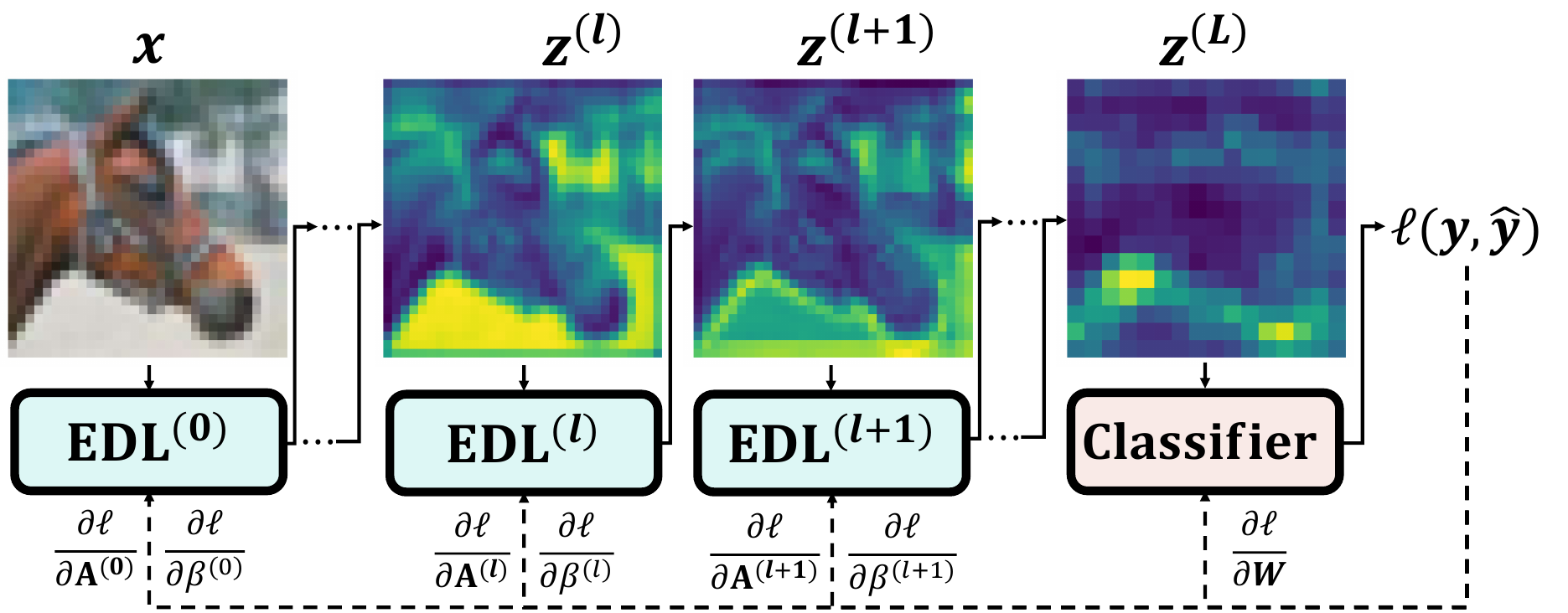} % Replace with your image path
%         \caption{Overview of Elastic DL neural networks.} % Subfigure caption
%         \label{fig:edl_nets}
%     \end{subfigure}
%     \hfill
%     % Subfigure (b)
%     \begin{subfigure}[b]{0.32\textwidth}
%         \centering
%         \includegraphics[width=\textwidth]{figures/edl.pdf} % Replace with your image path
%         \caption{Exploded view of Elastic DL layer.}
%         \label{fig:edl_layer}
%     \end{subfigure}

%     % \includegraphics[width=0.9\linewidth]{figures/overview.pdf}

% \caption{ (a) Overview of Elastic DL neural networks. Elastic DL neural networks consist of multiple stacked Elastic DL (EDL) layers. During the forward pass, the input $\bx$ is fed into the model, generating a series of hidden codes $\{\bz^{(l)}\}_{l=1}^{L}$ through EDL layers. During the backward pass, the model parameters are updated, including kernel weights $\{\vA^{(l)}\}_{l=0}^{L-1}$, layer-wise balance weights $\{\beta^{(l)}\}_{l=0}^{L-1}$, and classifier parameters $\vW$. (b) Exploded view of Elastic DL (EDL) layer. Each EDL layer is unrolled using the proposed RISTA algorithm, which approximates the solution for elastic dictionary learning objective.}

%     \label{fig:overview}
% \end{figure}

\textbf{Overview of Elastic DL neural networks.} Here we plot a figure to show the overall pipeline of incorporating Elastic DL structural prior into adversarial training as in Figure~\ref{fig:edl_nets}.
\begin{figure}[h!]
        \centering
        \includegraphics[width=0.9\textwidth]{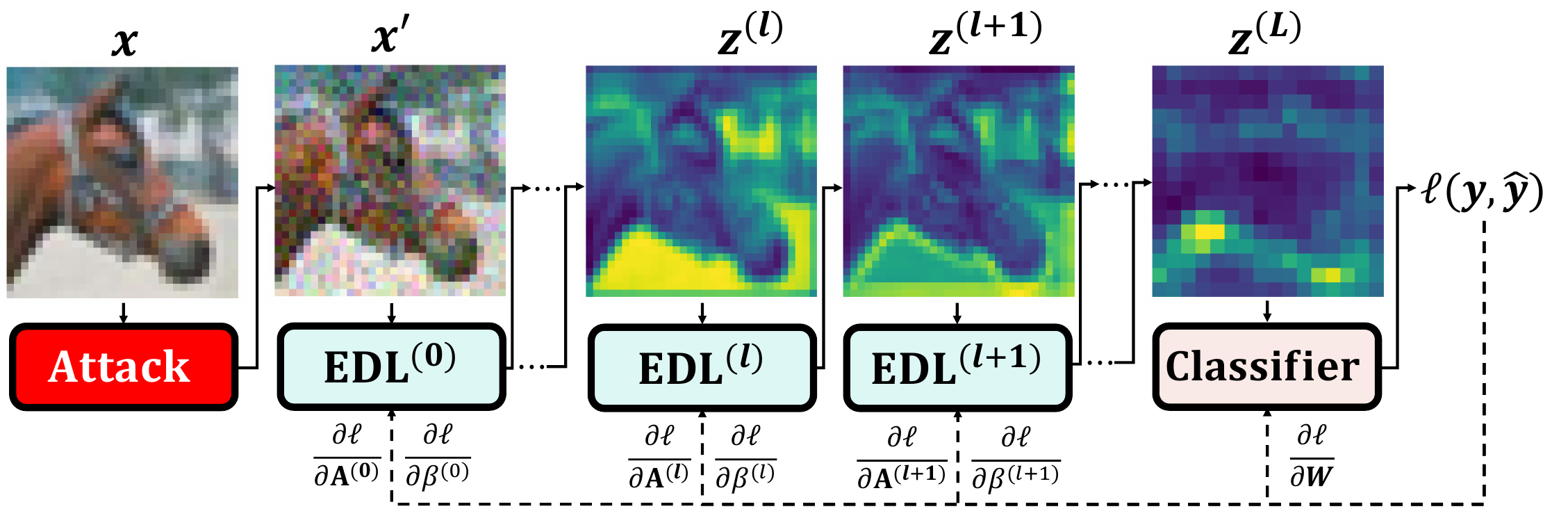} % Replace with your image path
        % \caption{Overview of Elastic DL neural networks.} % Subfigure caption
       
\caption{ Overview of Elastic DL neural networks in adversarial training.  Elastic DL neural networks consist of multiple stacked Elastic DL (EDL) layers. During the forward pass, the input $\bx$ is fed into the model, generating a series of hidden codes $\{\bz^{(l)}\}_{l=1}^{L}$ through EDL layers. During the backward pass, the model parameters are updated, including kernel weights $\{\vA^{(l)}\}_{l=0}^{L-1}$, layer-wise balance weights $\{\beta^{(l)}\}_{l=0}^{L-1}$, and classifier parameters $\vW$.
}
 \label{fig:edl_nets}

    \end{figure}

% To address the aforementioned research gap and make a breakthrough, we propose a novel adverarial dictionary training framework by incorporating our noise-adaptive dictionary learning architecture to adversarial training.
Consider a  model with $\{\vA^{(l)}\}_{l=0}^{L-1}$ and $\{\beta^{(l)}\}_{l=0}^{L-1}$ in the $L$ EDL layers  and $\vW$ in the classifier. 
%Then this model can be represented as: $$f(\bx)=f^{(L-1)}_{\vA^{(L-1)},\beta^{(L-1)}}\circ \cdots \circ f^{(l)}_{\vA^{(l)},\beta^{(l)}} \circ \cdots\circ  f^{(1)}_{\vA^{(1)},\beta^{(1)}}(\bx) ,$$
%where $ f^{(l)}_{\vA^{(l)},\beta^{(l)}} (\bx)$ is the implicit layer as in Algorithm~\ref{alg:irls-ista}. For simplicity, we exclude the notations of the additional modules (e.g., pooling, fully-connected linear layers, etc.). 
Then, the adversarial training framework with EDL can be formulated as:
\begin{align*}
    &\min_{\{\vA^{(l)}, \beta^{(l)}\}_{l=0}^{L-1}}  \mathbb{E}_{(\bx,\by)\sim \cD}\left[ \max_{\bx'\in\cB(\bx)}  \ell(\bz^{*(L)},y)\right]  \\
    \text{s.t. }& \bz^{*(l+1)} = \argmin_\bz \ell_{NADL}(\bz, \vA^{(l)},\bz^{*(l)}), \\
    &\ell_{NADL}^{(l)}(\bz,\vA,\bx) \text{ is defined in Eq.~\eqref{eq:robust_dictionary_learning}}, \\
    &\bz^{*(0)}=\bx',\\
    &\text{for } l=0,\cdots, L-1.\\
\end{align*}
Its overall pipeline can be divided into three main steps as in Figure~\ref{fig:edl_nets}:
\begin{itemize}
    \item Step 1 (Attack): leverage adversarial attack  algorithm (e.g., PGD) to generate worst-case perturbation $\bx'$.
    \item Step 2 (Forward): input $\bx'$ as $\bz^{*(0)}$ into model to obtain a series of  hidden codes for each layer $\{\bz^{(l)}\}_{l=1}^{L}$ by optimizing dictionary learning loss in Eq.~\eqref{eq:robust_dictionary_learning}.
    \item Step 3 (Backward): update the model parameters including kernel weights $\{\vA^{(l)}\}_{l=0}^{L-1}$, layer-wise balance weight $\{\beta^{(l)}\}_{l=0}^{L-1}$, and other parameters $\vW$.
    
\end{itemize}

\newpage
\textbf{Exploded view of Elastic DL layer.} We also provide an exploded view of each Elastic DL layer as in Figure~\ref{fig:edl_layer}. The input signal $\bz^{(k)}$ can be represented by a linear superposition of several atoms $\{\balpha_{dc}\}$ from a convolutional dictionary $\vA^{(l)}$. 
Each EDL layer is unrolled using the proposed RISTA algorithm, which approximates the solution for elastic dictionary learning objective.

    \begin{figure}[h!]
        \centering
        \includegraphics[width=\textwidth]{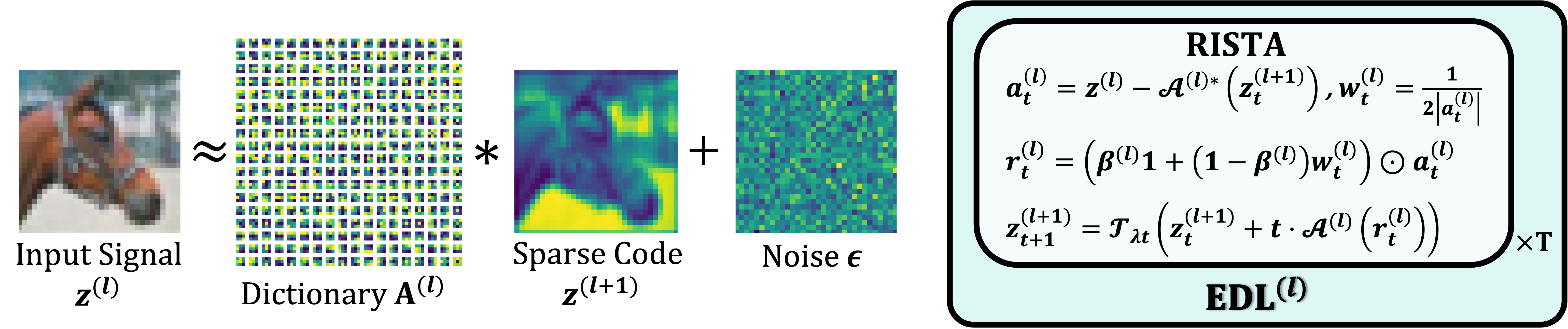} % Replace with your image path
        % \caption{Exploded view of Elastic DL layer.}
        
        \caption{ Exploded view of Elastic DL (EDL) layer. }
        \label{fig:edl_layer}
    \end{figure}

% \textbf{Pseudo code in PyTorch style.} To fure

% The pseudo code is presented in Algorithm~\ref{alg:irls}. 

% \begin{algorithm}[H]
% % \caption{Robust NRPM-MLPs}
% \caption{Elastic DL Layer}
% \label{alg:irls}
% \begin{small}
% \begin{minted}[escapeinside=||,mathescape=true,linenos=true, numbersep=-3pt, framesep=3mm]{Python}
% class ElasticDL(nn.Module):
%     def __init__(self, )
%  def ElasticDL(X, A, eps = 1e-3, L = 3):
%      AX = X.unsqueeze(-1) * A
%      D = X.shape[0]
%      Z = torch.matmul(X, A) # Initialization as LPM-estimation
%      For _ in range(K):
%         DIST = torch.abs(KX - Z/D) # Distance
%         W = 1/(DIST + eps)
%         W = normalize(W, p=1, dim=0) 
%         Z = D*(W*AX).sum(dim=0) # Update
%     return Z
% \end{minted}
% \end{small}
% \end{algorithm}

\newpage

\section{Theoretical Proof}

% You can have as much text here as you want. The main body must be at most $8$ pages long.
% For the final version, one more page can be added.
% If you want, you can use an appendix like this one.  

% The $\mathtt{\backslash onecolumn}$ command above can be kept in place if you prefer a one-column appendix, or can be removed if you prefer a two-column appendix.  Apart from this possible change, the style (font size, spacing, margins, page numbering, etc.) should be kept the same as the main body.
% %%%%%%%%%%%%%%%%%%%%%%%%%%%%%%%%%%%%%%%%%%%%%%%%%%%%%%%%%%%%%%%%%%%%%%%%%%%%%%%
% %%%%%%%%%%%%%%%%%%%%%%%%%%%%%%%%%%%%%%%%%%%%%%%%%%%%%%%%%%%%%%%%%%%%%%%%%%%%%%%

\subsection{Proof of Lemma~\ref{lemma:local_upper_bound}}
\label{sec:proof_local_upper_bound}

\begin{proof}
    
Since $\sqrt{a}\leq\frac{a}{2\sqrt{b}}+\frac{\sqrt{b}}{2}$ and the equlity holds when $a=b$, by replacemnet as $a=(\bx[i,j,c]-\cA^*(\bz)[i,j,c])^2$ and $b=(\bx[i,j,c]-\cA^*(\bz_*)[i,j,c])^2$, then
\begin{align*}
    |\bx[i,j,c]-\cA^*(\bz)[i,j,c]|&\leq \frac{1}{2}\cdot \frac{1}{|\bx[i,j,c]-\cA^*(\bz_*)[i,j,c]|}\cdot (\bx[i,j,c]-\cA^*(\bz)[i,j,c])^2+\frac{1}{2}|\bx[i,j,c]-\cA^*(\bz_*)[i,j,c]|\\
    &=\bw[i,j,c]\cdot (\bx[i,j,c]-\cA^*(\bz)[i,j,c])^2+\frac{1}{2}|\bx[i,j,c]-\cA^*(\bz_*)[i,j,c]|
\end{align*}

Sum up the items on both sides, we obtain 
\begin{align*}
\cE(\bz)&=\|\bx-\cA^*(\bz)\|_1 = \sum_{i,j,c}|\bx[i,j,c]-\cA^*(\bz)[i,j,c]|\\ 
   &\leq
    \sum_{i,j,c}\bw[i,j,c]\cdot (\bx[i,j,c]-\cA^*(\bz)[i,j,c])^2+\frac{1}{2} \sum_{i,j,c}|\bx[i,j,c]-\cA^*(\bz_*)[i,j,c]|\\
  &=\|\bw^{1/2}\odot (\bx-\cA^*(\bz))\|_2^2+\cE(\bz_*)\\
  &=\cU(\bz,\bz_*)
\end{align*}

and the equality holds at $a=b$ ($\bz=\bz_*$):
\begin{equation}
\label{eq:local_equality}
    \cU(\bz_*,\bz_*) = \cE(\bz_*).
\end{equation}

\end{proof}

\subsection{Proof of Algorithm Iteration in Eq.~\eqref{eq:algo_iteration_l1}}
\label{sec:proof_algo_iteration}

% $$\bz_{t+1}=\argmin_\bz  \lambda\|\bz\|_1 + \frac{\beta}{2}\|\bx - \cA^*(\bz)\|_2^2 +\frac{1-\beta}{2} \|(\bw^{(t)})^{1/2} \odot \left(\bx - \cA^*(\bz)\right)\|_2^2$$ 

% \begin{equation}
%     \begin{aligned}
%         &\nabla = \cA \left(\left(\beta \mathbf{1} + (1-\beta) \bw^{(t)}\right)\odot\left(\bx - \cA^*(\bz_t)\right)\right)\\
%         &\bz_{t+1} = \mathcal{T}_{\lambda t} \left( \bz_t + t \cdot \nabla  \right),
%         % \label{eq:algo_iteration}
%     \end{aligned}
% \end{equation}

Here, we derive the algorithm for general elastic dictionary learning (Elastic DL), the $\ell_1$-based robust dictionary learning (Robust DL) can be consider as the special case with $\beta=0$.
\begin{proof}
For convex objective: 
$$f(\bz)=\frac{\beta}{2}\|\bx - \cA^*(\bz)\|_2^2 +\frac{1-\beta}{2} \|(\bw^{(t)})^{1/2} \odot \left(\bx - \cA^*(\bz)\right)\|_2^2,$$
we can achieve the optima via the first-order gradient descent:
$$ \bz_{t+1}=\bz_t-t\nabla f(\bz_t),$$
or equivalently,
$$\bz_{t+1}=\argmin_\bz\{f(\bz_t)+\langle\bz-\bz_t,\nabla f(\bz_t)\rangle+\frac{1}{2t}\|\bz-\bz_t\|^2\}.$$

Then, for the corresponding $\ell_1$-regularized problem:
$$\min_\bz f(\bz) + \lambda \|\bz\|_1,$$ we have:
\begin{align*}
   \bz_{t+1}&=\argmin_\bz\{ f(\bz_t)+\langle\bz-\bz_t,\nabla f(\bz_t)\rangle+\frac{1}{2t}\|\bz-\bz_t\|^2+\lambda\|\bz\|_1\} \\
   &= \argmin_\bz\{\frac{1}{2t}\|\bz-(\bz_t-t\nabla f(\bz_t))\|^2+\lambda\|\bz\|_1\}\\
   &= \argmin_\bz\{g(\bz):=\frac{1}{2t}\|\bz-\by\|^2+\lambda\|\bz\|_1\} \quad (\by=\bz_t-t\nabla f(\bz_t))
\end{align*}
Then,  the optimality condition is:
\begin{align*}
    &0\in\partial_\bz g(\bz^*) =  \frac{1}{t}(\bz^*-\by)+\lambda \text{sign}(\bz^*)\\
    \Leftrightarrow \quad  & \by \in \bz^* + \lambda t \text{sign}(\bz^*) \\
    \Leftrightarrow \quad  & \by \in \left(\text{Id} + \lambda t \text{sign}(\cdot)\right)(\bz^*) \\
    \Leftrightarrow \quad  & \bz^* = \cT_{\lambda t}(\by) := \left(\text{Id} + \lambda t \text{sign}(\cdot)\right)^{-1}(\by) = \text{sign}(\by) \left(|\by-\lambda t|\right)_+.\\
\end{align*}
Since 
\begin{align*}
\nabla f(\bz)&=-\beta\cA(\bx - \cA^*(\bz)) - (1-\beta)\cA( \bw^{(t)} \odot \left(\bx - \cA^*(\bz)\right)\\
&=-\cA \left(\left(\beta \mathbf{1} + (1-\beta) \bw^{(t)}\right)\odot\left(\bx - \cA^*(\bz_t)\right)\right),
\end{align*}
Then 
$$\bz_{t+1}=\bz^*=\mathcal{T}_{\lambda t}\left(\by \right)=\mathcal{T}_{\lambda t} \left( \bz_t - t \cdot \nabla f(\bz)  \right)=\mathcal{T}_{\lambda t} \left( \bz_t + t \cdot \cA \left(\left(\beta \mathbf{1} + (1-\beta) \bw^{(t)}\right)\odot\left(\bx - \cA^*(\bz_t)\right)\right)   \right)$$

\end{proof}

\subsection{Proof of Theorem~\ref{thm:influence_function}}
\label{sec:proof_influence_function}

\begin{proof}

Let $\bx_t :=t\Delta + (1-t)\bx, \bw_t=\frac{1}{2(|\cE(\bx_t)|+\epsilon)}$, then 

\begin{align*}
&IF(\Delta; \cP_{\text{vanilla}}, \bx)\\
=&\lim_{t\to 0^+} \frac{\cP_{\text{vanilla}}(\bx_t) - \cP_{\text{vanilla}}(\bx)}{t}\\
=&\lim_{t\to 0^+}\frac{\cE(\bx_t)-\cE(\bx)}{t}\\
=&\lim_{t\to 0^+}\frac{\cE(t\Delta + (1-t)\bx)-\cE(\bx)}{t}\\
=&\cE(\Delta-\bx)
\end{align*}

Given a small enough $t$, we can have $\text{sign}(\cE(\bx_t)) = \text{sign}(\cE(\bx))$, then 
\begin{align*}
&\bw_t-\bw\\
=&\frac{1}{2(|\cE(\bx_t)|+\epsilon)}-\frac{1}{2(|\cE(\bx)|+\epsilon)}\\
=&\frac{|\cE(\bx)|-|\cE(\bx_t)|}{2(|\cE(\bx_t)|+\epsilon)(|\cE(\bx)|+\epsilon)}\\
=&-\frac{\text{sign} (\cE(\bx))\odot (\cE(\bx_t)-\cE(\bx))}{2(|\cE(\bx_t)|+\epsilon)(|\cE(\bx)|+\epsilon)}\\
\end{align*}

So $$\lim_{t\to 0^+}\frac{\bw_t-\bw}{t} = \lim_{t\to 0^+}-\frac{\text{sign} (\cE(\bx))\odot (\cE(\bx_t)-\cE(\bx))}{2t(|\cE(\bx_t)|+\epsilon)(|\cE(\bx)|+\epsilon)} = -\frac{\text{sign} (\cE(\bx))\odot (\cE(\Delta-\bx))}{2(|\cE(\bx)|+\epsilon)^2}  $$

Since
\begin{align*}
&\cP_{\text{robust}}(\bx_t) - \cP_{\text{robust}}(\bx)\\
=&\bw_t\odot\cE(\bx_t)-\bw\odot\cE(\bx)\\
=&(\bw_t-\bw)\odot\cE(\bx)+\bw\odot(\cE(\bx_t)-\cE(\bx))+(\bw_t-\bw)\odot(\cE(\bx_t)-\cE(\bx))
\end{align*}

Then we have 
\begin{align*}
&IF(\Delta; \cP_{\text{robust}}, \bx)\\
=&\lim_{t\to 0^+} \frac{\cP_{\text{robust}}(\bx_t) - \cP_{\text{robust}}(\bx)}{t}\\
=&\lim_{t\to 0^+} \frac{\bw_t-\bw}{t}\odot\cE(\bx)+\bw\odot\lim_{t\to 0^+}\frac{\cE(\bx_t)-\cE(\bx)}{t}+0\\
=&-\frac{\text{sign} (\cE(\bx))\odot \cE(\Delta-\bx)}{2(|\cE(\bx)|+\epsilon)^2}\odot \cE(\bx) + \bw \odot \cE(\Delta-\bx) \\
=&-\frac{|\cE(\bx)|}{2(|\cE(\bx)|+\epsilon)^2}\odot \cE(\Delta-\bx) + \frac{1}{2(|\cE(\bx)|+\epsilon)} \odot \cE(\Delta-\bx) \\
=&2\epsilon \bw^2 \odot \cE(\Delta-\bx),\\
\end{align*}

and

\begin{align*}
&IF(\Delta; \cP_{\text{elastic}}, \bx)\\
=&\lim_{t\to 0^+} \frac{\cP_{\text{elastic}}(\bx_t) - \cP_{\text{elastic}}(\bx)}{t}\\
=&\beta IF(\Delta; \cP_{\text{vanilla}}, \bx) + (1-\beta) IF(\Delta; \cP_{\text{robust}}, \bx)\\
=&(\beta \mathbf{1} + 2(1-\beta)\epsilon \bw^2) \odot \cE(\Delta-\bx).\\
\end{align*}

\end{proof}

\newpage
\section{Related Works}
\label{sec:related_works_app}
\subsection{Adversarial Attacks} 
Adversarial attacks are typically classified into two main categories: \textit{white-box} and \textit{black-box} attacks. In white-box attacks, the attacker has full knowledge of the target neural network, including its architecture, parameters, and gradients. Common examples of white-box attacks include gradient-based methods such as FGSM~\citep{goodfellow2014explaining}, DeepFool~\citep{moosavi2016deepfool}, PGD~\citep{madry2017towards}, and the C\&W attack~\citep{carlini2017towards}. 
In contrast, black-box attacks operate under limited information, where the attacker can only interact with the model through its input-output behavior without direct access to internal details. Examples of black-box methods include surrogate model-based approaches~\citep{papernot2017practical}, zeroth-order optimization techniques~\citep{chen2017zoo}, and query-based methods~\citep{andriushchenko2020square, alzantot2019genattack}.

Here we list the detailed information of attacks we use in the main paper:
\begin{itemize}[left=0.0em]
    \item Fast Gradient Sign Method (FGSM)~\citep{goodfellow2014explaining}: FGSM is one of the earliest and most widely used adversarial attack methods. It generates adversarial examples by using the gradient of the loss function with respect to the input data to craft small but purposeful perturbations that lead the model to make incorrect predictions.
    \item Projected Gradient Descent (PGD)~\citep{madry2017towards}: PGD is an iterative and more robust extension of FGSM. It repeatedly applies small perturbations within a defined range (or epsilon ball) to maximize the model's loss. PGD is often considered a strong adversary in the evaluation of model robustness.
    \item Carlini \& Wagner Attack (C\&W)~\citep{carlini2017towards}: This attack focuses on crafting adversarial examples by optimizing a custom loss function designed to minimize perturbations while ensuring the generated adversarial samples are misclassified.
    \item AutoAttack~\cite{croce2020reliable}: AutoAttack is an ensemble of adversarial attack methods that automatically evaluates the robustness of models. It combines various attacks to provide a strong, reliable benchmark for adversarial robustness without manual tuning.
    \item SparseFool~\cite{modas2019sparsefool}: SparseFool is a sparse adversarial attack designed to generate adversarial examples by perturbing only a few pixels in the input image. It highlights how minimal changes can significantly alter model predictions.
\end{itemize}

\subsection{Adversarial Defenses}

% \textbf{Adversarial Defenses.} 
% \textbf{Adversarial robustness.}
Significant efforts have been devoted to enhancing model robustness through a variety of strategies, including detection techniques~\citep{metzen2017detecting,feinman2017detecting,grosse2017statistical,sehwag2021robust,rade2022reducing,addepalli2022scaling}, purification-based approaches~\citep{ho2022disco,nie2022diffusion,shi2021online,yoon2021adversarial}, robust training methods~\citep{madry2017towards,zhang2019theoretically,gowal2021improving,li2023wat}, and regularization-based techniques~\citep{cisse2017parseval,zheng2016improving}. Among these, adversarial training-based  methods~\citep{sehwag2021robust,rade2022reducing,addepalli2022scaling} have proven highly effective against adaptive adversarial attacks, consistently leading the robustness leaderboard (RobustBench)~\citep{croce2020robustbench}. 
Despite their success, most existing methods rely heavily on extensive synthetic training data generated by advanced models, larger network architectures, and empirically driven training strategies. These dependencies pose substantial challenges to advancing beyond the current plateau in adversarial robustness. 
In this work, we introduce an elastic dictionary framework that incorporates structural priors into model design. This approach is fully orthogonal to existing methods and offers a complementary pathway to further enhance robustness when integrated with current techniques.

Here are we list the detailed information of adversarial training based methods we use in the main paper:
\begin{itemize}[left=0.0em]
    \item PGD-AT~\citep{madry2017towards}: Projected Gradient Descent Adversarial Training (PGD-AT) is a fundamental adversarial training approach that enhances model robustness by iteratively generating adversarial examples using PGD and training the model on them.
    
    \item TRADES~\citep{zhang2019theoretically}: TRADES (Tradeoff-inspired Adversarial Defense via Surrogate Loss Minimization) balances robustness and accuracy by introducing a regularization term that penalizes the discrepancy between natural and adversarial predictions.
    
    \item MART~\citep{wang2019improving}: Misclassification-Aware Adversarial Training (MART) improves robustness by assigning higher weights to misclassified examples, emphasizing correctly classified samples' robustness.
    
    \item SAT~\citep{huang2020self}: Self-Adaptive Training (SAT) refines adversarial training by adjusting the training process based on the model’s confidence, mitigating the effects of incorrect labels and improving generalization.
    
    \item AWP~\citep{wu2020adversarial}: Adversarial Weight Perturbation (AWP) enhances robustness by perturbing model parameters within a constrained space to improve the worst-case performance against adversarial attacks.
    
    \item Consistency~\citep{tack2022consistency}: Consistency training leverages perturbation-invariant representations to enhance robustness by enforcing consistent predictions across different transformations of inputs.
    
    \item DYNAT~\citep{liu2024dynamic}: Dynamic Adversarial Training (DYNAT) adapts training strategies dynamically based on model performance, balancing robustness and generalization efficiency.
    
    \item PORT~\citep{sehwag2021robust}: Proxy Distribution-based Robust Training (PORT) leverages data from proxy distributions, such as those generated by advanced generative models, to enhance adversarial robustness. By formally analyzing robustness transfer and optimizing training, PORT demonstrates significant improvements in robustness under various threat models.
    
    \item HAT~\citep{rade2022reducing}: Helper-based Adversarial Training (HAT) mitigates the accuracy-robustness trade-off by incorporating additional incorrectly labeled examples during training. This approach reduces excessive margin changes along certain adversarial directions, improving accuracy without compromising robustness and achieving a better trade-off compared to existing methods.
\end{itemize}

\subsection{Robust Architectures from Robust Statistics Perspective}
Several studies have explored constructing robust architectures from the perspective of robust statistics. For instance, RUNG~\citep{hou2024robustgraphneuralnetworks} develops robust graph neural networks through unbiased aggregation.
\citet{han2024designing} design robust transformers using robust kernel density estimation. ProTransformer~\citep{hou2024protransformerrobustifytransformersplugandplay} introduces a novel robust attention mechanism with a robust estimator applied to token embeddings. Additionally, \citet{hou2024robustness} propose a universal hybrid architecture inspired by robust statistics, which can be flexibly deployed through robustness reprogramming.

\newpage
\section{Additional Experiments }

\subsection{Preliminary Studies}
\label{sec:pre_study}

\textbf{Preliminary in SDNet18.}
We evaluate Vanilla DL (with both fixed and tuned $\lambda$) under random impulse noise and adaptive PGD adversarial attacks~\cite{madry2017towards}. As illustrated in Figure~\ref{fig:pre_sdnet18_various_lambda}, increasing the noise level and intensifying the distribution tail degradation lead to a decline in Vanilla DL's accuracy. While tuning the sparsity weight $\lambda$ enhances resilience to random noise, models with any $\lambda$ suffer a sharp performance drop under adaptive PGD attacks, with accuracy nearing zero.

\begin{figure}[h!]
    \centering
% \includegraphics[width=0.50\textwidth]{figures/Heavy-tailed_Test.png} 
    % \centering
\includegraphics[width=0.4\textwidth]{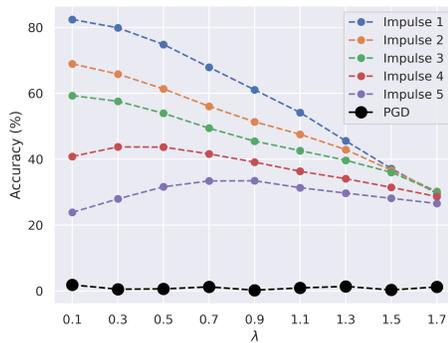}
\caption{Performance of SDNet18 (Vanilla DL) under random Impulse noise with different levels.}
    \label{fig:pre_sdnet18_various_lambda}
\end{figure}

\newpage
\subsection{Adversarial Training Curves}

\subsubsection{Training Curves of Each Method}
\label{sec:training_curves_each_method}

\textbf{Training curve of our Elastic DL.} From  Figure~\ref{fig:adv_train_curve}, we can observe that during the 100th - 150th epochs, the Vanilla DL model exhibits a severe \emph{robust overfitting} phenomenon: while training performance improves, the test robust accuracy drops significantly. After incorporating our Elastic DL structural prior at the 150th epoch, both training and testing robustness improve substantially. Although there is a slight drop in natural performance during the initial switching period, it recovers quickly within a few epochs. This phenomenon highlights the promising potential of the Elastic DL structural prior in breaking through the bottleneck of adversarial robustness and generalization.

\begin{figure}[h!]
    \centering
    \includegraphics[width=0.5\linewidth]{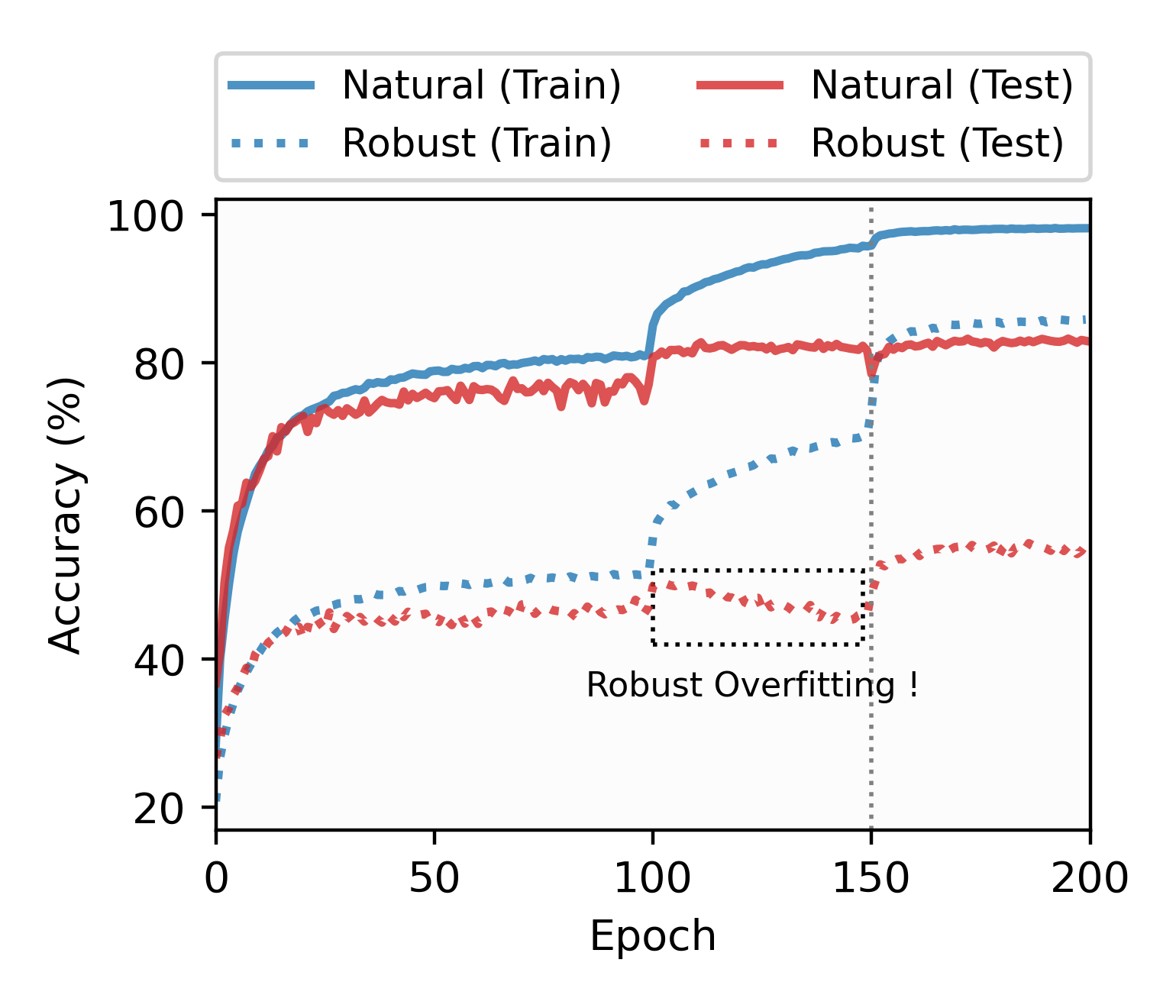}
    \caption{Adversarial training curve of our Elastic DL.  During the 100th to 150th epochs, the model experiences a catastrophic \emph{robust overfitting} problem. By introducing the Elastic DL structural prior at the 150th epoch and fine-tuning, we effectively mitigate overfitting and achieve significantly improved robustness and generalization.  }
    \label{fig:adv_train_curve}
\end{figure}

\newpage 
\textbf{Training curves of baseline methods.}
We track the training curves of  the baselines including regularization ($\ell_1$, $\ell_2$ regularizations and their combination), Cutout~\cite{devries2017improved}, Mixup~\cite{zhang2017mixup} in Figure~\ref{fig:training_curves_each_method}.

\begin{figure}[h!]
    \centering
    \begin{subfigure}[b]{0.36\textwidth}
        \centering
    \includegraphics[width=\textwidth]{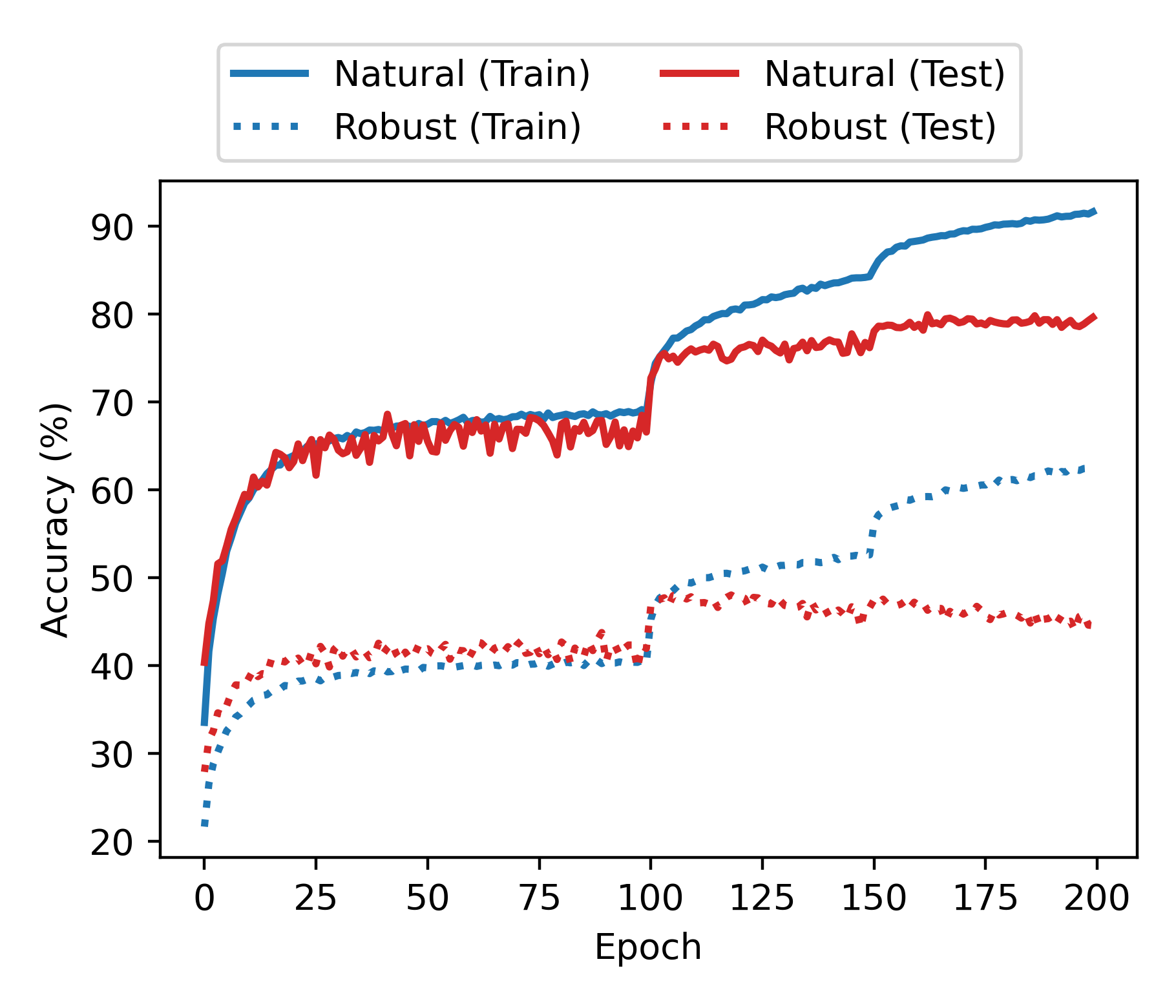} 
        \caption{Vanilla} % 
    \end{subfigure}
    \hfill
    \begin{subfigure}[b]{0.36\textwidth}
        \centering
    \includegraphics[width=\textwidth]{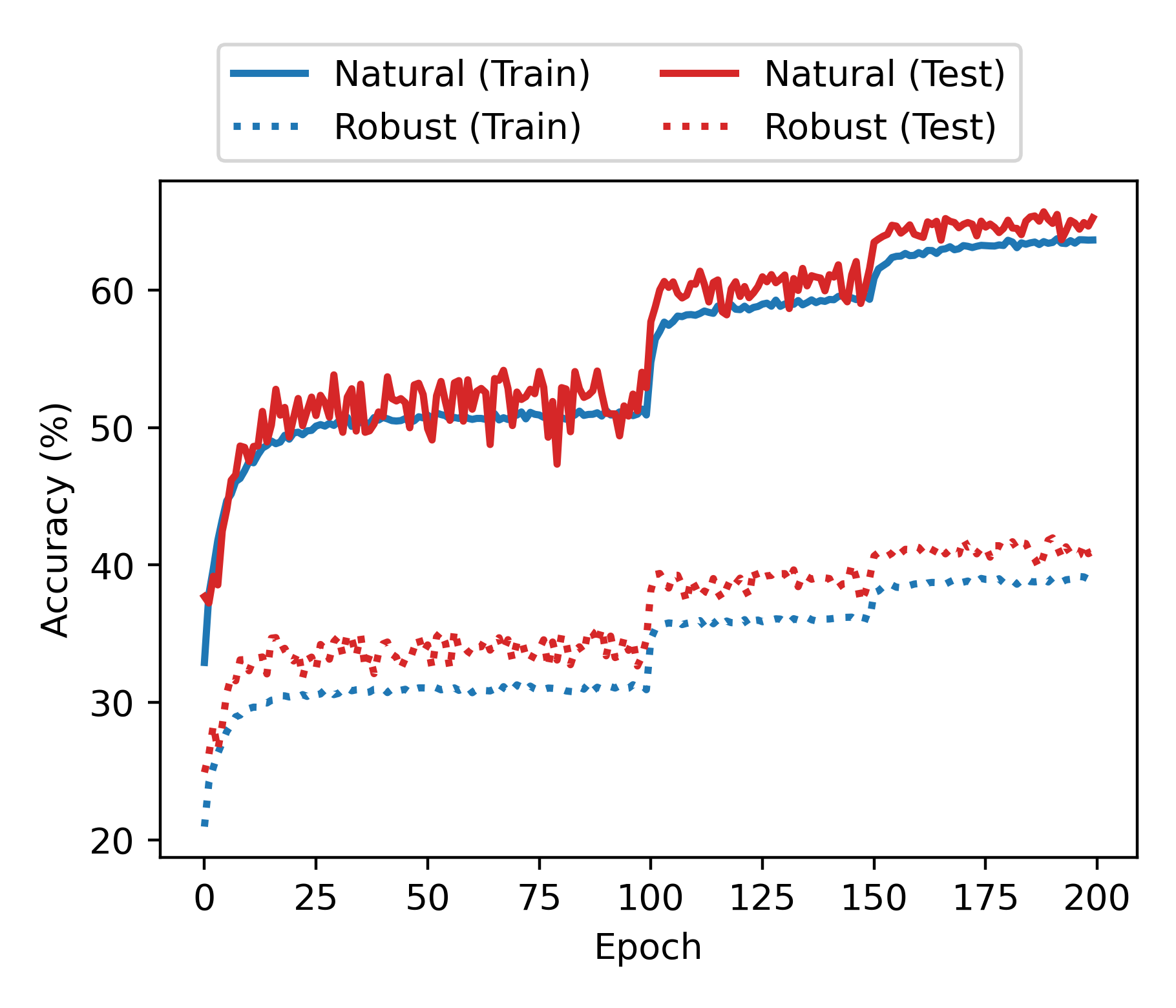} 
        \caption{$\ell_1$-Regularization} % 
    \end{subfigure}
    \hfill
    \begin{subfigure}[b]{0.36\textwidth}
        \centering
\includegraphics[width=\textwidth]{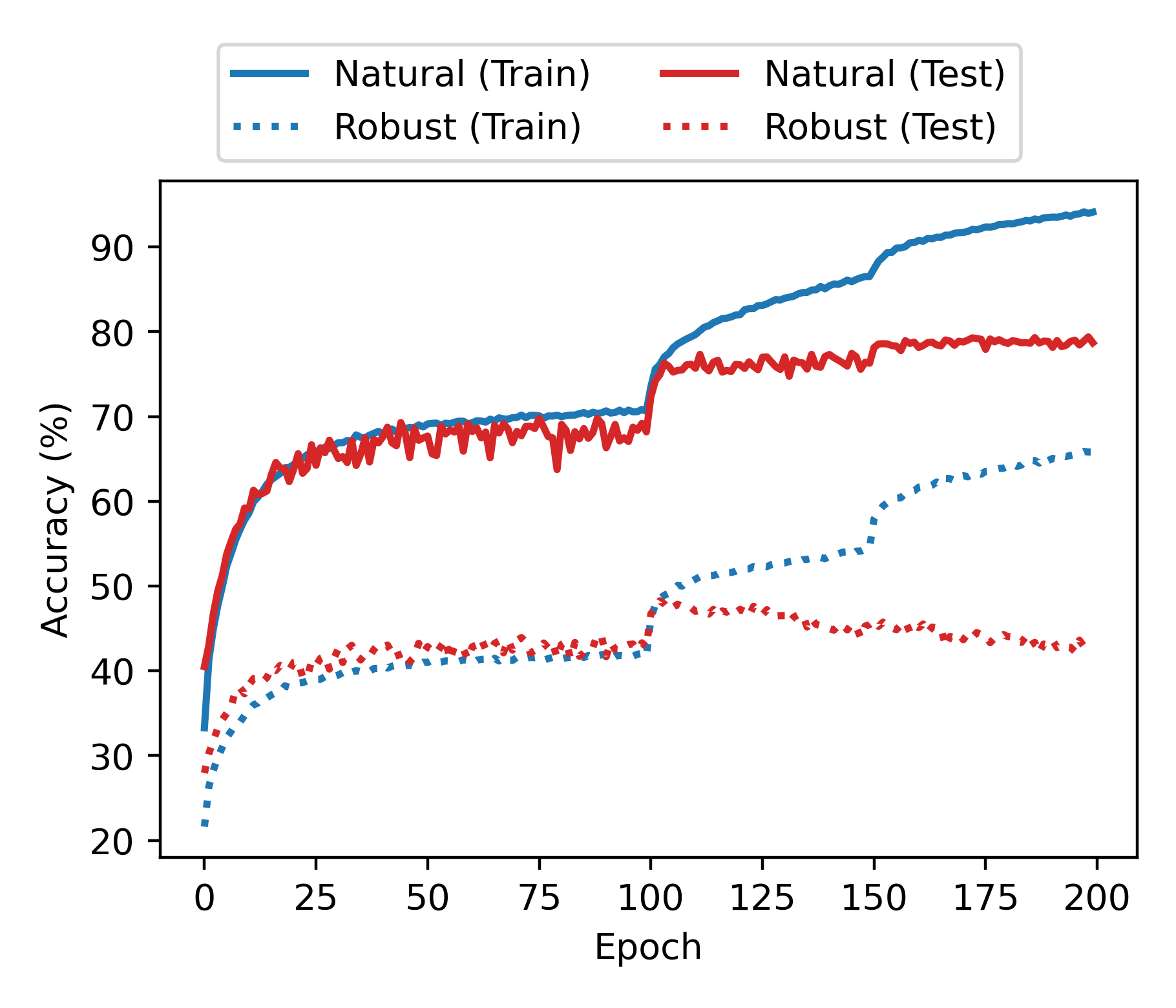} 
        \caption{$\ell_2$-Regularization} % 
    \end{subfigure}
    \hfill
    \begin{subfigure}[b]{0.36\textwidth}
        \centering
    \includegraphics[width=\textwidth]{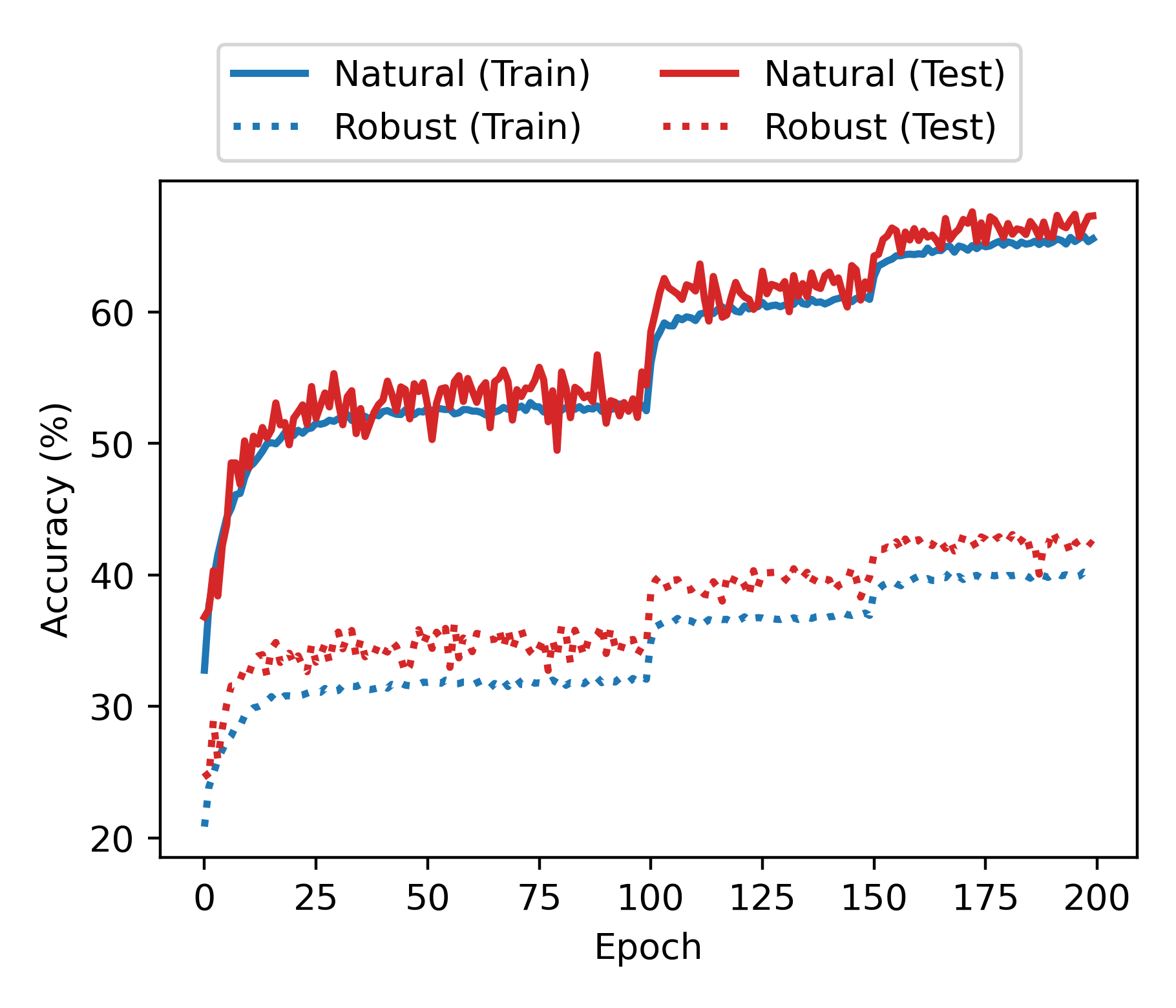} 
        \caption{$\ell_2$ and $\ell_1$-Regularization} % 
    \end{subfigure}
    \hfill
    \begin{subfigure}[b]{0.36\textwidth}
        \centering
    \includegraphics[width=\textwidth]{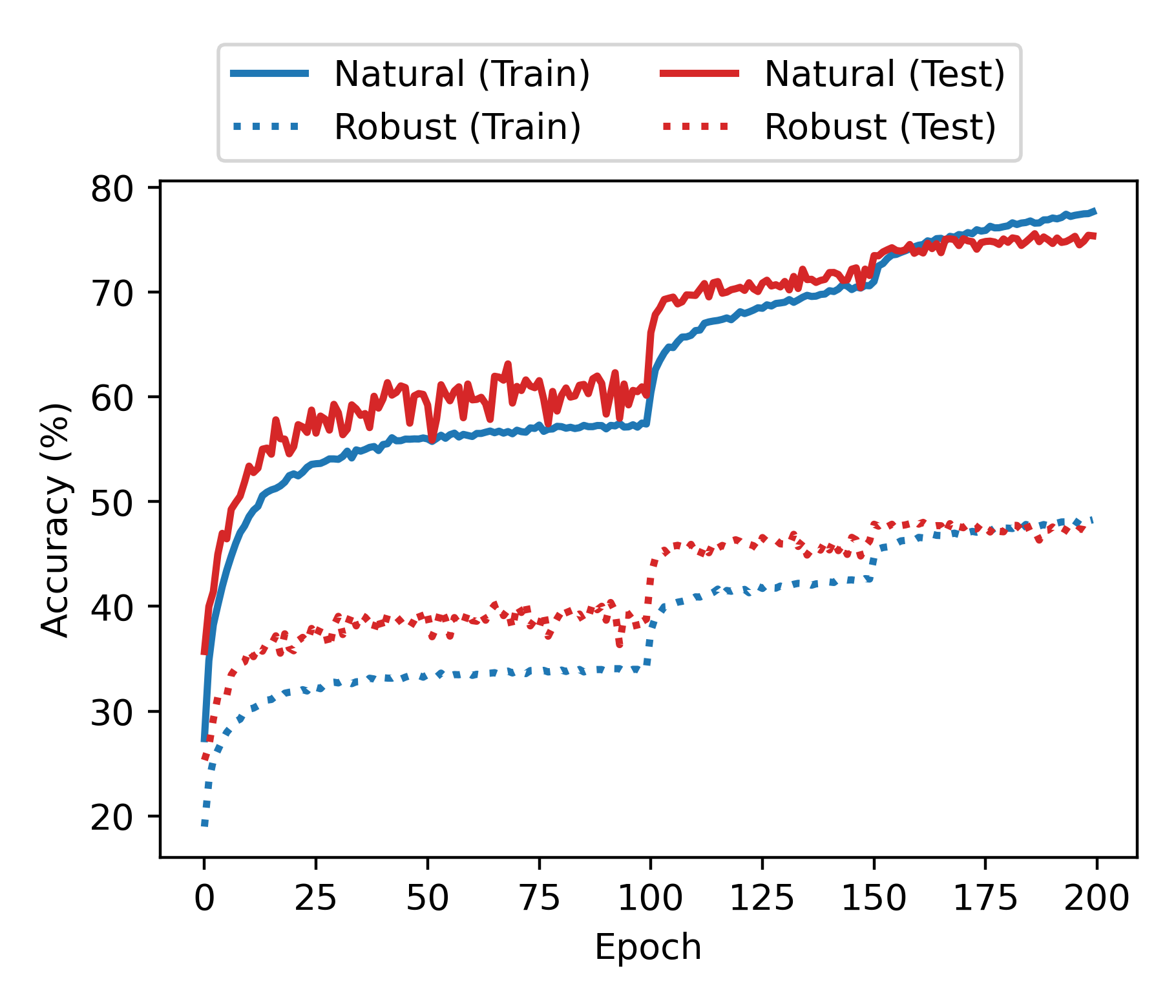} 
        \caption{Cutout} % 
    \end{subfigure}
    \hfill
    \begin{subfigure}[b]{0.36\textwidth}
        \centering
    \includegraphics[width=\textwidth]{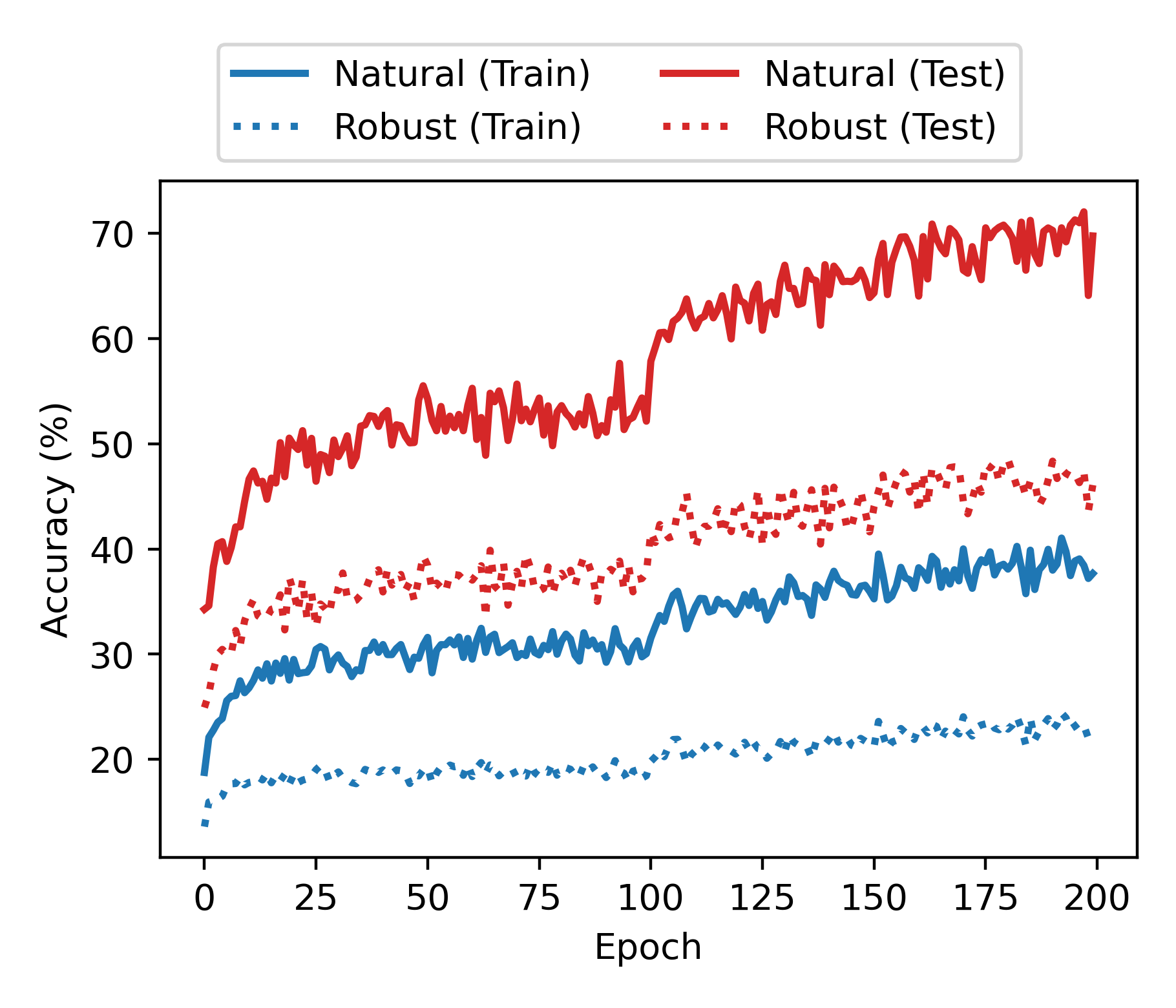} 
        \caption{Mixup} % 
    \end{subfigure}
    % \hfill
    % \begin{subfigure}[b]{0.36\textwidth}
    %     \centering
    % \includegraphics[width=\textwidth]{figures/training_curves/curve_resnet10.png} 
    %     \caption{CIFAR10} % 
    % \end{subfigure}
    % \hfill
    % \begin{subfigure}[b]{0.36\textwidth}
    %     \centering
    % \includegraphics[width=\textwidth]{figures/adv_train_curve.png} 
    %     \caption{CIFAR10} % 
    % \end{subfigure}
    % \hfill

    \caption{Training curves of baselines.} 
    \label{fig:training_curves_each_method}
\end{figure}

\newpage
\subsubsection{Comparison of All Methods}
\label{sec:curves_comparison_all_method}
To make a comparison of all the methods, we compare the natural and robust performance in the training and testing dataset through the training curve in Figure~\ref{fig:comparison_training_curves_all_methods}. The figures show the consisente advantage of our Elastic DL over other methods.

\begin{figure}[h!]
    \centering
    \begin{subfigure}[b]{0.48\textwidth}
        \centering
    \includegraphics[width=\textwidth]{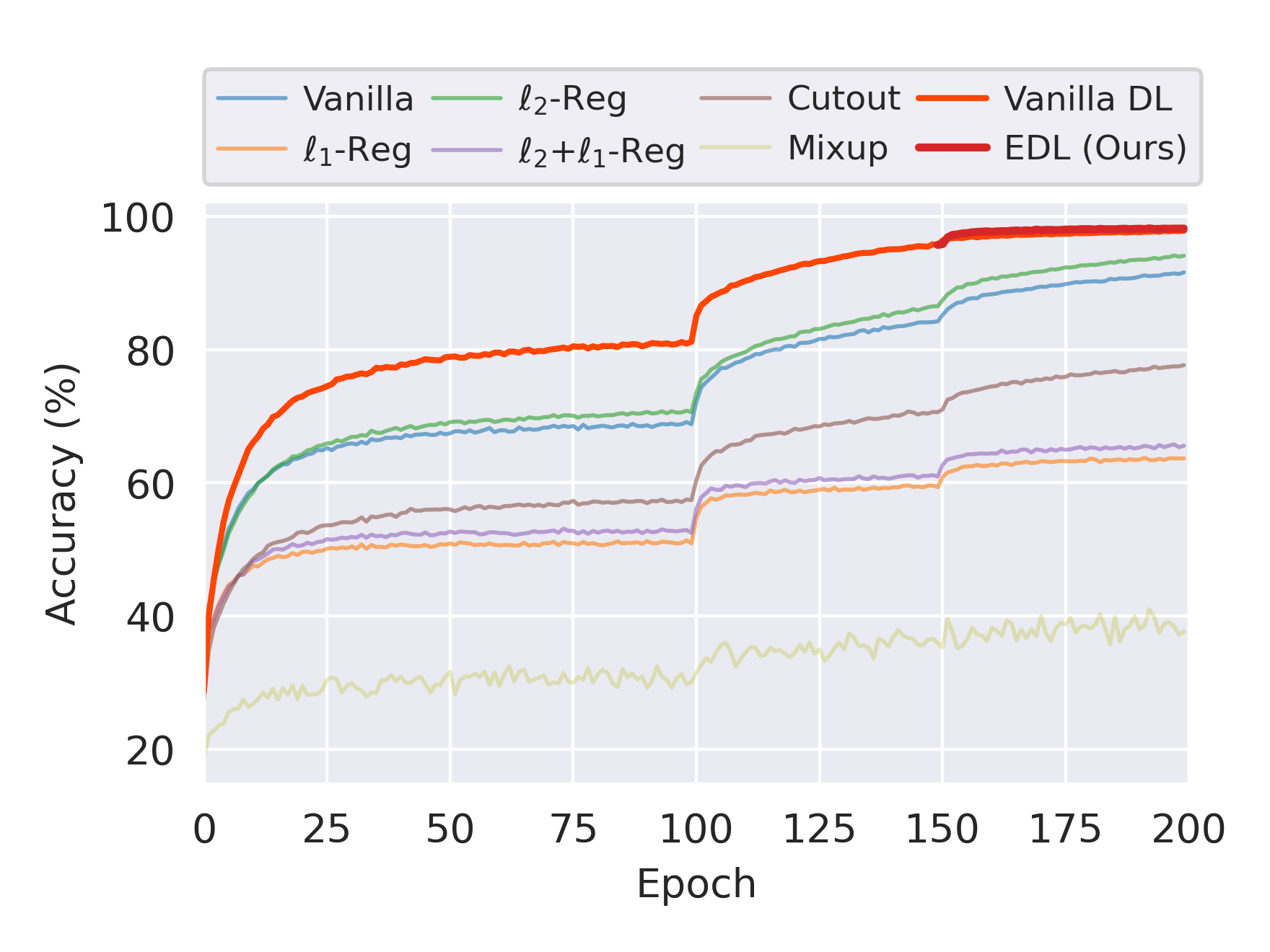} 
        \caption{Natural (Train)} % 
    \end{subfigure}
    \hfill
    \begin{subfigure}[b]{0.48\textwidth}
        \centering
    \includegraphics[width=\textwidth]{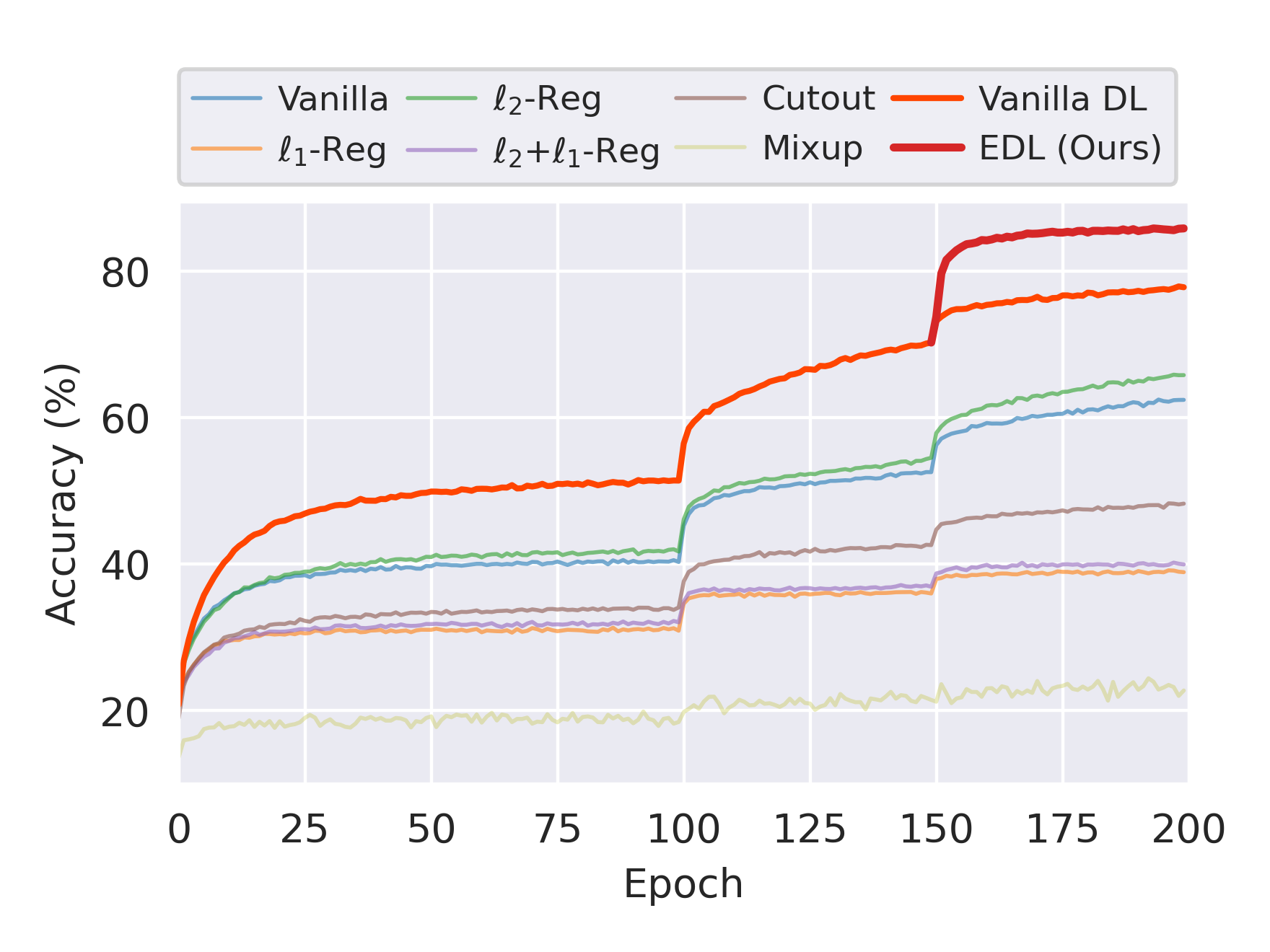} 
        \caption{Robust (Train)} % 
    \end{subfigure}
    \hfill
    \begin{subfigure}[b]{0.48\textwidth}
        \centering
    \includegraphics[width=\textwidth]{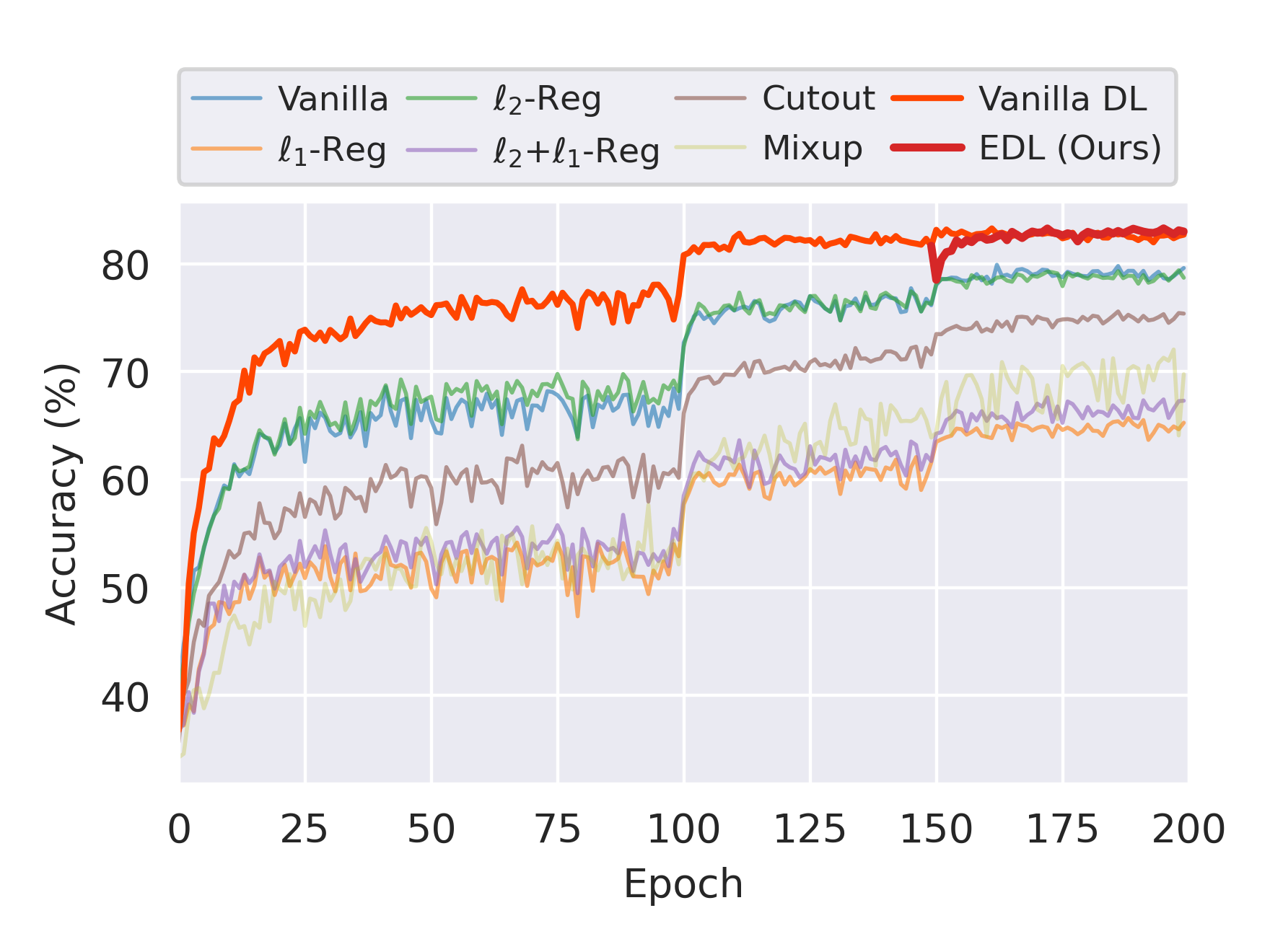} 
        \caption{Natural (Test)} % 
    \end{subfigure}
    \hfill
    \begin{subfigure}[b]{0.48\textwidth}
        \centering
    \includegraphics[width=\textwidth]{figures/training_curves/curve_resnet10_all_adv_acc_test.png} 
        \caption{Robust (Test)} % 
    \end{subfigure}

    \caption{Comparison of training curves of all methods.} 
    \label{fig:comparison_training_curves_all_methods}
\end{figure}

\newpage
\subsection{Ablation Studies}
\label{sec:ablation_app}

\subsubsection{Universality}
\label{sec:universality}

\textbf{Universality across various backbones, datasets and attacks.}
We conduct ablation studies on different backbones, datasets, and attacks in Table~\ref{tab:diff_backbone_cifar10}, Table~\ref{tab:diff_backbone_cifar100}, and Table~\ref{tab:diff_backbone_imagenet}. Our proposed method shows consistent effectiveness under various settings.

\begin{table}[h!]
\centering
\caption{Adversarial robsustness on CIFAR10 with different backbones.
}

\vspace{0.1in}
\begin{center}
\begin{sc}
\resizebox{0.6\textwidth}{!}{

\begin{tabular}{c|c|ccccccccc}
\toprule
\textbf{Method}
&Natural&PGD&FGSM&C\&W&AA\\
\hline

Vanilla DL + ResNwt10&81.55&45.48 & 52.53 & 45.85 & 41.60\\
Elastic DL + ResNet10  &82.69&49.54 & 64.52 & 57.37 & 46.30\\
\hline
Vanilla DL + ResNwt18&83.28&45.64&53.88&41.22&43.70&\\
Elastic DL + ResNet18  &83.57&53.22&69.35&60.8&52.90 &\\
\hline
Vanilla DL + ResNwt34&82.45&45.37 & 54.32 & 42.12 & 44.40\\
Elastic DL + ResNet34  &82.95&55.88&70.19&61.74&53.80&\\
\hline
Vanilla DL + ResNwt50&81.22&46.83&53.75&43.64&45.10&&\\
Elastic DL + ResNet50  &81.07&58.33&69.38&64.87&56.70&&\\

\bottomrule
\end{tabular}
}

\end{sc}
\end{center}

\label{tab:diff_backbone_cifar10}
\end{table}

\begin{table}[h!]
\centering
\caption{Adversarial robsustness on CIFAR100 with different backbones.
}

\vspace{0.1in}
\begin{center}
\begin{sc}
\resizebox{0.6\textwidth}{!}{

\begin{tabular}{c|c|ccccccccc}
\toprule
\textbf{Method}
&Natural&PGD&FGSM&C\&W&AA\\
\hline
Vanilla DL + ResNwt10&55.94&22.45&26.57&18.9&21.00\\
Elastic DL + ResNet10  &55.20&26.30&35.34&26.45&22.60\\
\hline
Vanilla DL + ResNwt18&57.24&22.17&26.81&17.43&21.60\\
Elastic DL + ResNet18  &57.70&27.27&37.62&28.87&26.30\\
\hline
Vanilla DL + ResNwt34&56.18&21.77&26.14&16.38&20.80\\
Elastic DL + ResNet34  &56.38&32.67&43.39&39.34&29.20\\
\hline
Vanilla DL + ResNwt50&54.01&22.39&26.4&18.4&20.90\\
Elastic DL + ResNet50  &54.64&30.29&41.48&35.24&28.10\\

\bottomrule
\end{tabular}
}

\end{sc}
\end{center}

\label{tab:diff_backbone_cifar100}
\end{table}

 \begin{table}[h!]
\centering
\caption{Adversarial robsustness on Tiny-Imagenet with different backbones.
}

\vspace{0.1in}
\begin{center}
\begin{sc}
\resizebox{0.6\textwidth}{!}{

\begin{tabular}{c|c|ccccccccc}
\toprule
\textbf{Method}
&Natural&PGD&FGSM&C\&W&AA\\
\hline

Vanilla DL + ResNwt10&49.6 & 27.17 & 32.46 & 37.91 & 20.20\\
Elastic DL + ResNet10  &50.12 & 32.93 & 39.64 & 40.10 & 24.90\\
\hline
Vanilla DL + ResNwt18&50.22 & 31.45 & 36.46 & 39.02 & 30.90\\
Elastic DL + ResNet18  &50.52 & 37.6 & 43.1 & 46.64 & 36.30\\
\hline
Vanilla DL + ResNwt34&50.03 & 33.54 & 37.24 & 37.19 & 29.30\\
Elastic DL + ResNet34 &50.40 & 34.8 & 41.75 & 44.72 & 34.60\\
\hline
Vanilla DL + ResNwt50&50.35 & 34.42 & 37.63 & 38.86 & 31.20\\
Elastic DL + ResNet50 &50.39 & 37.38 & 42.06 & 41.09 & 35.40\\

\bottomrule
\end{tabular}
a}

\end{sc}
\end{center}

\label{tab:diff_backbone_imagenet}
\end{table}

% \begin{table}[h!]
% \centering
% \caption{Adversarial robsustness on CIFAR10 with different adversarial training.
% }

% \vspace{0.1in}
% \begin{center}
% \begin{sc}
% \resizebox{0.5\textwidth}{!}{

% \begin{tabular}{l|c|ccccccccc}
% \toprule
% \textbf{Method}
% &Natural&PGD&FGSM&C\&W&AA\\
% \hline

% PGD-AT & 80.90 & 44.35 & 58.41 & 46.72 & 42.14 \\
% +DL&83.28&45.64&53.88&41.22&43.70\\
% +NADL (Ours)  &83.57&53.22&69.35&60.80&52.90 \\
% \hline
% TRADES-2.0&82.80&48.32&51.67&40.65&36.40\\
% +DL&79.05 &40.64&47.12&41.49&34.90\\
% +NADL (Ours) &79.85&49.32&58.68&49.47&47.20\\
% \hline
% TRADES-0.2&85.74&32.63&44.26&26.70&19.00\\
% +DL&82.55 & 25.37&44.48&30.3&15.30\\
% +NADL (Ours) &84.75&33.61&57.86&40.68&28.10\\
% \hline
% HAT&85.95&56.29&61.17&49.52&53.16\\
% +DL&86.42&57.79&62.67&51.61&54.30\\
% +NADL (Ours) &86.84&\textbf{62.48}&\textbf{71.46}&\textbf{59.90}&\textbf{59.07}\\

% \bottomrule
% \end{tabular}
% }

% \end{sc}
% \end{center}

% \label{tab:ablation_diff_adv_train_cifar10}
% \end{table}

\newpage
\subsubsection{Orthogonality to adversarial training.} 
\label{sec:orthogonality}
Our proposed Elastic DL framework incorporates structural priors into neural networks, complementing existing adversarial training techniques. As shown in Table~\ref{tab:cifar10-main} and Figure~\ref{fig:diff_adv_train}, Elastic DL can be integrated with various adversarial training methods (PGD-AT, TRADES-2.0/0.2, HAT) to consistently enhance performance.

    % % Subfigure (a)
    \begin{figure}[h!] % Set the width of the subfigure
    \centering
    \includegraphics[width=0.45\textwidth]{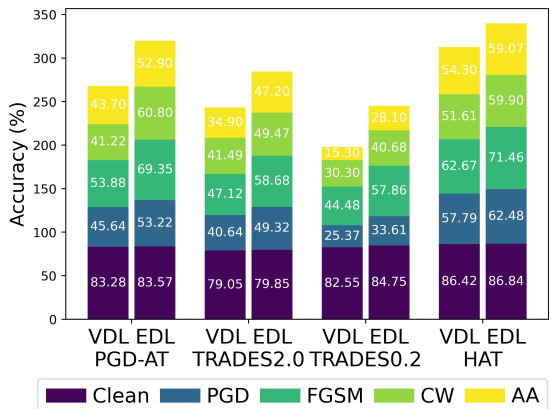} % Replace with your image path
    \caption{Different adversarial training. Our Elastic DL is orthogonal to existing adversarial training methods and can be combined with them to further improve the performance. }
    \label{fig:diff_adv_train}
    \end{figure}

\subsubsection{Different Budget Measurement}
\label{sec:diff_measurement}
In addition to $\ell_\infty$-norm attack (PGD-$\ell_\infty$), we also validate the consistent effectiveness of our Elastic DL with $\ell_2$-norm (PGD-$\ell_2$) and $\ell_1$-norm (SparseFool) attacks in the Figure~\ref{fig:diff_measure} and  Table~\ref{tab:diff_budget_norm}.

% Subfigure (b)
\begin{figure}[h!]
\centering
\includegraphics[width=0.7\textwidth]{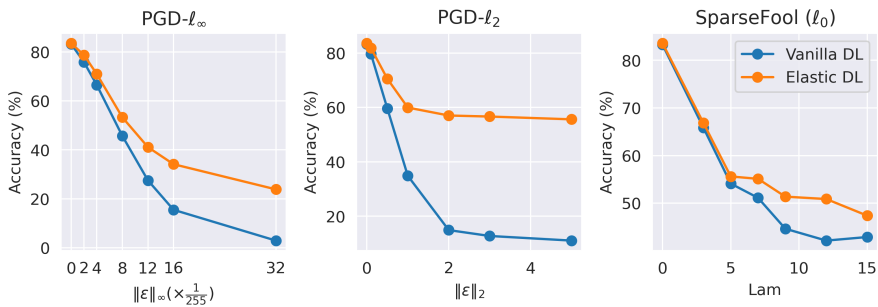} % Replace with your image path
\caption{Different attack measurements. Our Elastic DL consistently outperforms Vanilla DL across attacks (PGD-$\ell_\infty$, PGD-$\ell_2$, SparseFool) evaluated under various metrics ($\ell_\infty,\ell_2,\ell_0$ norms).}
\label{fig:diff_measure}
\end{figure}

 \begin{table}[h!]
\centering
\caption{Adversarial robustness on CIFAR10 with different budget measurements.
}

\vspace{0.1in}
\begin{center}
\begin{sc}
\resizebox{0.7\textwidth}{!}{

\begin{tabular}{c|cccccccccc}
\toprule
PGD
$\|\cdot\|_\infty$ $\backslash$ Budget&0&2/255&4/255&8/255&12/255&16/255&32/255\\
\hline
Vanilla DL + ResNwt18&83.29&75.86 & 66.52 & 45.66 & 27.5 & 15.48 & 2.89\\
PGD-AT+ EDL - ResNet18  &83.57&78.76 & 71.01 & 53.29 & 41.1 & 34.13 & 23.84\\
\hline
PGDL2
$\|\cdot\|_2^2$ $\backslash$ Budget&0&0.1&0.5&1.0&2.0&3.0&5.0&\\
\hline
Vanilla DL + ResNwt18&83.29&79.67 & 59.64 & 34.86 & 14.91 & 12.75 & 11.05\\
PGD-AT+ EDL - ResNet18 &83.57 &81.83 & 70.55 & 59.95 & 57.03 & 56.65 & 55.62\\
% \hline
% EADL1
% $\|\cdot\|_1$&\\
% Vanilla DL + ResNwt18&\\
% PGD-AT+ EDL - ResNet18 &\\
\hline
SparseFool
$\|\cdot\|_0$ $\backslash$ lam &0&3&5&7&9&12&15&20\\\hline
Vanilla DL + ResNwt18&83.29&65.83 & 54.11 & 51.12 & 44.63 & 42.14 & 42.89 & 41.39\\
PGD-AT+ EDL - ResNet18&83.57 &66.83 & 55.61 & 55.11 & 51.37&50.87  & 47.38 & 47.13\\

\bottomrule
\end{tabular}
}

\end{sc}
\end{center}

\label{tab:diff_budget_norm}
\end{table}

\newpage
\subsubsection{Hidden Embedding Visualization}
\label{sec:hidden_embedding_all}
We conduct visualization analyses on the hidden embedding to obtain better insight into the effectiveness of our proposed Elastic DL. We begin by quantifying the relative difference between clean embeddings ($\bx$ or $\bz_i$) and attacked embeddings ($\bx'$ or $\bz'_i$) across all layers. As shown 
in Figure~\ref{fig:hidden_embed_visualization_all_part1} and Figure~\ref{fig:hidden_embed_visualization_all_part2}, the presence of adversarial perturbations can disrupt the hidden embedding patterns, leading to incorrect predictions in the case of Vanilla DL. 
In contrast, our Elastic DL appears to lessen the effects of such perturbations and maintain predicting groundtruth label. 

Here are instances of CAT, SHIP, FROG, AUTOMOBILE, and TRUCK:
\begin{figure}[h!]
    \centering
    \includegraphics[width=0.99\linewidth]{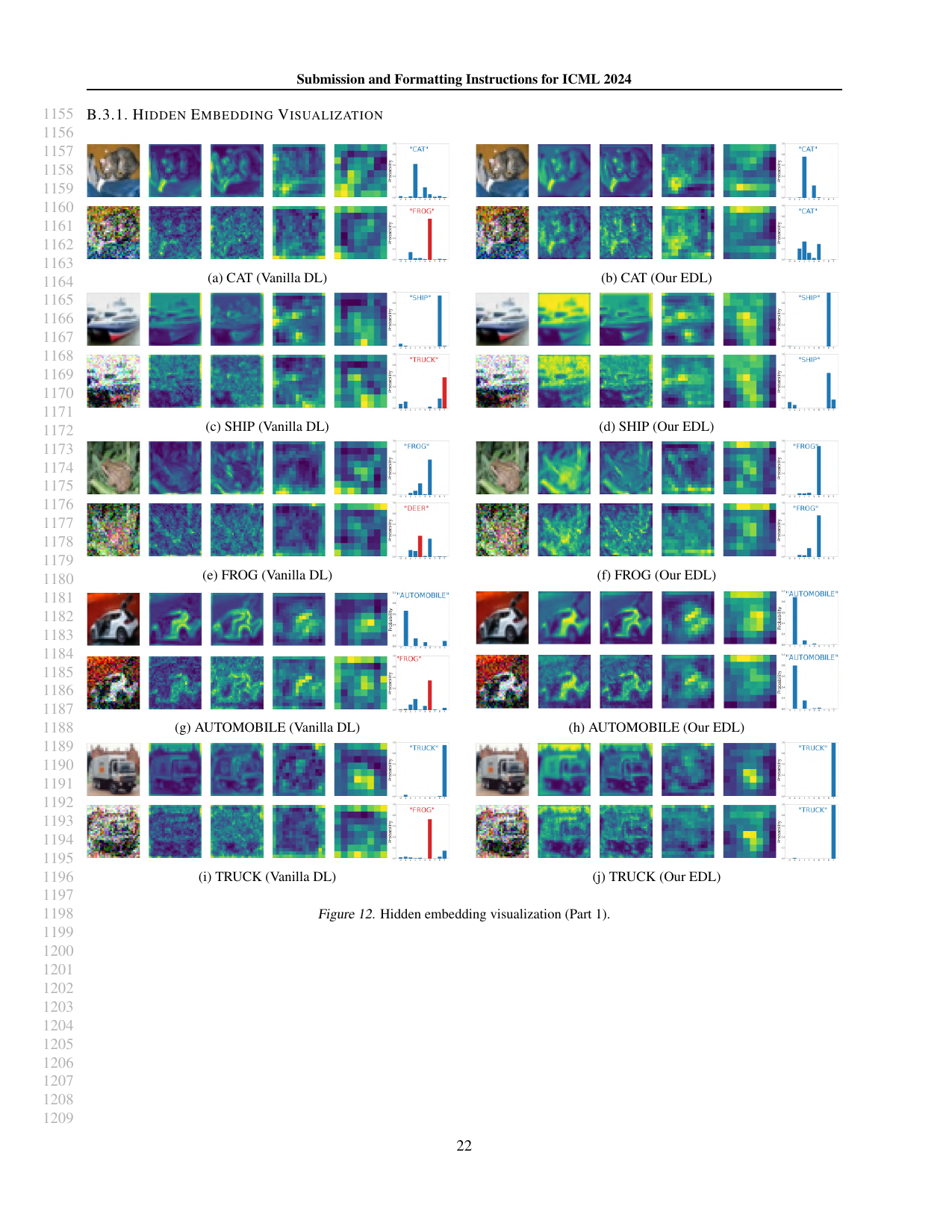}
    \caption{Hidden embedding visualization. (Part 1)}
    \label{fig:hidden_embed_visualization_all_part1}
\end{figure}

\newpage
Here are instances of BIRD, HORSE, AIRPLANE, DEER and DOG:

\begin{figure}[h!]
    \centering
    \includegraphics[width=0.99\linewidth]{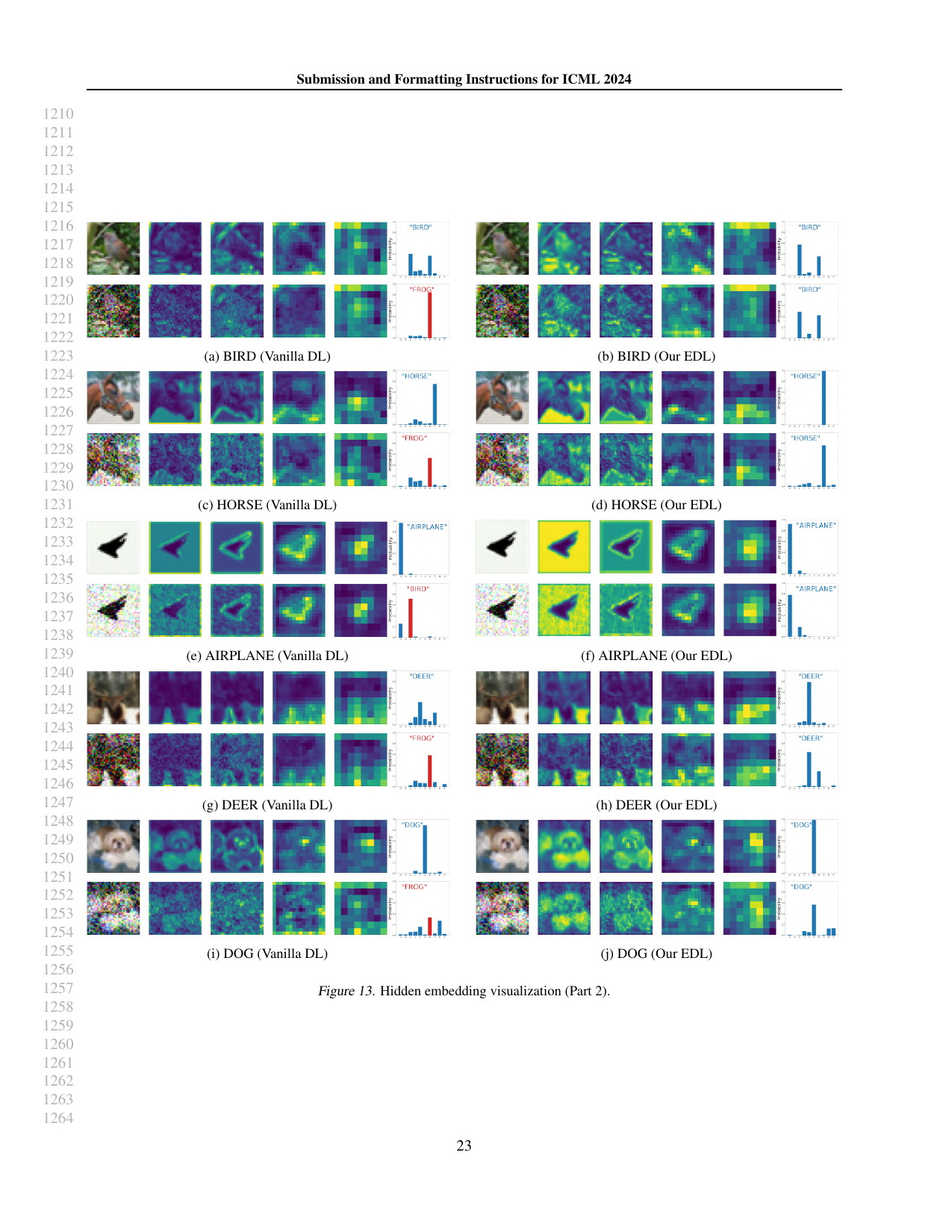}
    \caption{Hidden embedding visualization. (Part 2)}
    \label{fig:hidden_embed_visualization_all_part2}
\end{figure}

\newpage

\subsubsection{Reconstruction Process}
\label{sec:reconstruction_process}

\textbf{Image \& noise reconstruction.} In conventional feedforward neural networks, adding a perturbation $\bepsilon$ to the input can lead the model to make incorrect predictions. However, as illustrated in Figure~\ref{fig:reconstruction_process}, our approach aims to reconstruct both the clean image $\bx$ and the perturbation $\bepsilon$ through a dictionary learning process. To evaluate the effectiveness of our method, we quantify the reconstruction error between the recovered noise $\hat{\bepsilon}$ in our Elastic DL framework and noise generated by various methods (random noise, transfer noise from ResNet/Vanilla DL, and adaptive noise from Elastic DL). As shown in Table~\ref{tab:reconstruction}, the recovered noise from our approach exhibits the smallest difference compared to the adaptive noise in Elastic DL. This result demonstrates that our proposed framework more effectively reconstructs the noise and mitigates its impact on predictions.

\begin{table}[h!]
\centering
\caption{ Reconstruction Error. We quantify the reconstruction error between the recovered noise $\hat{\bepsilon}$ and various input noises, including random noise ($\bepsilon_{\text{random}}$), transfer noise from ResNet ($\bepsilon_{\text{resnet}}$) and Vanilla DL ($\bepsilon_{\text{vanilla}}$), as well as adaptive noise from our Elastic DL ($\bepsilon_{\text{elastic}}$). Our Elastic DL demonstrates the smallest reconstruction error, indicating that our approach can adaptively recover and neutralize the input perturbation, thereby mitigating its impact.
}
\label{tab:reconstruction}

\vspace{0.1in}
\begin{center}
\begin{sc}
\resizebox{0.5\textwidth}{!}{
\rowcolors{2}{gray!20}{white}
\begin{tabular}{c|c|c|ccccccc}
\hline
\rowcolor[HTML]{C0C0C0}
\textbf{Error} &$\|\cdot\|_1$&$\|\cdot\|_2$&$\|\cdot\|_\infty$ \\\hline
$\bepsilon_{\text{random}}-\hat{\bepsilon}$&1294.75 ± 406.78 & 26.09 ± 7.04 &  0.901 ± 0.10\\\hline
$\bepsilon_{\text{resnet}}-\hat{\bepsilon}$&131.51 ± 10.53 &  2.93 ± 0.22 &  0.163 ± 0.01\\\hline
$\bepsilon_{\text{vanilla}}-\hat{\bepsilon}$& 129.07 ± 13.22 &  2.85 ± 0.26 &  0.157 ± 0.01\\\hline
$\bepsilon_{\text{elastic}}-\hat{\bepsilon}$ &\textbf{122.62 ± 9.92} &  \textbf{2.69 ± 0.22} &  \textbf{0.149 ± 0.01}\\\hline

\end{tabular}

}

\end{sc}
\end{center}

\end{table}

\begin{figure}[h!]
    \centering
    \includegraphics[width=1.0\linewidth]{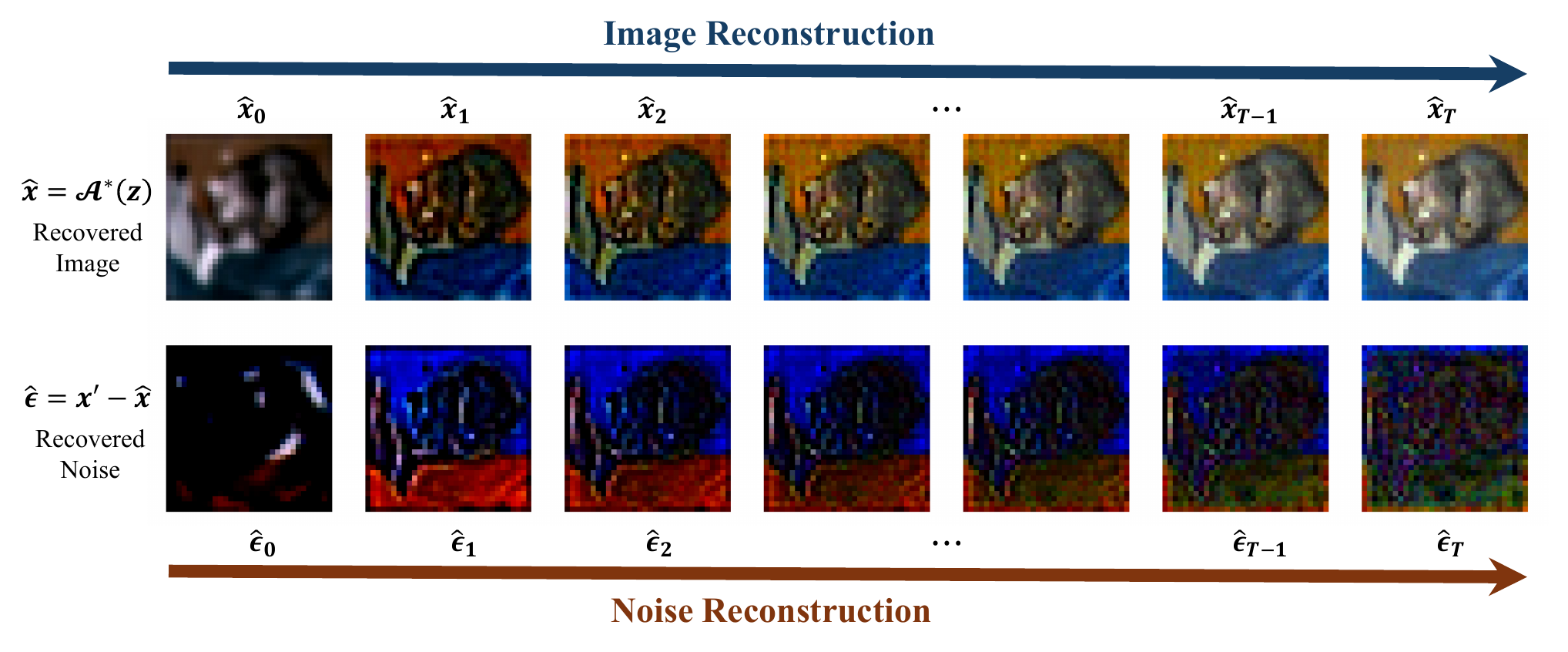}
    \caption{Reconstruction process. }
    \label{fig:reconstruction_process}
\end{figure}

% \newpage
% Here are instances of reconstruction process in ImageNet:
% \begin{figure}[h!]
%     \centering
%     \includegraphics[width=0.9\linewidth]{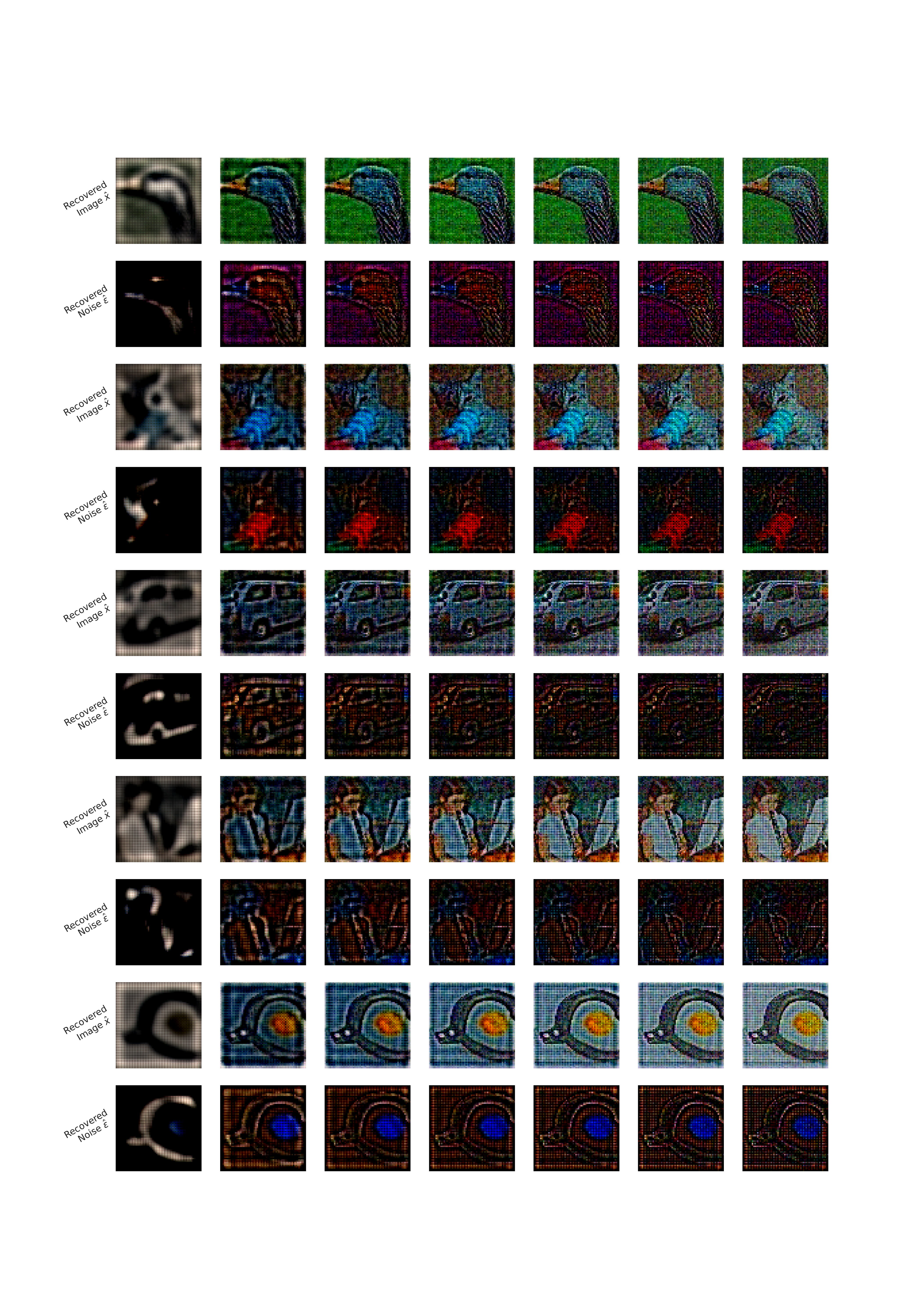}
%     \caption{Reconstruction process (ImageNet)}
%     \label{fig:reconstruction_process_part1}
% \end{figure}

% \newpage
% Here are instances of reconstruction process in CIFAR10 (Part1):
% \begin{figure}[h!]
%     \centering
%     \includegraphics[width=0.9\linewidth]{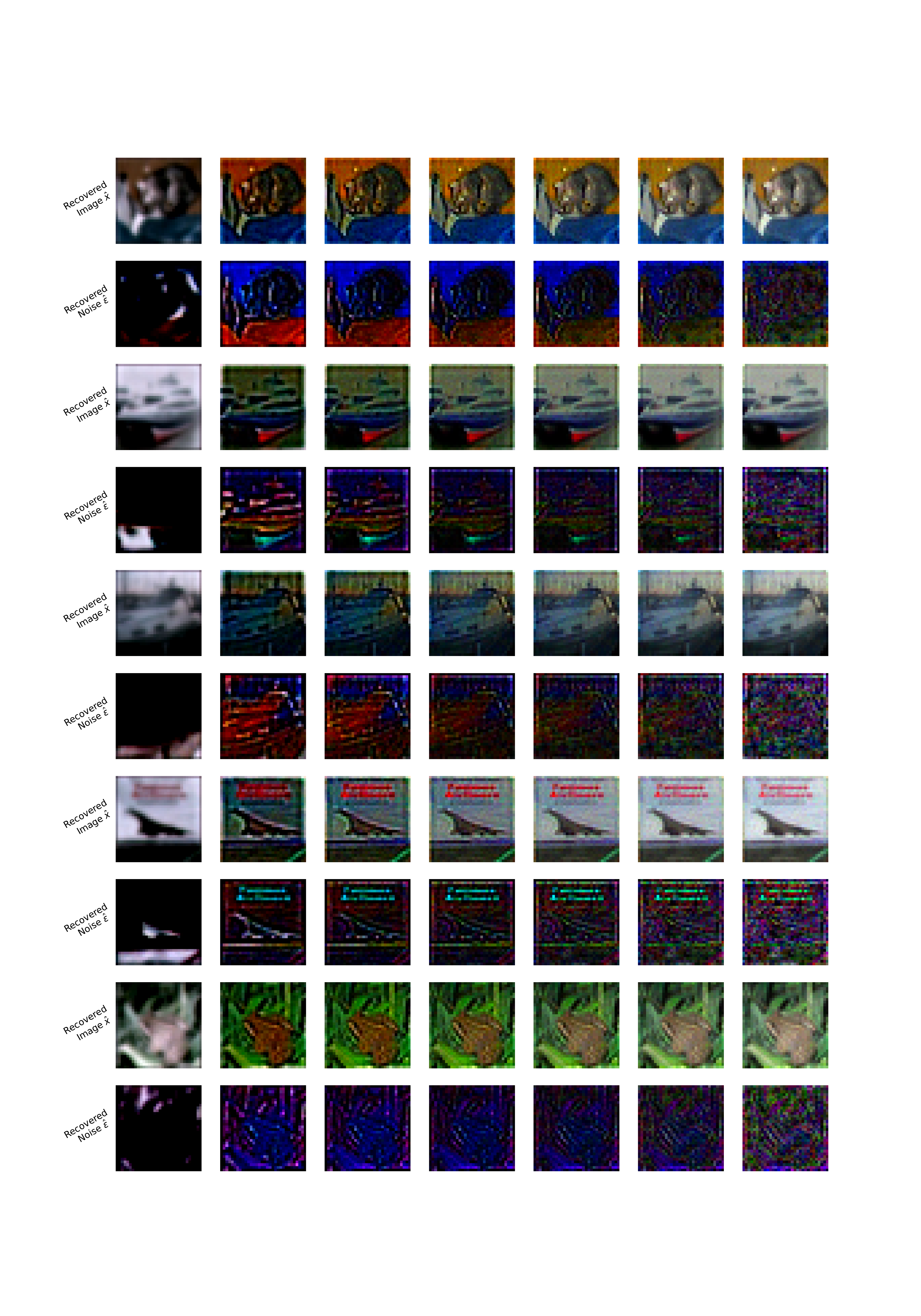}
%     \caption{Reconstruction process (CIFAR10, Part 1)}
%     \label{fig:reconstruction_process_part1}
% \end{figure}

% \newpage
% Here are instances of reconstruction process in CIFAR10 (Part2):

% \begin{figure}[h!]
%     \centering
%     \includegraphics[width=0.9\linewidth]{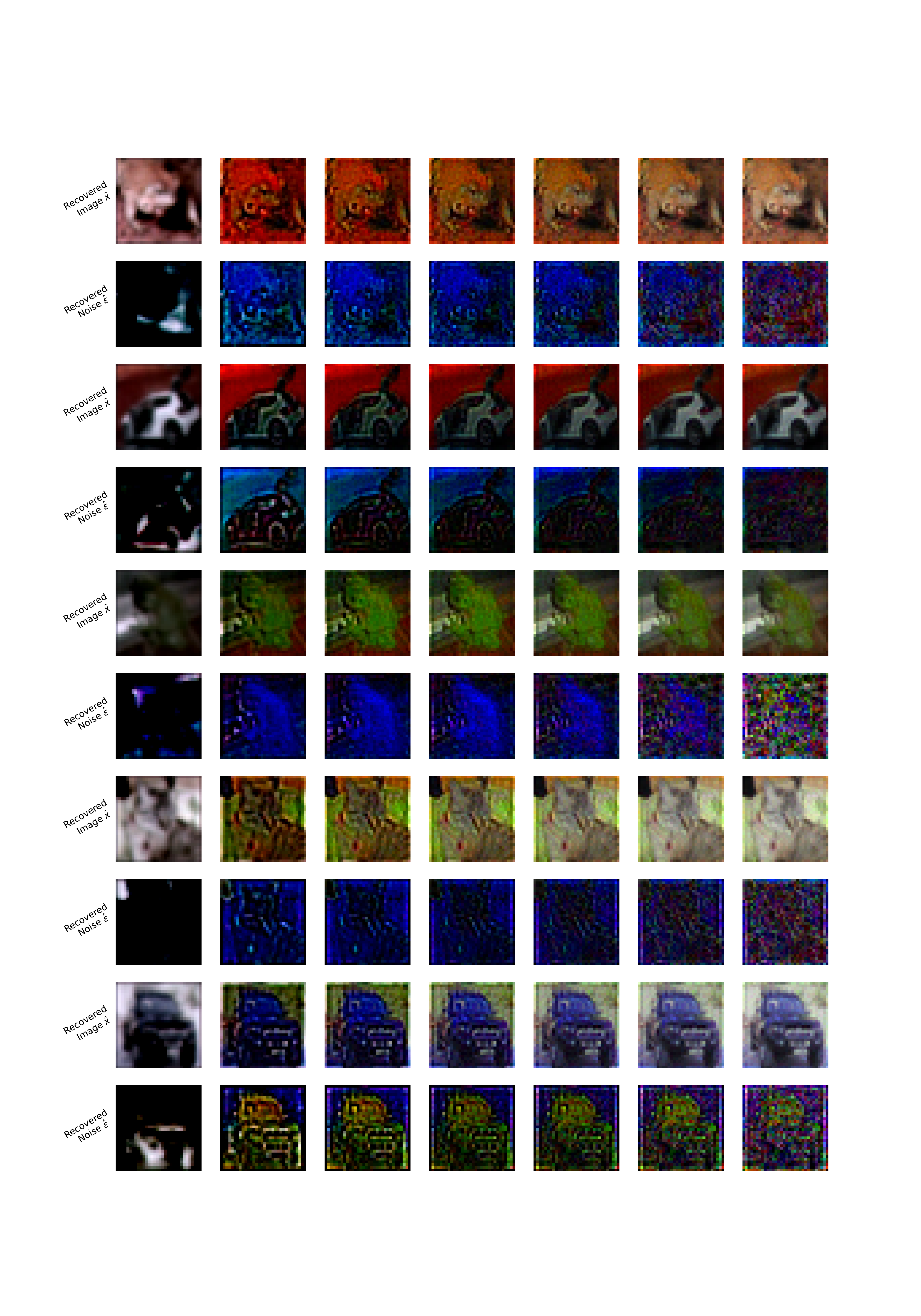}
%     \caption{Reconstruction process (CIFAR10, Part 2)}
%     \label{fig:reconstruction_process_part2}
% \end{figure}

\end{document}